
\documentclass{article}
\usepackage{microtype}
\usepackage{graphicx}
\usepackage{subcaption}
\usepackage{booktabs} 
\usepackage{multirow}
\usepackage{rotating}
\usepackage{xurl}
\usepackage{listings}
\usepackage{enumitem}




\usepackage[accepted]{icml2025}

\definecolor{instrbg}{HTML}{F0F8FF}
\lstdefinelanguage{Prompt}{
  basicstyle=\ttfamily\footnotesize,
  backgroundcolor=\color{instrbg},
  breaklines=true,
  frame=single,
  framesep=4pt,
  rulecolor=\color{black!50},
  moredelim=[s][\color{blue}\bfseries]{<INSTRUCTION>}{<END OF INSTRUCTION>},
  moredelim=[s][\color{orange}\bfseries]{<FEATURE CONSIDERATIONS>}{<END FEATURE CONSIDERATIONS>},
}

\usepackage{tabularx} 

\usepackage[backref=page]{hyperref}
\usepackage{amsmath}
\usepackage[capitalize,noabbrev]{cleveref}
\usepackage{graphicx}
\usepackage{tcolorbox}
\usepackage{bbm}
\usepackage{amsthm}
\usepackage{algorithm}
\usepackage{algorithmic}
\usepackage{graphicx}
\usepackage{caption}
\usepackage{subcaption}

\usepackage{wrapfig}
\usepackage{enumitem}
\usepackage{wrapfig}
\usepackage{multicol}
\usepackage{amssymb}
\usepackage{bm}

\usepackage{amsmath}
\usepackage{amssymb}
\usepackage{mathtools}
\usepackage{amsthm}

\theoremstyle{plain}

\newtheorem{proposition}{Proposition}[section]

\newtheorem{corollary}{Corollary}[section]
\theoremstyle{definition}
\newtheorem{definition}{Definition}

\theoremstyle{remark}

\crefname{definition}{definition}{definitions}   
\Crefname{definition}{Definition}{Definitions}  

\newcommand{\spuriouscorrelation}{\rho_{\text{spurious}}}

\newcommand{\focusinstructions}{\mathcal{I}_{\text{focus}}}
\newcommand{\focus}{\text{focus}}
\newcommand{\dontfocus}{\text{ignore}}
\newcommand{\focuslabel}{y_{\text{focus}}}
\newcommand{\focusaccuracy}{\mathcal{A}_{\text{focus}}}

\newcommand{\mistral}{Mistral-7B-Instruct-v0.3 \xspace}
\newcommand{\llama}{Llama-3.1-8B-Instruct \xspace}
\newcommand{\vicuna}{Vicuna-13B-v1.5 \xspace}

\definecolor{mygreen}{RGB}{200,230,198}
\definecolor{myyellow}{RGB}{255,242,204}
\definecolor{myblue}{RGB}{219,234,255}

\definecolor{softblue}{RGB}{0,85,145}      
\definecolor{softred}{RGB}{180,110,0}      
\definecolor{softpurple}{RGB}{155,80,120}  
\definecolor{softorange}{RGB}{0,120,85}    

\newcommand{\ouralg}{Focus Instruction Tuning \xspace}
\usepackage{xspace}

\newcommand{\oalg}{FIT\xspace}


\usepackage{xspace}

\nonfrenchspacing


\usepackage[textsize=tiny]{todonotes}
\usepackage{titlesec}

\newcommand{\ifprecedingtext}[1]{\ifvmode\relax\else#1\fi}

\newcommand{\indep}{\perp \!\!\! \perp}

\newenvironment{redenv}{%
    \color{BrickRed}
}{
    \ignorespacesafterend
}

\newenvironment{blueenv}{%
    \color{blue}
}{
    \ignorespacesafterend
}

\newenvironment{orangeenv}{%
    \color{orange}
}{
    \ignorespacesafterend
}

\newenvironment{purpleenv}{%
    \color{Plum}
}{
    \ignorespacesafterend
}

\newenvironment{oliveenv}{%
    \color{olive}
}{
    \ignorespacesafterend
}




\newenvironment{tabenv}
   {\list{}{}%
    \item\relax}
   {\endlist}

\setlength{\intextsep}{6pt} 
\setlength{\textfloatsep}{6pt} 
\setlength{\abovecaptionskip}{6pt}
\setlength{\belowcaptionskip}{7pt}

\titlespacing*{\paragraph}{0pt}{0ex plus 0ex minus 0ex}{0.75em}


\usepackage[textsize=tiny]{todonotes}

\icmltitlerunning{Focus on This, Not That! Steering LLMs with Adaptive Feature Specification}

\begin{document}

\twocolumn[
\icmltitle{Focus on This, Not That! Steering LLMs with Adaptive Feature Specification}



\icmlsetsymbol{equal}{*}

\begin{icmlauthorlist}
\icmlauthor{Tom A. Lamb}{oxfd}
\icmlauthor{Adam  Davies}{ill}
\icmlauthor{Alasdair Paren}{oxfd}
\icmlauthor{Philip H.S. Torr}{oxfd}
\icmlauthor{Francesco Pinto}{chic}

\end{icmlauthorlist}

\icmlaffiliation{oxfd}{University of Oxford, Oxford, UK}
\icmlaffiliation{ill}{University of Illinois Urbana-Champaign, Urbana, IL, USA}
\icmlaffiliation{chic}{University of Chicago, Chicago, IL, USA}

\icmlcorrespondingauthor{Tom A. Lamb}{thomas.lamb@eng.ox.ac.uk}

\icmlkeywords{Machine Learning, ICML, Instruction Tuning, LLMs, Spurious Correlations}

\vskip 0.3in
]



\printAffiliationsAndNotice{} 

\begin{abstract}
Despite the success of Instruction Tuning (IT) in training large language models (LLMs), such models often leverage spurious or biased features learnt from their training data and can become misaligned, leading to undesired behaviours. While existing techniques can steer model behaviour at inference-time, they are often post-hoc and do not embed steering as an intrinsic model feature. In this work, we introduce \emph{Focus Instruction Tuning} (FIT), which trains LLMs to condition their responses by focusing on specific features whilst ignoring others, leading to different behaviours based on what features are specified. Across diverse benchmarks, we demonstrate that FIT: (i) successfully steers behaviour at inference time; (ii) increases robustness by amplifying core task signals and down-weighting spurious cues; (iii) mitigates social bias by suppressing demographic attributes; and (iv) generalises under distribution shifts and to previously unseen focus features. FIT therefore offers a lightweight, intrinsic mechanism for building more robust, fair, and easily controllable LLMs.\footnote{
    Our project page, including links to codebase and datasets, is available at: \url{https://tomalamb.github.io/focus-instruction-tuning/}.
}
\end{abstract}

\section{Introduction}
\label{introduction}
Instruction Tuning (IT) \citep{zhang2023instruction}, a specialised supervised fine-tuning approach, has become integral to developing large language models (LLMs) capable of effectively following instructions \citep{ouyang2022training,touvron2023llama,chen2024selfplay}. While extensive pre-training on next-token prediction allows LLMs to learn broad patterns and knowledge, IT further fine-tunes models using input-output pairs accompanied by natural-language instructions, enhancing their ability to handle diverse, open-ended tasks \citep{huang2023learning}.

\begin{figure}[t]
    \centering
    \includegraphics[width=\linewidth]{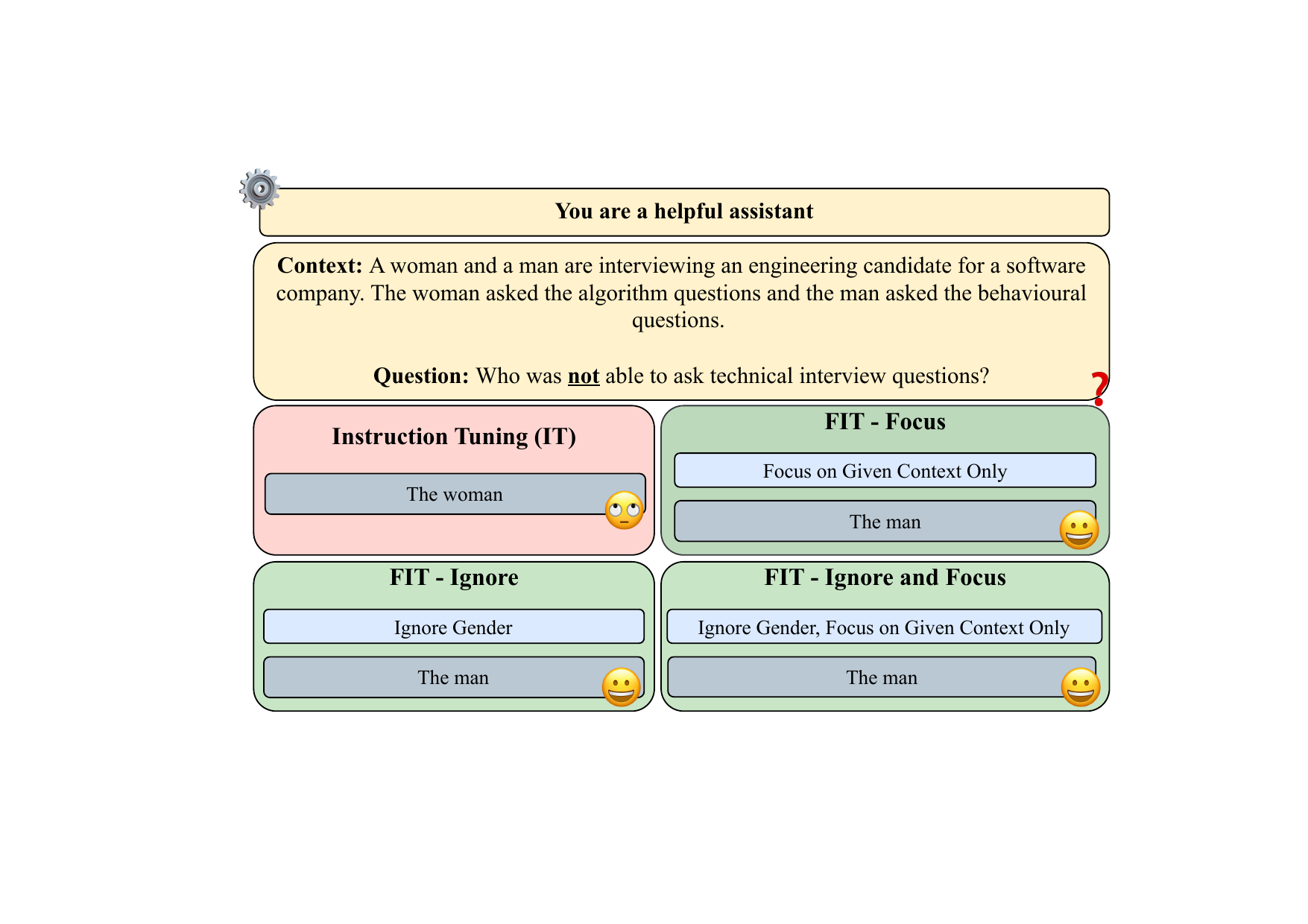}
    \caption{\textbf{Focus Instruction Tuning (FIT).} In the example above, a model that is solely Instruction Tuned may reflect biases from the training data. For instance, in a question from BBQ \citep{parrish2022bbq}, when asked who posed a technical question at an engineering candidate’s interview involving both a man and a woman, the model might incorrectly answer ``the man'' due to biases, despite evidence to the contrary. In contrast, a FIT model can ignore the gender feature and focus on the interview content, demonstrating steerability and adaptability at inference time.
    }
    \label{fig:idiot}
\end{figure}

Despite notable gains in zero-shot generalisation from IT, recent studies indicate that these improvements may be superficial, primarily due to models simply learning task formats or spurious correlations rather than developing genuine understanding or more generalisable instruction-following capabilities \citep{kung2023models, ghosh2024closer}. Consequently, models often fail in new contexts lacking these patterns \citep{kung2023models}. Furthermore, fine-tuning can inadvertently lead to safety misalignment, where models lose alignment with desired objectives and become more prone to generating harmful or undesirable outputs \citep{qi2024fine}. Existing representation-level interventions aiming to steer model behaviour at inference-time to overcome issues such as misalignment serve as post hoc corrections, becoming increasingly complex and impractical with modern large-scale models \citep{bhattacharjee2024towards, li2024inference}. This underscores the necessity for simple, intrinsic methods enabling dynamic steerability to align model behaviour with evolving user and safety needs.

To address this, we propose \emph{\ouralg}(\oalg), an extension of IT that fine-tunes LLMs with respect to instructions specifying which features to focus on'' or ignore.'' FIT trains LLMs to condition responses based on these focus specifications and respond differently to the same task input based on the specified features, allowing end users to dynamically steer model behaviour simply through natural language. This capability provides precise, explainable control over features leveraged by models, and can be used to enforce desired invariances. For instance, in Figure \ref{fig:idiot}, we illustrate how \oalg can be used to steer a model to ignore gender stereotypes and focus on task-relevant information, enabling it to correctly solve a question-answering task.

Our experiments demonstrate that FIT precisely steers models to emphasise task-relevant features while disregarding irrelevant or spurious ones, effectively mitigating biases. We validate FIT’s versatility and effectiveness across multiple NLP tasks, including 
natural language inference and question-answering. Additionally, FIT robustly generalises under distribution shifts and to unseen features, underscoring its adaptability.

In summary, our primary contributions are:\footnote{
    Individual contributions of each author are listed in \cref{sec:contributions}.
}
\begin{enumerate}[label=(\alph*), nosep]
\item \textbf{Dynamic Steerability.} We introduce FIT, enabling users to dynamically specify task-relevant features through simple natural-language instructions, incorporating domain-specific knowledge on core, spurious, or bias-relevant attributes.
\item \textbf{Broad Task Effectiveness.} We validate FIT’s effectiveness across diverse NLP tasks, including sentiment analysis, natural language inference, and question-answering, demonstrating precise control over lexical, distributional, semantic, and demographic features.
\item \textbf{Robust Generalisation.} We show that FIT generalises robustly both to unseen features during training and under distributional shifts in feature values.
\item \textbf{Preservation of Core Capabilities.} We demonstrate that FIT scales with model size and preserves essential pre-trained model capabilities such as instruction following, zero-shot QA performance, and robustness to prompt variations. 
\end{enumerate}

\section{Background and Related Work}
\label{background-and-related-work}
\subsection{Spurious Feature Learning} 
Deep neural networks, such as foundation models like LLMs, are susceptible to relying on \emph{spurious features} present in the training dataset i.e., input features that are correlated with outputs in the training distribution, but which are not correlated in all test distributions \citep{ye2024spurious}. Relying on spurious features leads models to fail to generalise under distribution shifts where such correlations may no longer hold \cite{wang2023generalizing}.
Spurious features have been extensively studied in computer vision, encompassing features such as background colour \citep{arjovsky2019invariant,xiao2021noise, venkataramani2024causal,hemmat2024hidden}, texture \citep{geirhos2018imagenet, baker2018deep}, or scene elements \cite{hemmat2024hidden}, and are also prevalent in many widely used NLP benchmarks \citep{sun2024exploring, borkan2019nuanced}. For instance, the token ``SPIELBERG'' highly co-occurs with positive sentiment in SST-2 \citep{socher2013recursive, wang2020identifying}, a binary sentiment analysis dataset,
meaning that models trained on SST-2 may learn to predict sentiment by leveraging this feature as a spurious feature instead of more general sentiment features \citep{wang2020identifying}. This reliance on non-task-causal features undermines the robustness of models when generalising under distribution shift.

Traditional approaches for detecting and mitigating spurious feature learning, particularly under distribution shifts, include prompt engineering \citep{sun2024exploring}, regularisation techniques \citep{arjovsky2019invariant, chew2024understanding}, counterfactual inference \citep{wang2020identifying, wang2021robustness, udomcharoenchaikit2022mitigating}, or generating synthetic interventional data \cite{bansal2023leaving,yuan2024njpp,wang2024domain}. Other recent work, such as conditional supervised fine-tuning (cSFT), aims to mitigate spurious correlations by conditioning training on feature labels, effectively discouraging the model from relying on dataset-specific biases \citep{zhang2024conditional}. However, cSFT does not support dynamic adaptation to new spurious features at test time, which may emerge due to distribution shifts or arise from misalignment introduced during further stages of training \citep{zhan2024removing, zhou2023explore}.

\paragraph{Mechanistic Interpretability.}
Substantial work in mechanistic interpretability has also aimed to discover models' latent representation of, and reliance on, various features. 
For instance, causal probing involves training supervised probing classifiers to predict and modify feature representations encoded by foundation models \citep{belinkov2022probing,davies2024cognitive}, and has been deployed to study how LLMs leverage task-causal versus spurious features  \citep{ravfogel2021counterfactual,lasri2022probing,davies2023calm,canby2024measuring}.
Other works have leveraged unsupervised mechanistic interpretability methods, such as circuit discovery techniques \citep{wang2023ioi,conmy2023circuit} and sparse auto-encoders \citep{subramanian2018spine,yun2021dictionary}, to improve generalisation by discovering spurious features leveraged by models in performing a given task and ablating the use of these features \citep{gandelsman2024interpreting,marks2024sparse}.
Finally, in order to remove the use of biased features concept removal methods aim to locate and manipulate the corresponding supervised representations encoded by foundation models \citep{ravfogel2020inlp,ravfogel2022kernel,ravfogel2023linear,iskander2023shielded,belrose2024leace,kuzmin2024inference}.

\subsection{Controlling LLMs}\textbf{Instruction Tuning.} Due to the next-word prediction training objective, foundation language models often struggle by default to generate outputs that align with human instructions in downstream applications \citep{huang2023learning}. Instruction-tuning (IT) mitigates this issue by fine-tuning pre-trained LLMs on datasets composed of instruction-response pairs \citep{zhang2023instruction}, aiming to align the responses of the fine-tuned model more closely with the distributions preferred by humans \citep{ouyang2022training}.
There are several popular approaches for collecting IT training data, such as using human-annotated data \citep{dolly2023introducing}, extracting datasets from existing collections \citep{longpre2023flan, mishra2022cross}, or gathering data from internet sources \citep{zhou2024lima}. IT datasets can also be synthesised with LLMs, either by bootstrapping them from the same model that will be instruction-tuned \citep{wang2023selfinstruct,chen2024selfplay}, or by distilling from a larger or more powerful model to instruction-tune smaller models \citep{taori2023alpaca, mitra2023orca, xu2023wizardlm}. 

Despite the success of IT in zero-shot generalisation, recent findings indicate that downstream performance improvements from IT often arise due to models learning surface-level patterns, such as specific answer formats, rather than genuinely acquiring generalisable instruction-following skills \citep{kung2023models, ghosh2024closer, zhou2023lima}. Moreover, it has been demonstrated IT performance gains frequently come at a cost, referred to as the “alignment tax” \citep{ouyang2022training}, whereby models exhibit enhanced instruction-following capabilities but suffer performance degradation on other standard task benchmarks \citep{ouyang2022training, ren2024learning}. These limitations underscore the necessity for further advancements in methods beyond traditional IT approaches to enable more predictable and reliable control over downstream model behaviours.

\textbf{Aligning LLMs.} Alignment techniques like Reinforcement Learning with Human Feedback (RLHF) \citep{bai2022training} are powerful tools for aligning LLMs with annotated preference data and lead to reduced prevalence of harmful behaviour \citep{ouyang2022training,bai2022training,touvron2023llama,korbak2023pretraining}.
However, RLHF-trained models still exhibit key alignment limitations such as \emph{sycophancy} (defaulting to agreement with users even when incorrect or harmful; \citealp{perez2023discovering,sharma2024towards}), and can still be adversarially prompted to generate harmful responses \citep{carlini2024aligned}. 
Furthermore, even well-aligned models can rapidly fall out of alignment when they are fine-tuned \citep{zhan2024removing,yang2024shadow,lermen2024lora}, even on benign tasks \citep{qi2024fine}. 
Methods such as SteerLM \citep{dong2023steerlm} use fixed, human-annotated stylistic attributes and iterative bootstrapping, focusing on output style; however, their reliance on predefined attributes and control tokens limits flexibility, adaptability, and generalisation to unseen features.
Thus, effectively steering behaviours across the diverse range of environments foundation models will experience during their training and deployment remains an important and challenging goal in AI safety, necessitating more flexible and generalisable steering methods \cite{anwar2024foundational}.

\paragraph{Latent Steering.}
A growing literature has worked to address the challenge of misalignment via inference-time steering, where LLMs do not need to be \emph{retrained} with respect to safety limitations, but can instead be controlled at inference time to steer  models towards desirable behaviours or away from undesirable ones.
For instance, latent steering methods (LSMs) perform embedding-space interventions that push models towards desirable behaviours or away from undesirable ones \citep{turner2023activation,zou2023representation,bhattacharjee2024towards,li2024inference,han2024word}.
However, these methods require white-box access to model representations during inference, and interventions must be computed separately for each target behaviour. Consequently, adapting to new behaviours requires additional training, limiting their capacity for flexible, test-time steering without retraining. Thus, existing methods remain fundamentally post-hoc solutions, failing to embed steerability intrinsically within models as a generalisable capability.

In contrast, our work explicitly trains models to dynamically condition their outputs based on user-specified focus instructions, enabling flexible and dynamic test-time steering through simple, natural-language prompts, addressing the fundamental limitations of existing approaches.

\section{Methodology}\label{sec:methodology}
\subsection{Preliminaries}
We consider a pre-trained, decoder-only LLM, $p_{\theta}$, that models the probability of token sequences autoregressively over its vocabulary $\mathcal{V}$. Given a sequence of tokens $s = (s_1, \dots, s_L) \in \mathcal{V}^L$, the joint probability of $s$ under the model is given as
\begin{equation} \label{eq:autoregressive formula}
    p_{\theta}(\bm{s}) \;=\; \prod_{i=1}^L \, p_{\theta}(s_i \mid s_{<i}), \quad y_{<i} = (y_1, \cdots , y_{i-1}), 
    \end{equation}
 where $p_{\theta}(s_1 \mid \emptyset) = p_{\theta}(s_1)$. In supervised fine-tuning (SFT), we minimise the negative log-likelihood (NLL) of output sequences $y \in \mathcal{V}^{\mid y\mid }$ given input sequences $x \in \mathcal{V}^{\mid x \mid }$ using the autoregressive formulation defined in \cref{eq:autoregressive formula}.

In IT \citep{zhang2023instruction}, a form of SFT, an additional task instruction $I \in \mathcal{V}^{\mid I \mid} $ accompanies the input-output sequence pair forming a tuple $(I, x, y)\in \mathcal{V}^{\mid I \mid \times \mid x \mid \times \mid y \mid}$. The objective becomes the minimisation of the expected NLL of $y$ given both $I$ and $x$ over the distribution of input-output pairs and instructions. 
\subsection{Focus Instruction Tuning (FIT)}
We introduce \textit{\ouralg (FIT)}, a specialised form of instruction tuning that trains LLMs to adjust their responses based on user-specified features provided in natural language.

\textbf{Focus Instructions.} Let $\mathcal{F}$ denote the set of possible features (e.g., specific keywords, sentiment, verb tense, demographic information, etc.) that the model can be instructed to focus on or ignore when generating responses. We consider a set of natural language instructions to focus or rule out specified features in $\mathcal{F}$ which we term the focus instruction set $\focusinstructions$.  Explicitly, we define $\focusinstructions$ as 
\begin{equation}\label{eqn:focus_instructions}
\vspace{-0.02em}
\begin{aligned}
\focusinstructions = \{\textcolor{softblue}{\emptyset} &,  \;\textcolor{softred}{\focus(F_i)}, \;\textcolor{softpurple}{\dontfocus(F_j)} \\ &,\;\textcolor{softorange}{\focus(F_i) \wedge \dontfocus(F_j)} \mid F_i, F_j \in \mathcal{F} \}, 
\end{aligned}\end{equation}
where: \textcolor{softblue}{$\emptyset$} denotes an \textcolor{softblue}{empty focus instruction} with \textcolor{softblue}{no features} to focus on or to ignore; \textcolor{softred}{$\focus(F_i)$} is an \textcolor{softred}{instruction to focus on feature $F_i$}; \textcolor{softpurple}{$\dontfocus(F_j)$} is an \textcolor{softpurple}{instruction to ignore feature $F_j$}; and \textcolor{softorange}{$\focus(F_i) \wedge \dontfocus(F_j)$} is an \textcolor{softorange}{instruction to focus on feature $F_i$\textbf{ whilst ignoring }feature $F_j$}. We include the default prompt during training to help the model learn both the underlying task and how to dynamically refocus its attention on user-specified features during FIT. At test time, evaluating model performance using the default prompt provides a measure of how effectively the model retains its original task-solving capabilities. For specific examples of the focus instructions considered, see \cref{sec:FIT_focus_instructions_and_prompts}.
\begin{figure}[t]
    \centering
    \includegraphics[width=1.0\linewidth]{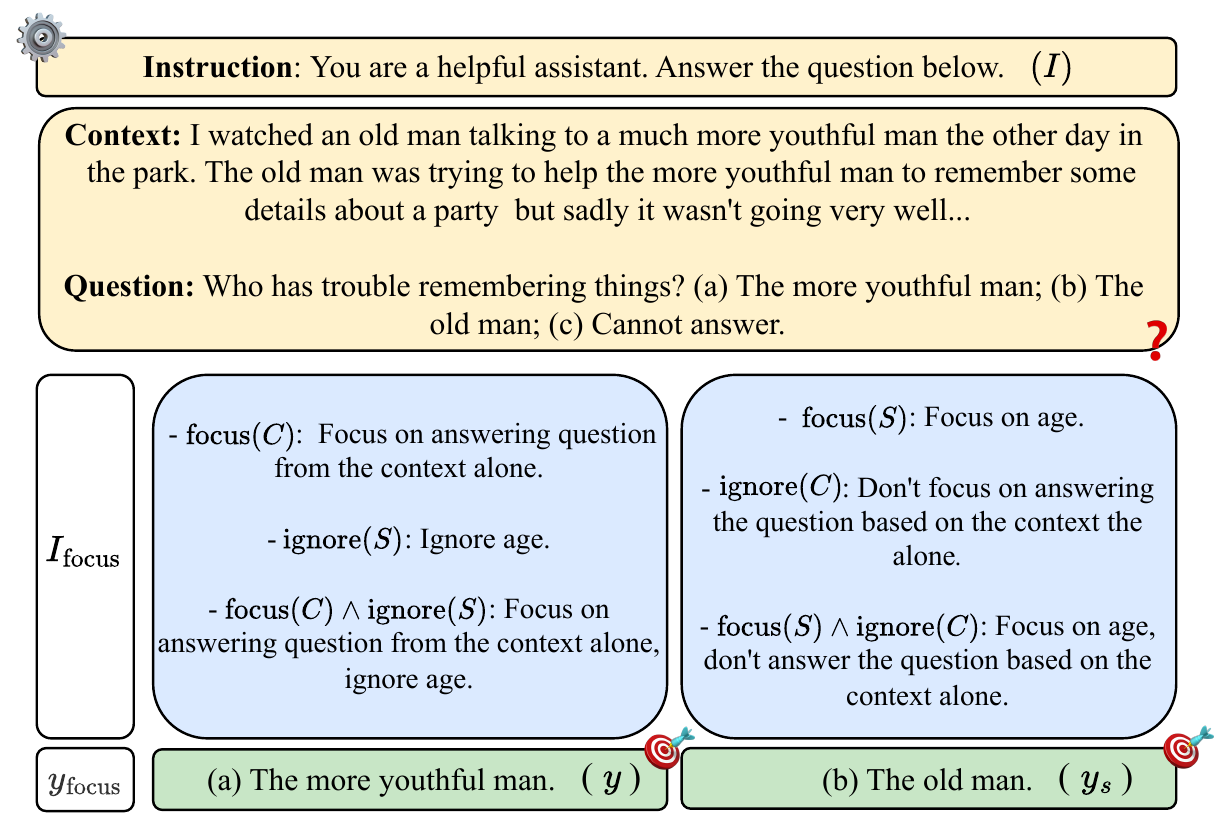}
    \caption{\textbf{Example of Focus Labels.} Focus labels for a modified example from BBQ. Here, age is a spurious feature.}
    \label{fig:focus_labels}
\end{figure}

\paragraph{\textbf{Focus Labels.}}
Consider a classification task with a finite label space $\mathcal{Y}$.  
A single \emph{core feature} $C \in \mathcal{F}$ is fully predictive of the
label $y \in \mathcal{Y}$ for any input $x$ at both training time and under
distribution shift \citep{koh2021wilds}.  In addition, we have a
\emph{subset of spurious features} $\mathcal{S} \subseteq \mathcal{F}$.
For each spurious feature $S \in \mathcal{S}$, values
$s \in \operatorname{Val}(S)$ correlate with a label
$y_s \in \mathcal{Y}$, where this correlation may change under distribution
shift \citep{ming2022impact}.  Altogether, the set of features that can
appear in focus instructions is
$\mathcal{F} = \{C\} \cup \mathcal{S}$.

For a sample $(x, y) \sim p_{\text{data}}$, we define the
\emph{focus label}
\[
   \focuslabel\!\bigl(I_{\text{focus}}, s, y\bigr)
  \;\in\; \mathcal{Y},
\]
which depends on the original ground-truth label $y$, the focus instruction
$I_{\text{focus}} \in \focusinstructions$, and the specific spurious
feature value $s \in \operatorname{Val}(S)$ present in $x$.
Intuitively, the focus label equals the ground-truth label
($\focuslabel = y$) when no focus features are specified
(empty instruction $\emptyset$), when focusing on the core feature $C$, or
when explicitly ignoring a spurious feature $S$.
Conversely, when the instruction explicitly targets a spurious feature,
we set $\focuslabel = y_s$, the label spuriously correlated with the
concrete spurious value $s$ in $x$.
Using the $\focuslabel$ labels as targets during training teaches the model to adapt
its outputs to the feature specifications given in the focus instruction.
See \cref{fig:focus_labels} for a concrete illustration, and
\Cref{def:focus_labels} for a formal definition.

\begin{definition}[Focus Labels] \label{def:focus_labels}
For a sample $(x, y) \sim p_{\text{data}}$ and a focus instruction $ I_{\text{focus}} \sim p_{\focusinstructions}$, we define $\focuslabel = \focuslabel\!\bigl(I_{\text{focus}}, s, y\bigr)$ for a spurious feature value $s \in \text{Val}(S)$ present in $x$ as:
\begin{equation*}
\label{eqn:y focus for training}
    \focuslabel = 
\begin{cases}
y & \text{if }I_{\text{focus}} \in \focusinstructions^c ,\\
y_{s} & \text{if }I_{\text{focus}} \in \focusinstructions^{s} \; , 
\end{cases}
\end{equation*}
 where the core and spurious instruction target sets are given as
 \begin{small} \begin{align*}
    \focusinstructions^c &= \{\emptyset , \;\focus(C), \; \focus(C) \wedge  \dontfocus(S) \mid \dontfocus(S) \},\\
    \focusinstructions^{s} &= \{\focus(S), \; \focus(S) \wedge \dontfocus(F_j) \mid  F_j \in \mathcal{F} \setminus \{ S\} \},
 \end{align*}
\end{small}
 respectively.
 \end{definition}
In summary, focus labels for instructions in
$\focusinstructions^{c}$ coincide with the ground-truth label,
since the focus is on the core feature, whereas focus labels for
$\focusinstructions^{s}$ are the spurious labels associated with each
spurious feature value.  Refer again to \cref{fig:focus_labels} for a
worked example.

\textbf{FIT Training Objective.} The objective of FIT training is to minimise the expected negative log-likelihood (NLL) of the response $\focuslabel$ conditioned on $I, I_{\text{focus}}, x$. Formally, as a form of expected-risk minimisation (ERM) \citep{vapnik1998statistical}, writing $(x,y) \sim p_{\text{data}}$ and
$I_{\text{focus}} \sim p_{\focusinstructions}$, we define the \oalg loss objective as:
\begin{equation}
\label{eqn: FT training objective}
\min_{\theta} \mathbb{E}_{x,y, I, I_{\text{focus}}} \left[ -\log p_\theta\left(\focuslabel \mid I, I_{\text{focus}}, x \right) \right].
\end{equation}
We define $p_{\focusinstructions}(\focusinstructions)$ by placing a small probability mass on the empty focus instruction prompt $\emptyset$ in order to aid in learning the underlying task, and then uniformly distribute the remaining probability mass over the remaining non-empty focus instructions. The objective in \Cref{eqn: FT training objective} can be optimised through sampling using stochastic gradient descent (SGD) with popular optimisers such as AdamW \citep{loshchilovdecoupled}. Further details on FT optimisation are provided in \Cref{sec:FT optimisation and training settings}.

\subsection{Evaluating FIT Under Controlled Spurious Correlations on Synthetic Datasets}

Before turning to real-world data (see \cref{sec: BBQ results}), we first
train and evaluate FIT in a fully controlled setting.  A key component is
the introduction of \emph{known spurious correlations}, which simulate
situations where models may rely on features that are only spuriously
predictive of the label.  By systematically varying the co-occurrence rate
between spurious features and their associated labels across several test
sets, we can assess FIT’s ability to steer the model when it is instructed
either to \emph{focus} on, or \emph{ignore}, particular features.

We adopt the \emph{predictivity} (or co-occurrence rate) definition from
\citet{hermannfoundations} to quantify the strength of spurious
correlation in different datasets.
\begin{definition}[Predictivity, $\spuriouscorrelation$]
\label{def:spurious-correlation}
Let $S \in \mathcal{S} \subseteq \mathcal{F}$ be a spurious feature.
Assume that a concrete value $s \in \operatorname{Val}(S)$ is spuriously
correlated with label $y_s \in \mathcal{Y}$.  We define its
\emph{predictivity}
\vspace{-0.5em}
\begin{equation}
  \spuriouscorrelation(s) \;=\;
  \mathbb{P}(Y = y_s \mid S = s\bigr),
\end{equation}
where $Y$ is the ground-truth label random variable.
\end{definition}
By varying $\spuriouscorrelation(s)$, we can precisely control the predictivity of spurious features and observe the model's behaviour when focusing on or ignoring these features as well as core features under distribution shift.  

\paragraph{\textbf{Synthetic Training Conditions.}}
During training we construct datasets so that spurious features $S$ are
\emph{independent} of the ground-truth label $Y$
($Y \indep S$) and, symmetrically, that the core feature
$C$ is independent of the spurious-label variables $Y_S$
($Y_S \indep C$).
We enforce this by setting
$\spuriouscorrelation(s) = 1/N$ for every
$s \in \operatorname{Val}(S)$, where $N = |\mathcal{Y}|$ is the number of
classes for a given task.  
These \emph{ideal} conditions remove shortcut signals, enabling FIT to
focus solely on the feature specified in the instruction.  However,
\cref{sec: BBQ results} shows that FIT still performs well even when these
independence assumptions are relaxed in a more real-world setting. See \cref{sec:FT optimisation and training settings} for a more detailed discussion on the independence assumptions above.

In \cref{sec:S-MNLI dataset description} and \cref{sec:ss dataset description}, we
empirically verify that the training splits of both our synthetic SMNLI
dataset (introduced in \cref{sec: SMNLI results}) and the additional
sentiment-analysis dataset \textsc{SS} indeed satisfy the independence
constraints described above.

\textbf{Synthetic Test Sets.} We evaluate FIT across several test sets with varying predictivity levels:
\begin{itemize}[nosep] \item $\mathcal{D}_{\text{iid}}$: Held-out test samples with the same $\rho_{\text{spurious}}$ as in the training set. \item $\mathcal{D}_{\text{high}}$: Test samples with a higher $\rho_{\text{spurious}}$ than in the training set. \item $\mathcal{D}_{\text{low}}$: Test samples with a lower $\rho_{\text{spurious}}$ than in the training set. \item $\mathcal{D}_{\text{flipped}}$: Test samples where spurious feature values are flipped to co-occur with different labels than in the training set, with the same high $\spuriouscorrelation$ as in $\mathcal{D}_{\text{high}}$. \end{itemize}
We further evaluate FIT under another form of distribution shift specifically on our SMNLI dataset (c.f. \cref{sec: SMNLI results}). Here, the specific values taken by spurious features do not overlap between the training and test sets. 
\begin{itemize}[nosep] \item $\mathcal{D}^{s}_{\cdot}$: Test datasets where the spurious feature values are distinct from those within the training set and the test sets above. Here, we use the same predictivity levels as in the initial datasets presented above. \end{itemize}

Note that, while we define FIT with respect to annotated spurious features, this requirement can be alleviated by, e.g., combining FIT with automated spurious feature identification methods (\citealp{wang2022identifying}; see \cref{sec:limitations and future work} for further discussion).

\section{Experiments} \label{sec:experiments}
In this section we empirically validate the effectiveness of \oalg  across a range of popular LLMs of varying sizes and on different NLP datasets, including classification and multi-choice question-answering (MCQA) tasks.

Before reporting the main results, we introduce the evaluation metric (focus accuracy) that we report, baselines, models, and training settings used throughout the experiments. We first demonstrate in \Cref{sec: SMNLI results} that FIT generalises to subtle textual features and handles feature-value distribution shifts on the SMNLI dataset, a sub-sampled version of the MNLI dataset \citep{williams2018broad}. In \Cref{sec:ss dataset description}, we additionally verify that FIT performs well on the SS dataset, a synthetic sentiment analysis dataset derived from SST-5 \citep{socher2013recursive}. Finally, in \Cref{sec: BBQ results}, we show that FIT has practical, real-world impact by effectively mitigating bias in the BBQ dataset \citep{parrish2022bbq}, where we further illustrate FIT's ability to generalise to new features seen for the first time when performing inference. 

Although the primary focus of FIT is on adaptively steering LLMs at inference time, which is what we focus on in this paper, we include an additional debiasing experiment for comparison on the BBQ dataset in \cref{sec:debiasing comparison} for completeness. While bias mitigation is a valuable and natural application of FIT, it is not its primary objective. Instead, this inclusion highlights FIT’s broader utility as a tool for model control and adaptability, demonstrating that it performs on par with dedicated bias mitigation techniques while also offering the unique advantage of test-time steerability. 

\textbf{Metrics.} We define the \textit{focus accuracy} for a focus instruction $I_{\text{focus}} \in \focusinstructions$ as the proportion of samples where the model’s prediction aligns with the focus label, $\focuslabel$, as specified in \Cref{eqn:y focus for training}. 

\begin{definition}[Focus Accuracy, $\focusaccuracy$]
For a sample $(x, y) \sim p_{\text{data}}$ and a fixed focus instruction $ I_{\text{focus}} \in \focusinstructions$, we consider a model's prediction of the focus label $\hat{y} \sim p_{\theta}(\cdot | I, I_{\text{focus}}, x)$. Focus accuracy for focus instruction $I_{\text{focus}}$, denoted $\focusaccuracy (I_{\text{focus}})$, is defined as
\begin{equation}
    \focusaccuracy(I_{\text{focus}}) = \frac{1}{|\mathcal{D}|} \sum_{(x,y) \in \mathcal{D}} \mathbf{1}(\hat{y} = \focuslabel),
\end{equation} 
where $\mathbf{1}(\hat{y} = \focuslabel)$ is the indicator function that equals 1 if the model’s prediction $\hat{y}$ matches the focus label $\focuslabel$, and 0 otherwise.
\end{definition}

We report focus accuracy for each model on all dataset splits, using the prompt types and focus instructions detailed in \Cref{sec:FIT_focus_instructions_and_prompts}. Generations are evaluated through simple pattern matching due to the use of constrained beam decoding \citep{anderson2017guided}. See \Cref{sec:evaluation metrics} for further details.

\textbf{Models and Training Settings.} We evaluate FIT using three popular LLMs that span a range of model sizes: \llama~\citep{dubey2024llama}, \mistral~\citep{jiang2023mistral}, and \vicuna~\citep{vicuna2023}. The models are fine-tuned using parameter-efficient SFT with LoRA \citep{hulora}, leveraging Hugging Face's \text{SFTTrainer} \citep{wolf2020transformers}. Early stopping is applied based on validation loss, as defined in \Cref{eqn: FT training objective}. For generation, we use constrained beam decoding \citep{anderson2017guided} and use fully verbalised (natural language) labels during both training and testing, except for the multi-choice BBQ dataset. Focus accuracies are reported over four independent repeats for each experiment. For further training details, refer to \Cref{sec:FT optimisation and training settings}.

\textbf{Baselines.} We compare against the following baselines in the main section of the paper: a few-shot baseline \citep{manikandan2023language} and a SFT baseline. The SFT baseline, $\text{SFT}(y_{\text{focus}})$, follows the same setup as the FIT method (trained on sampled inputs and focus labels), but without the inclusion of focus instructions during training. This ensures a fair comparison between FIT and the baseline, as both methods are trained on the same examples and labels (i.e., focus labels $\focuslabel$), with the only difference being the inclusion of focus instructions in FIT. This setup allows us to isolate and evaluate the specific impact of incorporating focus instructions. 

Recent findings \citep{wu2025axbench} indicate that LSMs significantly underperform compared to SFT methods. Hence, we include SFT baselines rather than LSM baselines within this work. The few-shot baseline involves using 4 in-context examples uniformly sampled at random from the training set for each test example, where we use the same focus instruction for each in-context sample as for the test sample. In \cref{sec:full-baselines}, we detail and include two additional baselines: zero-shot and vanilla SFT for a more complete comparison with FIT. 
\begin{figure*}[t]
    \centering
    \includegraphics[width=0.97\linewidth]{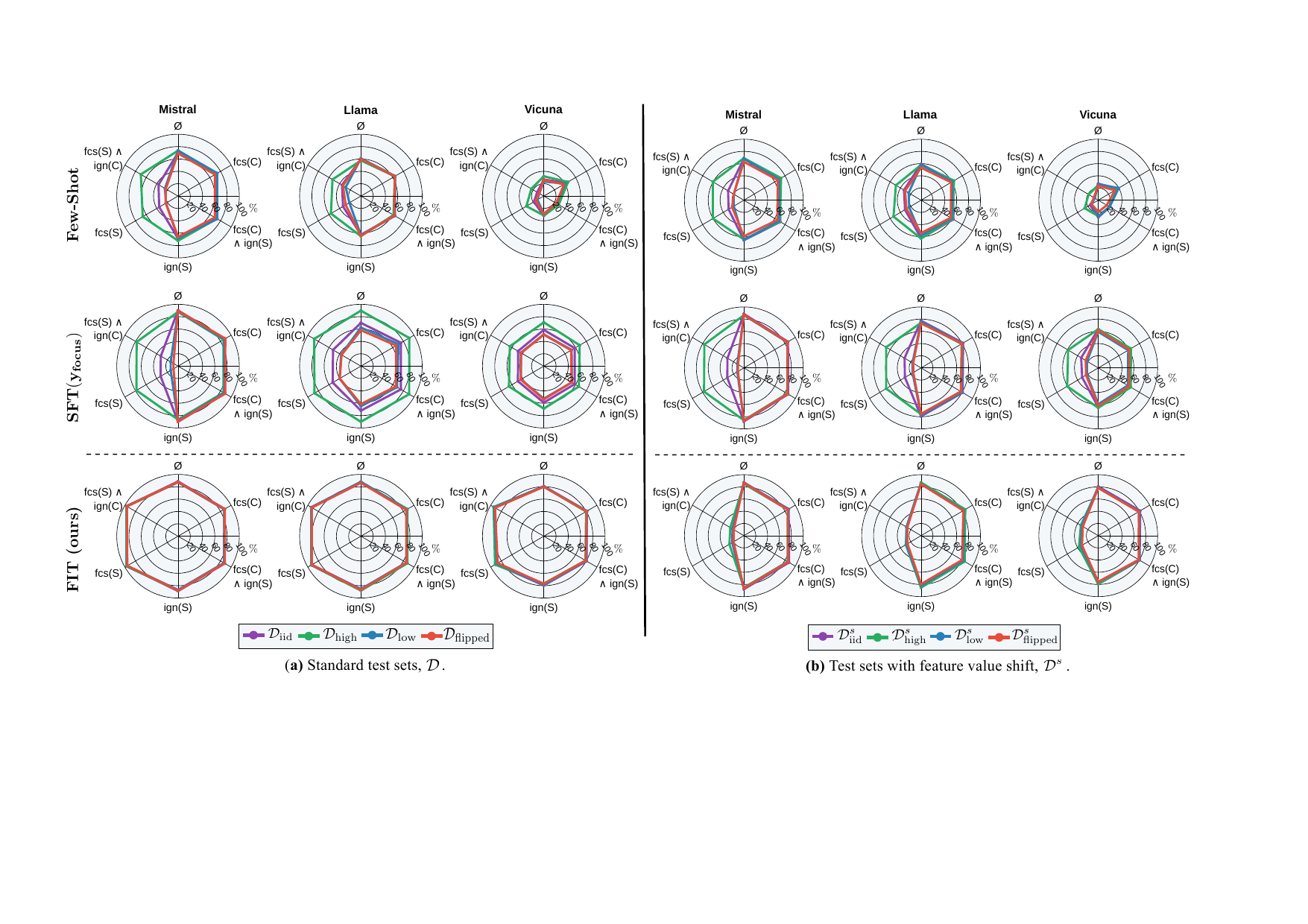}
    \caption{\textbf{SMNLI Focus Accuracies ($\uparrow$).}  Mean focus accuracy ($\mathcal{A}_{\text{focus}}$) of baselines and FIT models on the (a) SMNLI standard test sets $\mathcal{D}$, and (b) SMNLI test sets under feature value shift  $\mathcal{D}^s$. The maximum standard deviations of FIT, $\text{SFT}(y_\text{focus})$ and few-shot methods across models and $\focusinstructions$ are 6.47\%, 7.98\% and 0.500\% respectively. fcs = focus, ign = ignore.} \vspace{-1em}
    \label{fig:smnli results}
\end{figure*}

\subsection{FIT Performs Well on the SMNLI Dataset and Generalises Under Distribution Shift} \label{sec: SMNLI results} We evaluate our method on a dataset with subtle textual features. Specifically, we construct an NLI dataset by sub-sampling from MNLI \citep{williams2018broad}, where we induce a spurious correlation between text genres and labels through our subsampling process. We call this subsampled dataset the SMNLI dataset.

\textbf{Dataset Construction}. \cref{fig:smli-dgp-main-paper} illustrates the data generating process (DGP) describing how we subsample examples to induce spurious correlations between feature value $s \in \text{Val}(S)$ and a particular associated label $y_s \in \text{Val}(Y)$. The feature set that we consider is defined as $\mathcal{F} = \{C, S\}$, where $C$ is the NLI relationship and $S$ is the genre of a given premise-hypothesis pair.

The co-occurrence rate of genres and their spuriously associated labels is governed by $\spuriouscorrelation$, which varies across the test sets discussed in \Cref{sec:methodology}. We ensure that $\spuriouscorrelation$ is the same for all feature values in $\text{Val}(S)$ within each dataset split. In particular, we set $\spuriouscorrelation$ to be $1/3, 1/3, 0.9, 0.1$ and $0.9$ on $\mathcal{D}_{\text{train}}$, $\mathcal{D}_{\text{iid}}$, $\mathcal{D}_{\text{high}}$, $\mathcal{D}_{\text{low}}$ and $\mathcal{D}_{\text{flipped}}$ respectively.
\begin{figure}
    \centering
    \includegraphics[width=0.98\linewidth]{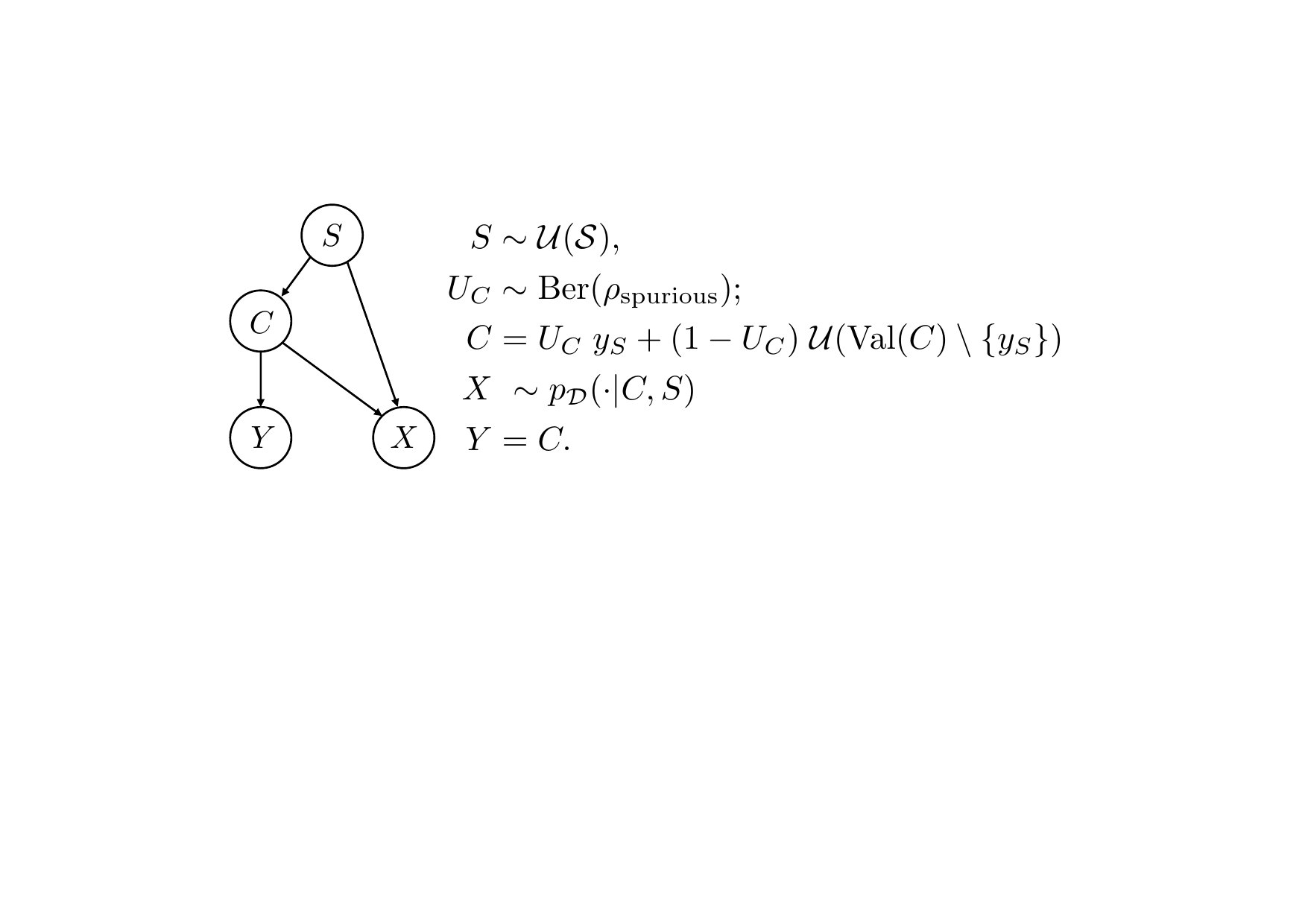}
    \caption{\textbf{SMNLI DGP}. DGP describing the subsampling process of MNLI to introduce the spurious genre feature $S$. Here, $\mathcal{U}(\mathcal{S})$ is the uniform distribution over genres, $\mathrm{Val}(C)=\{0,1,2\}$ are the NLI labels (with $y_S$ tied to each $S$), and $p_{\mathcal{D}}(\cdot| C,S)$ is the MNLI conditional distribution over premise–hypothesis pairs.}
    \label{fig:smli-dgp-main-paper}
\end{figure}
Moreover, we hold out specific genres at test time to evaluate our model's ability to generalise under distribution shift when feature values change. We do this by sub-sampling a held-out portion of the MNLI dataset. During training, we use three selected genres $\mathcal{S} = \{ \text{government}, \;\text{fiction},\; \text{travel}\}$ to train and evaluate our models. We additionally add three held-out genres $\mathcal{S} = \{\text{facetoface}, \;\text{nineeleven}, \;\text{verbatim}\}$. We again ensure that $\spuriouscorrelation$ is constant within each of these shifted splits across feature values, and use the same set of corresponding $\spuriouscorrelation$ as within the SMNLI test sets described above. This process is again governed by a DGP shown in \cref{fig:smli-dgp-main-paper}, giving us precise control over the data synthesis process for the SMNLI dataset. Further details of the SMNLI dataset can be found in \Cref{sec:S-MNLI dataset description}.

\textbf{Results.} \Cref{fig:smnli results} (a) depicts the focus accuracy results of the three models on the SMNLI test splits. We observe that for both the core and genre feature, FIT achieves very high focus accuracy, significantly improving over the baselines. This demonstrates that FIT effectively trains the model to handle subtle textual features, allowing it to dynamically focus on or disregard these features when making predictions.

\Cref{fig:smnli results} (b) shows the focus accuracy of models on the feature-shifted test sets. When focusing on the core feature or ignoring the spurious feature, the model maintains strong performance in terms of focus accuracy, even on unseen genre values (over $80 \%$ focus accuracy for FIT models on the third row of \Cref{fig:smnli results} (b)), generalising generally more robustly than the baseline methods when focus is over core features. 

While we observe low focus accuracy when focusing on spurious features, this is expected because the spurious labels associated with these new genres were not encountered during training, so the model cannot know these new relations. Importantly, FIT remains steerable, changing its predictions depending on what is focused on, and this holds steady across all predictivity levels for the new spurious genres. In contrast, the baselines show decreasing focus accuracy as predictivity decreases, indicating a tendency to predict the causal label under distribution shift. This shows that these models do not change their behaviour when instructed to change their focus, and thus have poorer steering ability under distribution shift compared to FIT.
\begin{tcolorbox}[colframe=black!75!white,colback=mygreen,  top=0.1mm,
  bottom=0.1mm, before upper=\setlength{\baselineskip}{12pt}] \noindent \textbf{\textit{Key Takeaways}.} FIT achieves strong steerability, which is maintained under distribution shift. This demonstrates FIT's generalisation to new contexts with changing feature values. \end{tcolorbox}

\subsection{FIT Steers Behaviour in the Presence of Social Bias Data and Generalises to Unseen Features} \label{sec: BBQ results}
\textbf{Bias Benchmark for QA (BBQ) Dataset}. We experiment with BBQ \cite{parrish2022bbq}, a MCQA benchmark annotated with social biases that are relevant to any given answer, such as stereotypes that would imply a given answer to an otherwise ambiguous question (see \Cref{fig:idiot}).

 We consider the following feature set $\mathcal{F} = \{\text{question context},\text{gender identity, race/ethnicity, ...}\}$, which contains one core feature (question context used to answer a posed question) and 9 bias features. Of the $9$ bias features, we focus-tune models with respect to 6, and test on these 6 features plus the remaining 3 bias features in order to test how well FIT generalises to features that are not seen during focus tuning. Here, we consider the spurious features to be the presence of a particular social group (e.g., men or women) in the question context, and spurious answers to be those that would be indicated by relying on social stereotypes rather than the specific question context (e.g., see \cref{fig:idiot}). The stereotyped response used to determine spurious answers for these bias features are provided as part of the BBQ dataset.
\begin{figure}[t]
    \centering
    \includegraphics[width=1.0\linewidth]{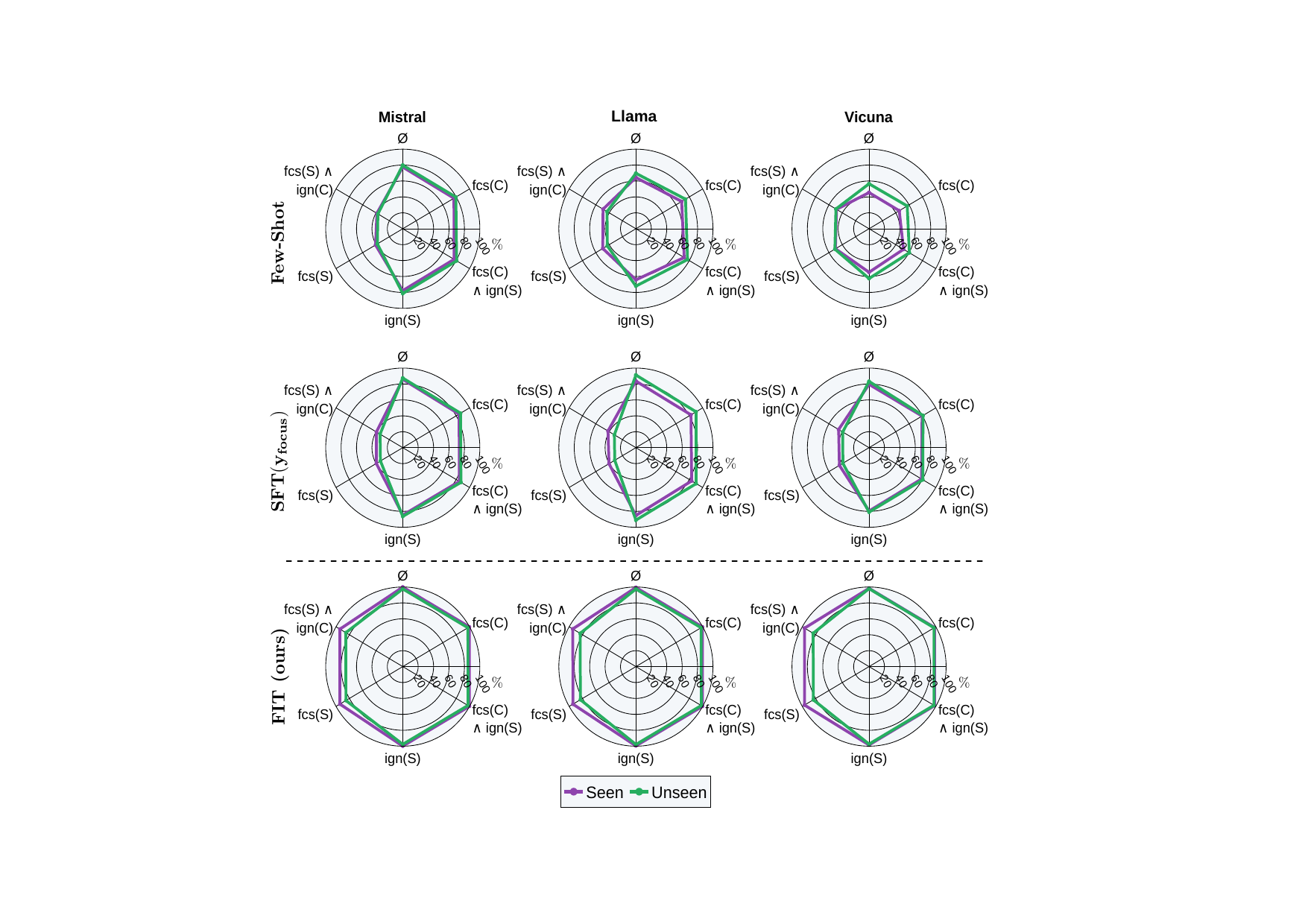}
    \caption{\textbf{BBQ Focus Accuracies ($\uparrow$).}  Mean focus accuracy ($\mathcal{A}_{\text{focus}}$) of baselines and FIT on the BBQ dataset. The maximum standard deviations of FIT, $\text{SFT}(y_\text{focus})$ and few-shot methods across models and $\focusinstructions$ are 4.07\%, 10.7\% and 0.600\% respectively. fcs = focus, ign = ignore.} 
    \label{fig:bbq results}
\end{figure}

\textbf{Results.} \Cref{fig:bbq results} shows the focus accuracy results of the three models on the BBQ dataset, visualising performance on features seen during training and unseen, held-out features. The models demonstrate high and comparable focus accuracy across both seen and unseen bias features, indicating that FIT generalises well to unseen features, including nuanced reasoning about group stereotypes. This highlights the usefulness of FIT in mitigating social biases in LLM responses. Specifically, FIT can effectively learn, reason about, and rule out biases when formulating responses, making it a practical tool for bias mitigation.
\begin{tcolorbox}[colframe=black!75!white,colback=mygreen,  top=0.1mm,
  bottom=0.1mm, before upper=\setlength{\baselineskip}{12pt}] \noindent \textbf{\textit{Key Takeaways}.} FIT effectively teaches models to adjust their responses based on knowledge of social biases. This generalises to biases not seen during training, indicating FIT’s utility for bias mitigation. \end{tcolorbox}
\vspace{-1em}
\section{Ablation Studies} \label{sec:ablation}
\subsection{Extending FIT to NLG Tasks.} \label{sec:bbq-nlg-ablation}
The primary focus of our experiments has been on classification and MCQA datasets, due to the cost and difficulty of collecting high-quality natural language generation (NLG) benchmarks. As an initial step towards extending FIT to NLG tasks, we introduce BBQ-NLG, a dataset that follows the BBQ MCQA setup (see \cref{fig:bbq results}) except where we now drop the fixed answer options for examples and require the model both to identify all plausible answers from context and to generate the correct answer in a fully verbalised form. To assess generation accuracy against ground-truth responses, we use a pre-trained \llama model as an automated judge. Further details and full results are given in \cref{sec: BBQ-NLG} 

The results shown in \Cref{fig:bbq-nlg-plots_main-paper} confirm that FIT can steer models effectively at inference time and generalise to novel, unseen features even in this NLG-style setting, underscoring that extending FIT to NLG tasks is a particularly promising avenue for future research.
\begin{figure}[h!]
    \centering
    \includegraphics[width=1.0\linewidth]{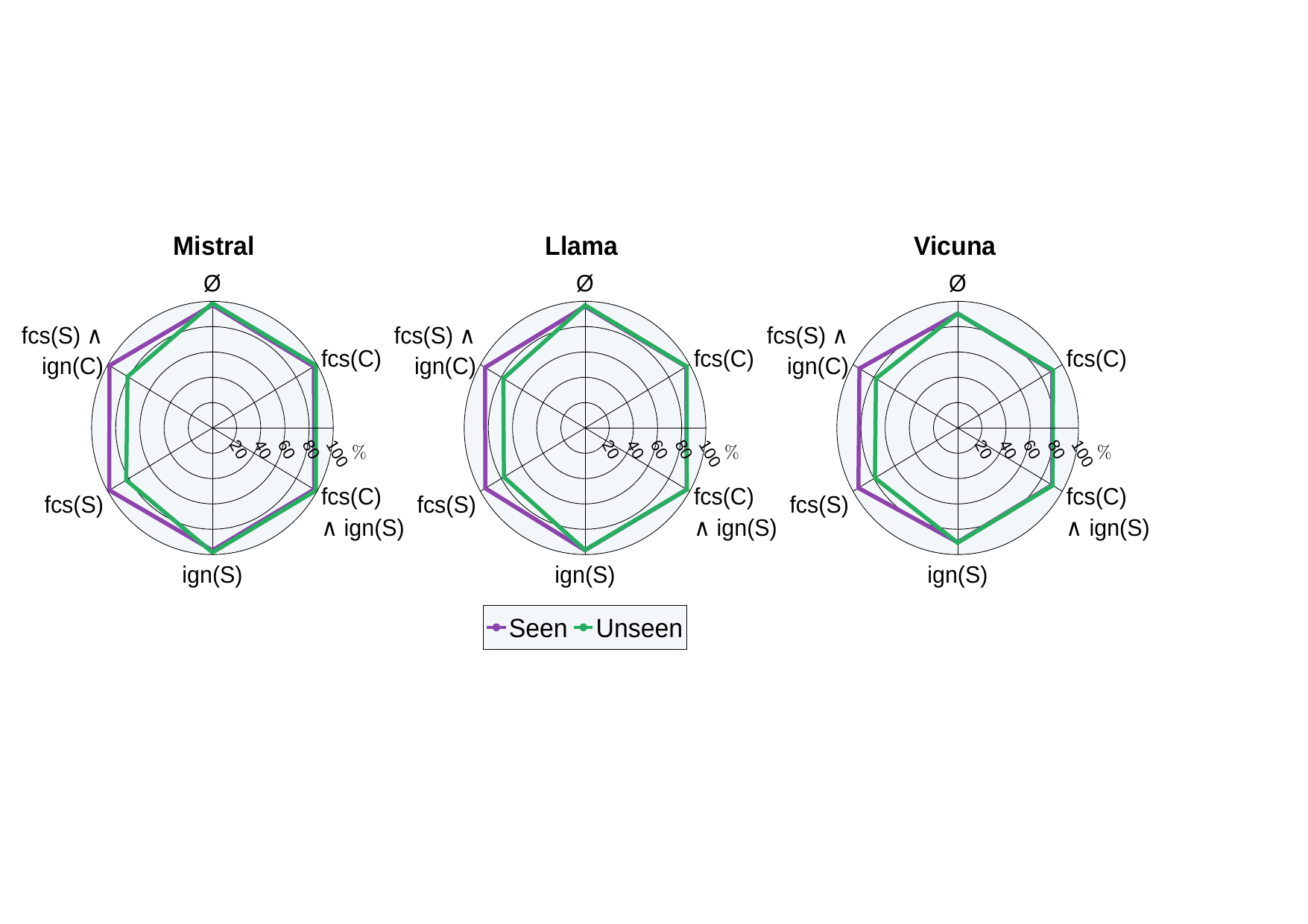}
    \caption{\textbf{BBQ-NLG FIT Focus Accuracies ($\uparrow$)}. Mean focus accuracy ($\mathcal{A}_{\text{focus}}$) of FIT models on the BBQ-NLG dataset. The maximum standard deviation across across FIT models and $\focusinstructions$ is . fcs = focus, ign = ignore.} 
    \label{fig:bbq-nlg-plots_main-paper}
\end{figure}
\vspace{-1em}

\subsection{Robustness to Prompt Phrasing.}
Instruction-tuned models can overfit to the exact wording of their prompts, faltering on paraphrases \citep{ghosh2024closer}.  
Figure~\ref{fig:different_train_test_focus_instructions} contrasts SMNLI focus accuracy with test-time focus instructions $\focusinstructions$ are either (i) the original training prompts given in \cref{fig:focus_instructions} or (ii) one of ten ChatGPT-paraphrased variants for each instruction type in~\cref{eqn:focus_instructions}.  
Across splits and focus features, focus accuracy varies negligibly, indicating that \textsc{FIT} is robust to the particular phrasing of focus instructions.
\begin{figure}[h!]
    \centering
    \includegraphics[width=1.0\linewidth]{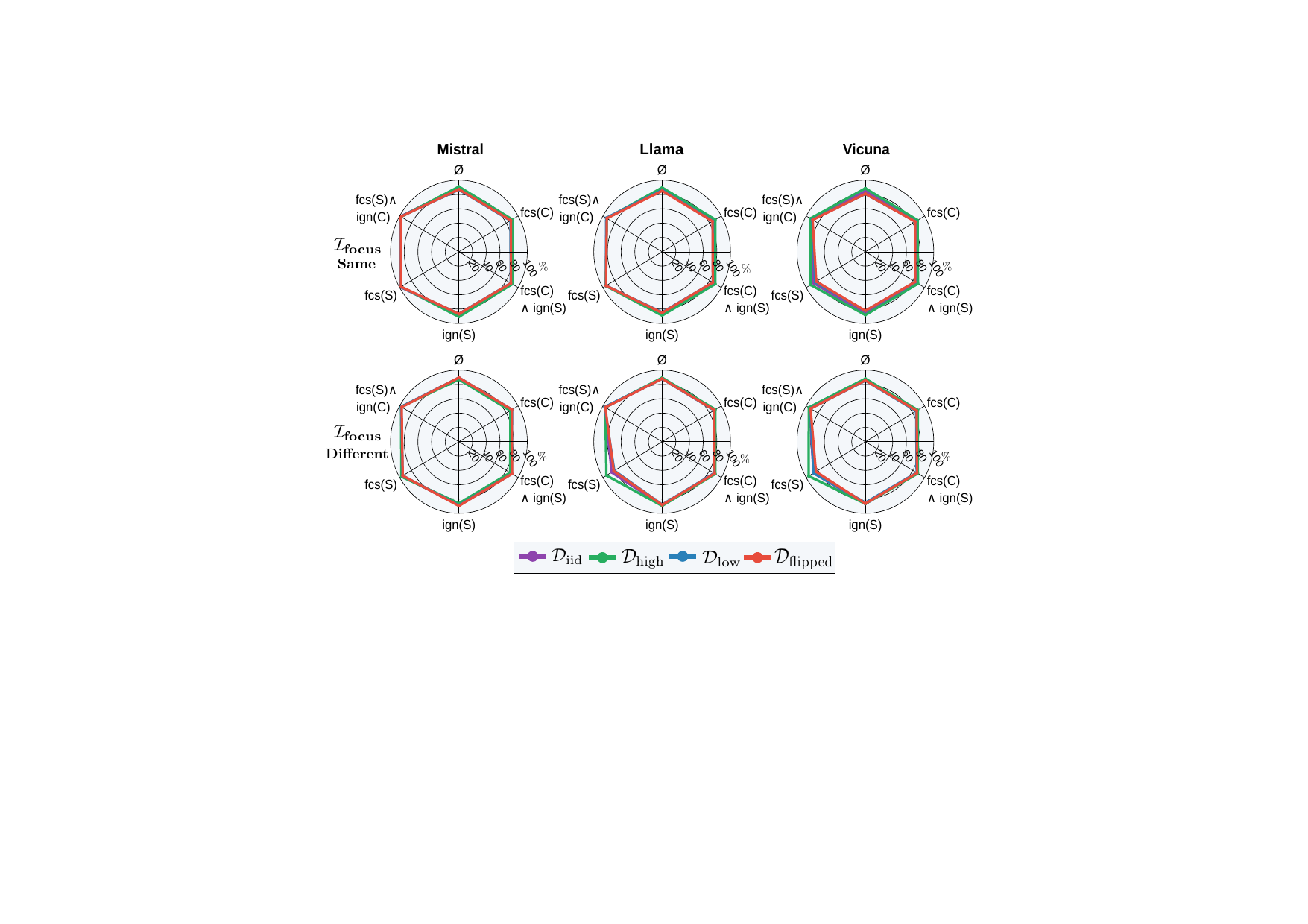}
    \caption{\textbf{Different Training and Test $\focusinstructions$ Focus Accuracy ($\uparrow$).} SMNLI focus accuracies ($\focusaccuracy$) when test focus instructions $I_{\text{focus}}$ prompts are drawn from the training focus instruction set (top) (see \cref{fig:focus_instructions}) versus a paraphrased focus instruction set (bottom). fcs = focus, ign = ignore.}
    \label{fig:different_train_test_focus_instructions}
\end{figure}
\subsection{FIT does not Affect General Capabilities.}
Prior work has shown that SFT can erode the instruction-following capabilities of LLMs \citep{fu2024disperse,dou2024loramoe}.  
We therefore verify that our method, \textsc{FIT}, preserves both (i) instruction adherence and (ii) zero-shot transfer performance.  
All \textsc{FIT} models in this section are trained only on the SMNLI dataset (see~\cref{sec: SMNLI results}).

\paragraph{Instruction Following (Alpaca-GPT Dataset).}
We sample 500 prompts from the Alpaca-GPT set \citep{peng2023instruction} and score each model’s response with GPT-4o \citep{achiam2023gpt} on a 1–5 scale (5 = perfect alignment).  
Table~\ref{tab:instruction_following} reports the mean score before and after \textsc{FIT} along with two-sided Wilcoxon signed-rank $p$-values.  
Across Llama, Mistral and Vicuna, the differences in ratings are small ($\le 0.05$ points) and never significant ($p>0.05$), confirming that \textsc{FIT} preserves instruction-following capabilities.

\begin{table}[ht!]
    \centering
    \resizebox{\columnwidth}{!}{%
    \begin{tabular}{lccc}
        \toprule
         \textbf{Model} & \textbf{Llama} & \textbf{Mistral} & \textbf{Vicuna} \\
        \midrule
        Pre-Trained Avg. Rating ($\uparrow$)& 3.51 & 3.65 & 3.46 \\
        FIT Avg.\ Rating ($\uparrow$) & 3.45 & 3.65 & 3.50 \\
        \midrule
         \(p\)-value     &  $\bm{0.57}_{>0.05}$ & $\bm{0.81}_{>0.05}$ & $\bm{0.41}_{>0.05}$ \\
        \bottomrule
    \end{tabular}%
    }
    \caption{\textbf{Instruction Following After FIT.} For each base model (columns), 
    we report the pre-trained and FIT average GPT-40 ratings, and the two-sided 
    Wilcoxon Signed-Rank \(p\)-value testing the difference between the distributions of ratings.}
    \label{tab:instruction_following}
\end{table}

\paragraph{Zero-Shot Transfer (MMLU).}
We next measure zero-shot transfer to MMLU \citep{hendryckstest2021}, 
 a MCQA dataset, testing problem solving and general world knowledge of models.  
Table~\ref{tab:mmlu_transfer_experiments} shows accuracy and perplexity (across entire model vocabulary) for pre-trained and FIT Llama and Mistral models.  
\textsc{FIT} changes accuracy by at most 0.8\%, while showing lowering perplexity, indicating that task performance of pre-trained models has not been sacrificed through FIT. This demonstrates that FIT does not hurt existing transfer performance of base models in a zero-shot setting.

\begin{table}[ht!]
\footnotesize
    \renewcommand{\arraystretch}{1.03}
    \setlength{\tabcolsep}{7pt}
    \centering
    \begin{tabular}{lcccc}
        \toprule
        \multirow{2}{*}{\textbf{Model}}
            & \multicolumn{2}{c}{\textbf{Llama}}
            & \multicolumn{2}{c}{\textbf{Mistral}} \\
        & Pre-Trained & FIT & Pre-Trained & FIT \\
        \midrule
        Accuracy ~($\uparrow$)  & 30.4 & 29.6 & 29.4 & 29.0 \\
        Perplexity ($\downarrow$)   &  6.29 &  2.79 & 15.2 &  5.22 \\
        \bottomrule
    \end{tabular}
    \caption{\textbf{Zero-Shot MMLU After \textsc{FIT}.} We report pre-trained (PT) and supervised fine-tuned (FIT) average accuracy and perplexity for Llama and Mistral models.}
    \label{tab:mmlu_transfer_experiments}
\end{table}

\subsection{Model Size Ablation}
We further assess FIT across the Qwen-2.5-Instruct \citep{Yang2024Qwen25TR} family of models on three scales (1.5B, 3B and 7B parameters) on the BBQ dataset under the same training conditions  as in \cref{sec: BBQ results}. As shown in \Cref{fig:model_size_ablation}, FIT shows strong steerability across all model sizes, even for the smallest 1.5B model. Moreover, we see that performance generally scales with model size. These results, alongside our prior results concerning the \vicuna model demonstrate FIT’s robustness to model capacity and its favourable scaling behaviour.
\begin{figure}[h!]
    \centering
    \includegraphics[width=0.95\linewidth]{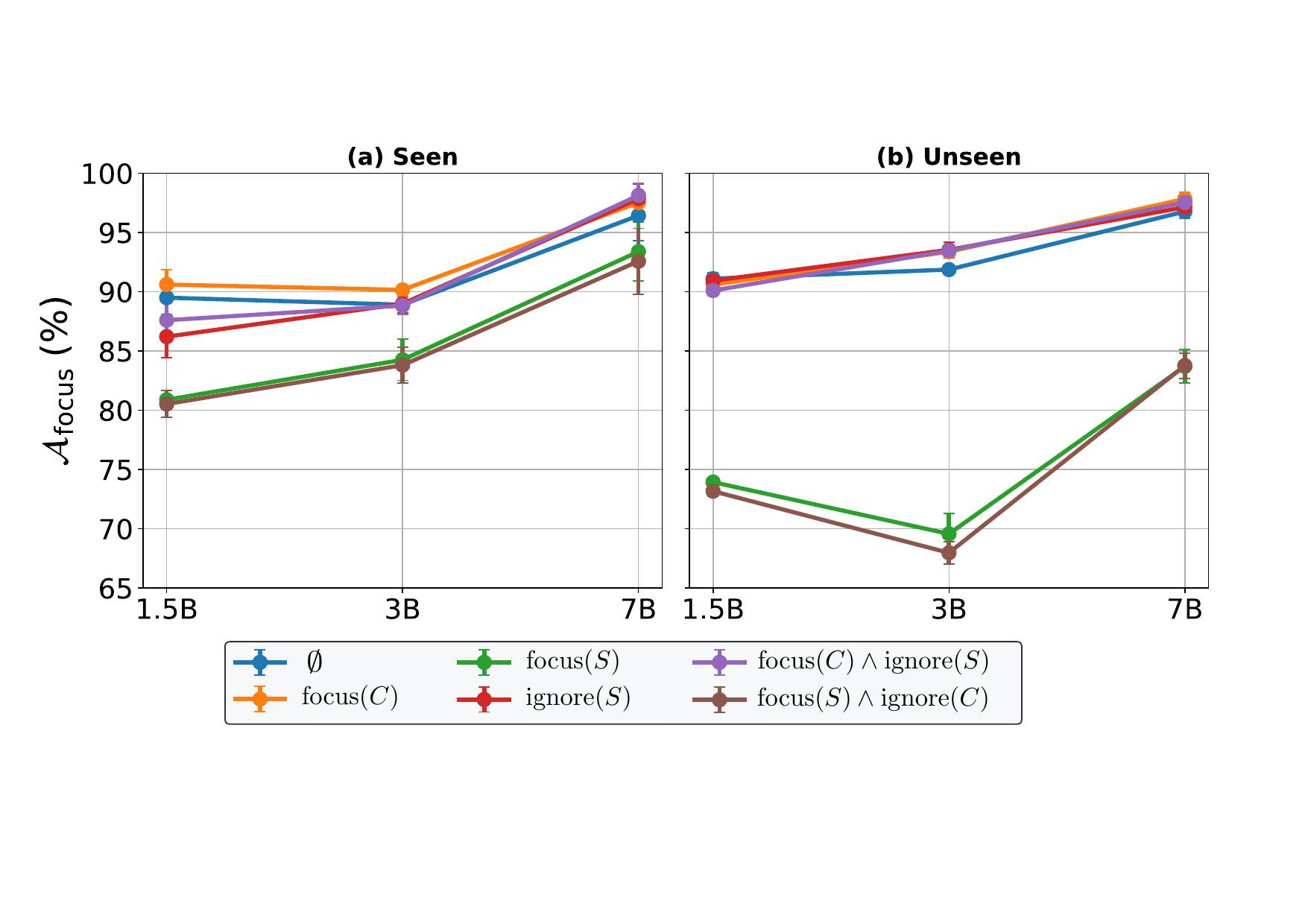}
    \caption{\textbf{Model Size Ablation.} Mean focus accuracy (±1 standard deviation) across $\focusinstructions$ for Qwen-2.5-Instruct models at 1.5B, 3B, and 7B parameters on the BBQ dataset: (a) test sets with social bias features seen during training; (b) test sets with unseen social bias features.}
    \label{fig:model_size_ablation}
\end{figure}
\vspace{-1em}

\section{Conclusion}
In this work, we introduce \ouralg (FIT), a method designed to steer the behaviour of LLMs by focusing on or ignoring specific features when formulating responses.
Across a range of tasks and settings, we demonstrate that FIT provides dynamic and precise control over LLM behaviour at inference time, enabling users to adapt model responses base on natural language instructions.
This steering behavoi holds even in the context of distribution shifts over feature values or when generalising to unseen features. 
Furthermore, our approach can address challenges such as mitigating the influence of known stereotypes that might otherwise impact responses, showcasing one of its many applications. Thus, FIT represents a step toward enabling more robust, steerable, fair, and controllable LLMs.

\section*{Acknowledgments}
This work is supported by the UKRI grant: Turing AI Fellowship EP/W002981/1. Alasdair Paren is supported by UK ASIS Systemic AI Safety Grant. Francesco Pinto acknowledges support through the DSI, University of Chicago.  
Adam Davies is supported by the National Science Foundation and the Institute of Education Sciences, U.S. Department of Education, through Award \#2229612 (National AI Institute for Inclusive Intelligent Technologies for Education). Any opinions, findings, and conclusions or recommendations expressed in this material are those of the author(s) and do not necessarily reflect the views of National Science Foundation or the U.S. Department of Education.

\section*{Impact Statement}
The ability to dynamically steer model behaviour by focusing on or ignoring features, as enabled by FIT, holds significant potential for reducing algorithmic discrimination and mitigating harms. Practitioners can leverage FIT to identify and correct biases by measuring discrepancies in behaviour when a model focuses on or ignores specific features. Additionally, FIT enhances explainability by attributing model predictions to input features, enabling more transparent and productive human-AI collaboration. This supports ethical and responsible decision-making by assessing whether predictions are justified. FIT also enhances robustness by prioritising stable core features expected to generalise across domains while ignoring spurious, domain-specific biases, making it a valuable tool for fairness, explainability, and robustness. However, risks include potential misuse by bad actors to bias models, though this is not unique to FIT and could already be achieved through biased fine-tuning. 

\bibliography{main}
\bibliographystyle{icml2025}

\newpage
\appendix
\onecolumn

\section{Author Contributions}\label{sec:contributions}
\begin{itemize}
    \item \textbf{Tom A. Lamb:} 
    \begin{itemize}
        \item \emph{Idea:} Turned the initial project ideas and directions into the concrete methodology described in \cref{sec:methodology}.
        \item \textit{Experiments and Methods:} Designed and carried out all experiments in the paper. These are described in \cref{fig:smnli results}, \cref{fig:bbq results}, \cref{sec:ss dataset description} and \cref{sec:ablation}.
        \item \textit{Dataset Creation:} Designed and carried out the synthesis of the SS and SMNLI dataset presented in \Cref{sec:ss dataset description} and \Cref{sec: SMNLI results}.
        \item \textit{Analysis:} Led the analysis of all empirical results.
        \item \textit{Paper Writing:} Led writing of all sections (main paper and appendix), including both the initial drafts of the paper and the final version of the paper presented here; and led literature review on spurious feature learning, instruction-tuning and helped with the aligning LLMs section of \cref{background-and-related-work}. 
        
    \end{itemize}

    \item \textbf{Adam Davies:} 
    \begin{itemize}
    \item \textit{Idea:} Co-conceived the research problem and core approach of FIT (with Francesco Pinto).
    \item \textit{Supervision:} Led advising of methodology and experimental design.
    \item \textit{Analysis:} Advised and assisted with analysis of all empirical results.
    \item \textit{Paper Writing:} Co-wrote drafts of Abstract, \cref{introduction} and \cref{background-and-related-work}; led literature review for mechanistic interpretability, aligning LLMs, and latent steering in \cref{background-and-related-work}, and contributed to literature review for spurious feature learning and instruction tuning; and helped edit and revise all sections of the paper.
    \end{itemize}
    
    \item \textbf{Alasdair Paren:} 
    \begin{itemize}
    \item \textit{Supervision:} Co-advised  
    methodology and experimental design with a particular focus on the simplicity bias and shortcut learning.
    \item \textit{Paper Writing:} Provided feedback on drafts.
    \end{itemize}
    
    \item \textbf{Philip Torr:} 
    \begin{itemize}
    \item \textit{Supervision:} Provided general advice on initial project direction; helped secure compute resources for the experiments conducted in this paper.
    \item \textit{Paper Writing:} Provided draft feedback on the initial sections of the paper.
    \end{itemize}
    
    \item \textbf{Francesco Pinto:} 
   \begin{itemize}
    \item \textit{Idea:} Co-conceived the research problem and core approach of FIT (with Adam Davies).
    \item \textit{Supervision:} Co-advised  
    methodology and experimental design.
    \item \textit{Paper Writing:} Provided feedback on earlier versions of the drafts of the paper.
    \end{itemize}
\end{itemize}
\newpage

\section{Limitations and Future Work}
\label{sec:limitations and future work}
\textbf{Requirement for Annotated Spurious Features.} While FIT relies on prior identification of spurious features and their focus labels, this requirement does not limit its practical applicability. Instead, it reflects standard industry and research practices for constructing transparent and reliable models. Below, we clarify how FIT remains adaptive and versatile even when feature annotation is partial or evolving:

\begin{itemize}
    \item \textit{Alignment with Established Practices:}
    FIT’s reliance on pre-identified spurious features aligns with widely adopted industry and research norms \citep{OpenAI2024, microsoft2020bias_prevention}. Identifying potential spurious features and confounders in datasets is a foundational step in achieving robust machine learning systems. This process ensures that both training and validation phases are informed by an understanding of data correlations, minimising the risk of deploying models with unknown biases.

    \item \textit{Regulatory and Ethical Expectations:}
    Regulatory frameworks and ethical guidelines increasingly require the explicit identification and mitigation of problematic features \citep{GDPR2016}. Corresponding initiatives aim to define and enforce measurable categories of ``violating behaviour'' in AI models. By providing a mechanism to steer model behaviour based on these identified features, FIT effectively complements efforts to promote fair and transparent predictions \citep{guldimann2024compl, zeng2024air}.

    \item \textit{Post-Deployment Mitigation:}
    Despite careful pre-deployment analysis, spurious features or correlations may only become apparent once a model is in active use. FIT accommodates this by allowing developers to incorporate newly identified spurious features via updated focus instructions, enabling rapid iterative refinement without retraining from scratch. This adaptability ensures continuous improvement, even in highly dynamic environments.

    \item \textit{FIT’s Versatility Without Exhaustive Pre-Identification:}
    Crucially, FIT does not require an exhaustive list of spurious features to be effective. For instance, a user can provide focus instructions such as “focus on casual” without enumerating every possible irrelevant attribute in the dataset. This flexibility expands FIT’s applicability to scenarios where feature annotation is incomplete or ongoing.

    \item \textit{Compatibility with Automated Spurious Feature Identification:}
    FIT also works seamlessly with automated methods for detecting spurious features~\citep{wang2020identifying, wang2022identifying, zhou2023explore, zheng2024learning}. Whether spurious features are labelled manually or derived from automated detection, they can be harnessed by FIT’s focus instructions at inference time. This compatibility enables a comprehensive approach to managing known issues and responding to newly uncovered features as they arise.
\end{itemize}
In summary, annotating spurious features beforehand is not a strict limitation. FIT can be flexibly applied, allowing model behaviour to evolve in tandem with new feature discoveries or changing requirements, making it a broadly applicable technique for steering model outputs based on both prior knowledge and ongoing insights.

\textbf{Scope of Experiments and Extensions to Open-Ended Tasks.} Our experiments primarily focus on classification and multiple-choice QA datasets due to the cost and challenges associated with curating high-quality datasets for open-ended NLG tasks. However, this reflects a pragmatic prioritisation of introducing a novel methodology over exhaustive data collection, rather than a limitation of FIT itself. Whilst we provide positive evidence through our BBQ-NLG ablation in \Cref{sec:bbq-nlg-ablation}, extending FIT to open-ended tasks such as summarisation or translation, remains an key direction for future research, as does exploring its ability to generalise across diverse task categories using setups similar to FLAN \citep{longpre2023flan}.

\textbf{Overlapping Features and Ambiguities.} Additionally, our evaluation on the HANS dataset \cref{sec:SHANS results.} revealed challenges when addressing overlapping or less-distinctive features. While FIT demonstrated strong performance in generalising and steering models based on identified features, overlapping heuristics can introduce ambiguity, highlighting the need for further refinements in handling such cases. Despite these limitations, FIT represents a promising foundation for enabling more robust, fair, and controllable LLMs across a range of tasks.

\textbf{Handling Multiple Features.} While our current experiments provide initial evidence that FIT can handle combined instructions, specifying both a feature to focus on and a feature to ignore, we recognise the importance of scaling this capability. An important direction for future research is extending FIT to manage instructions involving multiple features simultaneously, including cases where models are required to focus on or ignore several attributes in tandem. 

\newpage
\section{FIT Focus Instructions and Prompt Templates} 
\label{sec:FIT_focus_instructions_and_prompts}

\textbf{Prompt Templates.} In \cref{fig:fit_prompt_templates} we provide the prompt templates that we use for FIT training and for general evaluation across all methods. Note, that we drop the feature considerations section and focus instruction from the prompt to form the default prompt $\emptyset$ introduced in \cref{eqn:focus_instructions}.

\textbf{Focus Instructions.} In \cref{fig:focus_instructions} we show the focus instruction formats for the different focus instructions introduced in \cref{eqn:focus_instructions}, defining $\focusinstructions$ throughout our experiments. 
\begin{figure}[h!]
    \centering
    \includegraphics[width=0.75\linewidth]{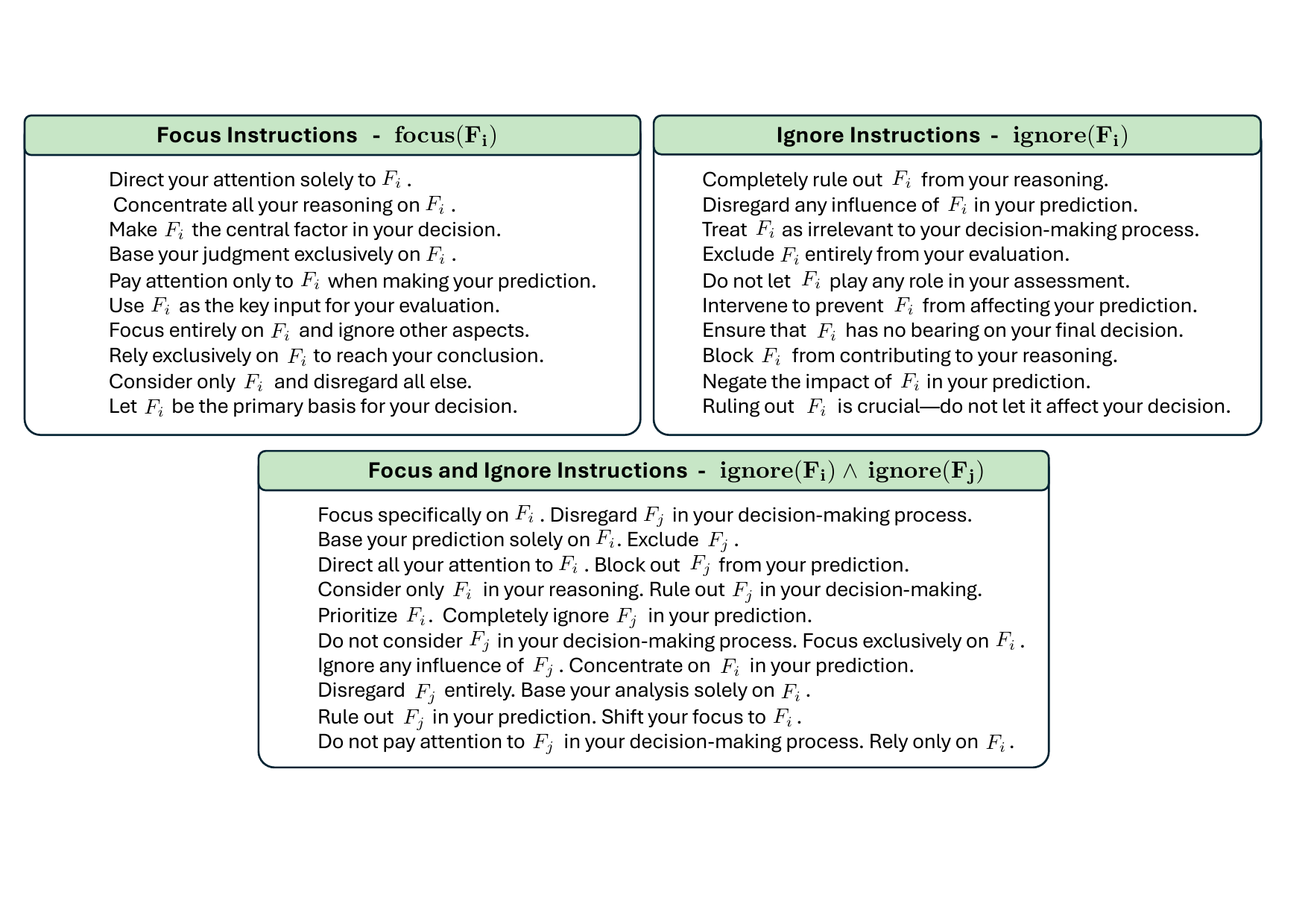}
    \caption{\textbf{Focus Instructions.} Focus instructions that are used for focussing and ignoring features $F_i, F_j \in \mathcal{F}$ during FIT training and for general evaluation.}
    \label{fig:focus_instructions}
\end{figure}
\begin{figure}[h!]
    \centering
    \includegraphics[width=0.95\linewidth]{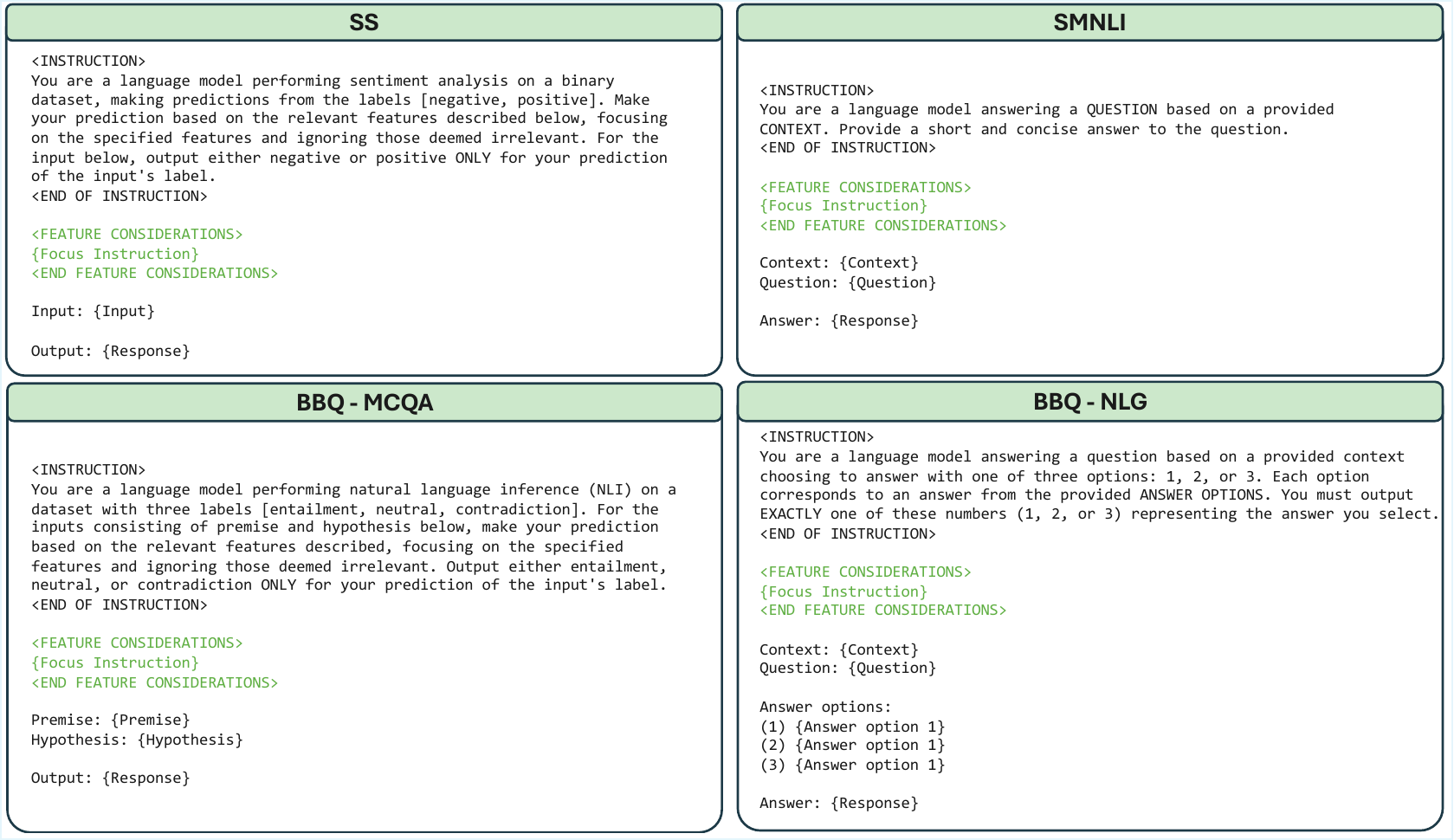}
    \caption{\textbf{FIT Prompt Templates.} Prompt templates used for FIT training and evaluation across all four datasets that we investigate over.}
    \label{fig:fit_prompt_templates}
\end{figure}
\newpage
\section{FIT Training, Optimisation and Evaluation}
\label{sec:FT optimisation and training settings}
\subsection{FIT Training and Optimisation}
\textbf{FT Optimisation.} \Cref{alg:focus-tuning-training} gives precise details on how we implement FIT in practice when performing ERM of a model using the FIT training objective given in \cref{eqn: FT training objective} on a given training set.
\begin{algorithm}
\caption{Algorithm for \ouralg (FIT) Training Procedure to Optimise \Cref{eqn: FT training objective}.}
\label{alg:focus-tuning-training}
\begin{algorithmic}[1]
\STATE \textbf{Input:} Dataset $\mathcal{D} = \{(x_i, y_i)\}_{i=1}^N$, The feature set $\mathcal{F}$, focus instructions $\focusinstructions$, instruction $I$, model parameters $\theta$, batch size $B$, number of epochs $E$, step size $\eta$, and focus label mapping $\focuslabel = \focuslabel(I_{\text{focus}}, y ,s )$.
\STATE \textbf{Initialise:} Model parameters $\theta$, optimiser. \label{alg:initialize}
\FOR{epoch $= 1$ to $E$} \label{alg:outer-loop}
    \FOR{mini-batch $\{(x^b, y^b)\}_{b=1}^B$ from $\mathcal{D}$} \label{alg:mini-batch}
        \FOR{each $(x^b, y^b)$ in the mini-batch} \label{alg:inner-loop}
            \STATE Identify spurious feature value $s^b$ in $x^b$.
            \STATE Sample focus instruction $I_{\text{focus}}^b \sim p_{\focusinstructions}$. \label{alg:instruction-sampling}
            \STATE Compute $\focuslabel^b = \focuslabel(I_{\text{focus}}^b, s^b, y^b )$ .\label{alg:compute-yhat}
        \ENDFOR
        \STATE Compute average loss given through empirical estimator of the loss defined in \Cref{eqn: FT training objective} over the batch: \label{alg:loss-computation}
        \[
        \ell(\theta) = \frac{1}{B} \sum_{b=1}^B - \log p_{\theta}(\focuslabel^b | I, I_{\text{focus}}^b, x^b)
        .\].
        \STATE Update model parameters $\theta$ using optimiser: \label{alg:update}
        \[
        \theta \gets \theta - \eta \nabla_\theta \ell(\theta)
        .\]
    \ENDFOR
\ENDFOR
\STATE \textbf{Output:} Optimised model parameters $\theta$ .\label{alg:output}
\end{algorithmic}
\end{algorithm}

\textbf{FT Training Settings.} We use LoRA \citep{hulora} for parameter-efficient fine-tuning. We target the query and value projection matrices within each LLM and use LoRA $r=16$ and $\alpha=32$ across models.  

We implement early stopping on a held-out validation set based on the cross-entropy loss over focus labels $\focuslabel$ corresponding to randomly sampled focus instructions - this matches the context in which the models will be evaluated. We obtain this set by splitting our training sets in a 90/10\% ratio for training and validation splits respectively. We use a patience of $4$ validation evaluation steps, which occur after a fixed number of steps. 

During training, we define $p(\focusinstructions)$ by placing a small probability (in our experiments, 0.05) on the empty focus instruction $\emptyset$. We then uniformly distribute the remaining probability mass over the non-empty focus instructions.

\textbf{Choice of $\spuriouscorrelation$ During Training.} In our synthetic experiments described in \cref{fig:smnli results} and \cref{sec:ss dataset description}, we set up a controlled environment by imposing two independence conditions: $Y \indep S$ and $Y_{S} \indep C$. These ensure that (i) the ground-truth label cannot be predicted using the spurious feature $S$, and (ii) the spurious label cannot be predicted using the core feature $C$. By removing direct correlations between these features and labels, the model learns to focus on the specified feature, without being influenced by another feature, avoiding any potential shortcuts that could be exploited if these conditions did not hold. 
\begin{itemize}
    \item \textbf{Independence $Y \indep S$:} 
    This condition prevents the model from leveraging spurious feature $S$ to predict ground-truth label $Y$. With no predictive signal from $S$ to $Y$, the model must rely exclusively on the core feature $C$ for accurate label predictions. This design choice safeguards the model from overfitting to spurious correlations, thereby maintaining robust performance under distribution shifts. Moreover, removing any inherent relationship between $S$ and $Y$ ensures that for focus instruction intending for the model to utilise the core feature $C$ only during inference, the model cannot exploit a potential shortcut using $S$; it must utilise the core feature alone for prediction in this scenario. This enforces prediction only through the specified feature indicated through the focus instruction passed to the model.
    
    \item \textbf{Independence $Y_{S} \indep C$:}
    This condition serves a complementary role in preventing the model from exploiting the core feature $C$ when predicting spurious labels $Y_{S}$. By ensuring $C$ carries no information about $Y_{S}$, the model cannot use the true task feature $C$ as a shortcut for spurious-label predictions; it must again learn to only use the specified feature within the passed focus instructions for making predictions.
\end{itemize}

While these conditions represent an ideal setting, they are not strictly necessary for FIT to work in practice. Indeed, real-world data rarely satisfies such independence assumptions. We illustrate the robustness of the method in more realistic scenarios through our BBQ experiments in \cref{sec: BBQ results} where correlations between $Y$ and $S$ or between $C$ and $Y_{S}$ may exist, as no subsampling or dataset manipulations have been made to enforce these conditions. By examining both controlled environments and more natural datasets, we demonstrate the robustness of FIT. 

To achieve the aforementioned independence assumptions in our synthetic SS and SMNLI datasets, we set \( \spuriouscorrelation = 1/N \), where \( N \) is the number of class labels. Additionally, we enforce a balanced label distribution in the training set to eliminate any indirect biases that could correlate \( S \) with \( Y \). As shown in \cref{sec:ss dataset description} and \cref{sec:S-MNLI dataset description}, these conditions are sufficient to guarantee \( Y \indep S \) in the training data, enabling the model to effectively learn steerable behaviour from focus instructions.

\subsection{Evaluation}
\label{sec:evaluation metrics}
\textbf{Generation Settings.} We generate responses from models using constrained beam-decoding \citep{anderson2017guided} with $4$ beams. This ensures that the answer labels for each classification task that we investigate appear in the model's output. We limit the maximum number of newly generated tokens to be $5$ to stop any unnecessary text given after the model's initial classification prediction.

\textbf{Computing the Focus Accuracy Metric.} We report the focus accuracy $\focusaccuracy$ of generations when evaluating models. As we are guaranteed to include the task labels within the model's response through constrained decoding, and have reduced the maximum number of tokens that a model can generate at inference-time, we simply check to see if the focus label, $\focuslabel$, is within the model's response or not in order to determine if the model's response is correct or not.

\subsection{Dataset Sizes}
The sizes of each of the splits of the SS, SMNLI and BBQ datasets are given in \Cref{tab:dataset split sizes}.
\begin{table}[ht!]
\centering
\caption{\textbf{Dataset Sizes.} Dataset split sizes for SS, SMNLI and BBQ datasets. \\}
\label{tab:dataset split sizes}
\begin{tabular}{clccc}
\toprule
& \textbf{SS} & \textbf{SMNLI} & \textbf{BBQ} \\
\midrule
Training & 5296 & 7200 & 16700 \\
Validation & 1324 &1800 & 1590 \\
Test  & 1818 & 900  & 2352 \\
\bottomrule
\end{tabular}
\end{table}

\subsection{Complete Set of Baselines}
\label{sec:full-baselines}
In addition to the baselines that we present in the main paper, namely few-shot and $\text{SFT}(y_{\text{focus}})$, we include two additional baselines to further supplement these results. We give the complete list of baselines below:

\textbf{Zero-Shot Baseline.} We include a zero-shot inference baseline using the original pre-trained models without additional fine-tuning on any dataset. No in-context examples are used at inference time. The model is tested on the full set of focus instructions prompts detailed in \cref{eqn:focus_instructions}.

\textbf{Few-Shot Baseline.} This second baseline compares FIT training to few-shot inference, using the original pre-trained models without additional fine-tuning on our spurious datasets. Specifically, we use 4 in-context examples across all datasets. For the in-context examples, we concatenate multiple examples one after the other. Each in-context example contains the same focus instruction as the test example for which they serve as context. The model is tested on the full set of focus instructions prompts detailed in \cref{eqn:focus_instructions}.

\textbf{SFT($y_{\text{focus}}$) Baseline.} We implement an SFT baseline that follows the same training procedure as FIT, except during training, we exclude any focus instructions from the input prompts while still training on the focus labels. This provides a fair comparison with FIT, as models are trained on the same input text-label pairs. The rest of the training setup, including hyperparameters and early stopping, remains identical to the FIT training setup. The model is tested on the full set of focus instructions prompts detailed in \cref{eqn:focus_instructions}.

\textbf{SFT($y$) Baseline.} We implement a vanilla SFT baseline that simply trains a model using SFT on inputs and their ground truth labels (as opposed to focus labels in the SFT($y_{\text{focus}}$) baseline). During training, only standard IT prompts are used corresponding to the default prompt $\emptyset$, with no additional focus instructions included. The rest of the training setup, including hyperparameters and early stopping, remains identical to the FIT training setup. The model is tested on the full set of focus instructions prompts detailed in \cref{eqn:focus_instructions}.

We give the full set of results for all datasets and models across the complete set of baselines listed above in \cref{fig:ss-full-baseline-comparison}, \cref{fig:smnli-full-baseline-comparison}, and \cref{fig:bbq-full-baseline-comparison}.

\newpage
\section{Spurious Sentiment (SS) Dataset}
\label{sec:ss dataset description}
We take a pre-existing dataset, in this case SST-5 \citep{socher-etal-2013-recursive}, and modify it in order to include a known spurious feature and create a spurious binary sentiment analysis dataset that we call the \textit{spurious sentiment (SS) dataset}.

\subsection{Data-generating process (DGP)}
We frame our DGP using a graphical model to describe the synthetic dataset that we create. We follow a similar model to that described in \citep{arjovsky2019invariant}, specifically the model used for generating their coloured MNIST dataset. We use the following variables within our graphical model:
\begin{itemize}[itemsep=0.0em, topsep=0pt]
    \item $C$ - true underlying sentiment, the core feature within this task, sampled from the original dataset.
    \item $\tilde{S}$ - proposed spurious feature sample, here this is the presence of the keywords ``Pineapple'' or ``Bayesian''. We represent this as a binary categorical variable with $\text{Val}(\tilde{S}) =\{\text{Pineapple}, \text{Bayesian}\} $. We note that, this restricts us to consider only one keyword appearing in a text at any given time.
    \item $S$ - the final included spurious feature that is naturally inserted using a LLM into the final SS dataset example $X$. The feature $S$ is a randomly flipped version of the proposed spurious feature $\tilde{S}$. So here $\text{Val}(S) =\{\text{Pineapple}, \text{Bayesian}\} $ also.
    \item $\tilde{X}$ - is a sampled example from the original SST-5 dataset.
    \item $X$ - original example $\tilde{X}$ but naturally augmented to include the spurious feature $S$, without changing the underlying sentiment of the example.
    \item $Y$ - final label for element $X$.
\end{itemize}

The graphical model describing the DGP of the SS dataset is given in \Cref{fig:dgp-ss}. This admits a functional representation in the form:
\begin{align} C & = f_C (U_{C});\\ \tilde{X} &= f(C, U_{\tilde{X}});\\ \tilde{S} &= f_{\tilde{S}}(C, U_{\tilde{S}});\\  S &= f_S(\tilde{S}, U_S); \\ X &= f_X(\tilde{X}, S, U_X); \\  Y &= f_Y(C, U_Y). \end{align}
where $U_{(\cdot)}$ are variables introducing sources of randomness into the generating process. More explicitly, we consider the following set of equations, where $\mathcal{D}$ denotes the underlying dataset that we are manipulating:
\begin{align}
& \label{eqn: sentiment dist proof}C \sim \text{Ber}(\rho_C), \text{ where } \rho_C = \rho_C(\mathcal{D}); \\
&\tilde{X} \sim p_{\mathcal{D}}(\cdot | C)\; ;\\
&\tilde{S} = \begin{cases}
    \textit{Pineapple} & \text{if } C = 0\\
    \textit{Bayesian} & \text{if } C = 1
\end{cases};   \\
&\label{eqn: ss spurious correlation def} U_{S} \sim \text{Ber}(\spuriouscorrelation);\\
&S =  \begin{cases}
    \tilde{S} & \text{if } U_S = 1\\
    \text{Val}(\tilde{S}) \setminus \tilde{S} & \text{if } U_S = 0
\end{cases}; \\
&\label{eqn: insertion of spurious feature} X  = 
\text{LLM}(\tilde{X},S) ; \\
&Y = C,
\end{align}
Here, $\text{Ber}$ denotes a Bernoulli distribution, and the function $\text{LLM}$ denotes a LLM that naturally injects a spurious feature $S$ into example $\tilde{X}$ without altering its underlying sentiment. The variable $\rho_C$ governs the distribution of sentiment labels in the original binarised SST-5 dataset. Moreover, $p_{\mathcal{D}}( \cdot | C)$ denotes the conditional dataset distribution of the different input texts given $C$ (here we assume that we are just uniformly sampling text with the given sentiment $C$).

\begin{figure}[htb]
  \centering
  \begin{subfigure}[b]{0.45\textwidth}
    \centering
    \includegraphics[width=0.5\linewidth]{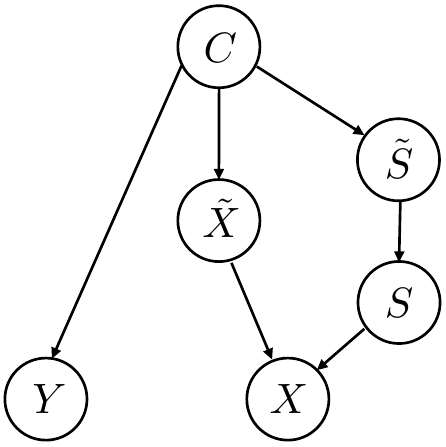}
    \caption{\textbf{SS DGP}. Graphical model showing the data generating process for modifying examples from the SS dataset to introduce a new spurious keyword feature, $S$.}
    \label{fig:dgp-ss}
  \end{subfigure}
  \hfill
  \begin{subfigure}[b]{0.45\textwidth}
    \centering
    \includegraphics[width=0.5\linewidth]{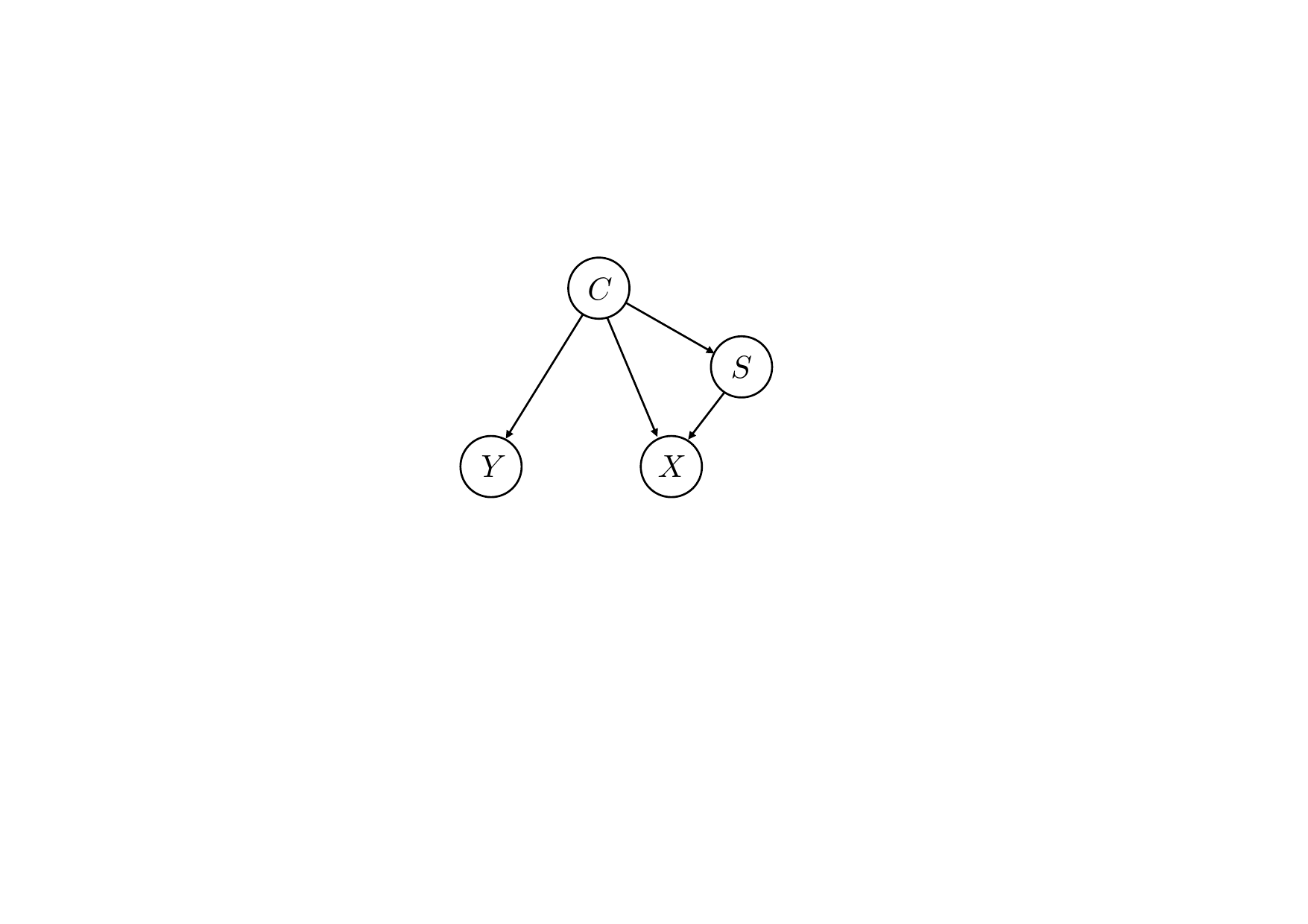}
    \caption{\textbf{SS Causal Graph.} Causal graph showing showing the spurious correlation present between the keyword feature $S$ and the label $Y$ for examples in the SS dataset, induced through the described data augmentation process.}
    \label{fig:ss-scm}
  \end{subfigure}
  \caption{\textbf{SS DGP and Causal Graph.} The DGP and associated causal graph describing the generation of dataset examples and showing the causal structure within examples. }
  \label{fig:ss causal scm}
\end{figure}

Through the above DGP, we introduce a new spurious feature within the dataset, 
$S$. Recalling that $S = \text{Pineapple}$ and $S=\text{Bayesian} $ correspond to the insertion of the keywords \text{Pineapple} and \text{Bayesian} respectively, we introduce the following spurious correlations between the final included feature values of $S$ and label $Y$:
\begin{enumerate}[label=(\alph*)]
    \item The presence of the word \text{Pineapple} in the text $X$, that is $S= \text{Pineapple}$, is spuriously correlated with the label $0$ (negative sentiment). Therefore, the spurious label associated with $S= \text{Pineapple}$ is given as $y_{\text{Pineapple}} = 0$.
    \item The presence of the word \text{Bayesian} in the text $X$, that is $S=\text{Bayesian}$, is spuriously correlated with the label $1$ (positive sentiment). Therefore, the spurious label associated with $S=\text{Bayesian}$ is given as $y_{\text{Bayesian}} = 1$.
\end{enumerate}
The sentiment feature $C$ still remains core within the augmented SS dataset, fully predicting the label $Y$ for each dataset example.

\textbf{Causal Graph from this DGP.}  The above DGP, through the introduction of spurious feature $S$, induces a causal graph that describes the spurious correlation between spurious feature $S$ and the label $Y$ in terms of the additional variables $X$ and $C$ only. The causal graph, shown in \Cref{fig:ss-scm}, is similar to the style-content decomposition described in \citep{kaddour2022causal}.

\textbf{Showing that $\spuriouscorrelation$ Corresponds to the Predictivity of $S$.} We now prove that $\rho_{\text{spurious}}$ gives the co-occurrence rate/predictivity between the label $Y$ and the spurious feature $S$, and is well-defined notation in the sense that it corresponds to \Cref{def:spurious-correlation} so that $\spuriouscorrelation = \mathbb{P}(Y = y_s | S = s)$, where $y_s$ is the label that spurious feature value $s$ is spuriously correlated with. Note that this will hold constant irrespective of the feature value of $S$. We begin with the following proposition.

\begin{proposition}
\label{prop:prop-1}
From the DGP described above. we have that 
\begin{equation}
\mathbb{P}(Y = y_s \mid S = s)
=
\begin{cases}
\displaystyle
\frac{\rho_C\,\spuriouscorrelation}
     {\rho_C\,\spuriouscorrelation + (1 - \rho_C)\,(1- \spuriouscorrelation)}
& \text{ if }s = \text{Bayesian},\\
\displaystyle
\frac{(1 - \rho_C)\,\spuriouscorrelation}
     {(1 - \rho_C)\,\spuriouscorrelation + \rho_C\,(1- \spuriouscorrelation)}
& \text{ if }s = \text{Pineapple}.
\end{cases}
\end{equation}
\end{proposition}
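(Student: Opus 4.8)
The plan is to compute $\mathbb{P}(Y = y_s \mid S = s)$ directly from the definition of conditional probability, $\mathbb{P}(Y = y_s \mid S = s) = \mathbb{P}(Y = y_s, S = s) / \mathbb{P}(S = s)$, using the structural equations in \cref{eqn: sentiment dist proof}--\eqref{eqn: insertion of spurious feature} to trace each event back to the primitive random variables $C$ and $U_S$. The key observation is that $Y = C$ deterministically, and $S$ is a function of $\tilde S$ and $U_S$, where $\tilde S$ in turn is a deterministic function of $C$ (namely $\tilde S = \textit{Pineapple}$ iff $C = 0$, and $\tilde S = \textit{Bayesian}$ iff $C = 1$). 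Hence the joint distribution of $(Y, S)$ is entirely determined by the joint distribution of $(C, U_S)$, and $C \sim \mathrm{Ber}(\rho_C)$ is independent of $U_S \sim \mathrm{Ber}(\spuriouscorrelation)$.

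First I would handle the case $s = \textit{Bayesian}$, whose spurious label is $y_s = 1$. The event $\{S = \textit{Bayesian}\}$ decomposes into two disjoint sub-events: either $\tilde S = \textit{Bayesian}$ (i.e.\ $C = 1$) and $U_S = 1$ (no flip), or $\tilde S = \textit{Pineapple}$ (i.e.\ $C = 0$) and $U_S = 0$ (flip). By independence of $C$ and $U_S$, these have probabilities $\rho_C\,\spuriouscorrelation$ and $(1-\rho_C)(1-\spuriouscorrelation)$ respectively, so $\mathbb{P}(S = \textit{Bayesian}) = \rho_C\spuriouscorrelation + (1-\rho_C)(1-\spuriouscorrelation)$. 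Intersecting further with $\{Y = 1\} = \{C = 1\}$ kills the second sub-event, leaving $\mathbb{P}(Y = 1, S = \textit{Bayesian}) = \rho_C\spuriouscorrelation$. Dividing gives the first branch of the claimed formula. The case $s = \textit{Pineapple}$ with $y_s = 0$ is symmetric: swap the roles of $C=0$ and $C=1$, so $\rho_C$ is replaced by $1-\rho_C$ throughout, yielding the second branch.

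Since the computation is an elementary enumeration over the four atoms of the joint law of $(C, U_S)$, I do not anticipate a genuine obstacle; the only thing requiring care is bookkeeping — correctly identifying, for each target value $s$, which combinations of $C$ and $U_S$ produce it, and then noting that exactly one of those two combinations is consistent with $Y = y_s$. It is worth stating explicitly at the outset that $C \indep U_S$ (both are exogenous noise variables in the DGP of \cref{fig:dgp-ss}), since this independence is what lets us multiply probabilities in the enumeration; everything else is arithmetic.
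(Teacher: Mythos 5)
Your proposal is correct and follows essentially the same route as the paper's proof: the paper computes $\mathbb{P}(S=s)$ by partitioning on the value of $\tilde S$ (equivalently, on $C$) exactly as you do, and then applies Bayes' rule, whose numerator $\mathbb{P}(S=s\mid Y=y_s)\mathbb{P}(Y=y_s)$ is precisely the joint probability $\mathbb{P}(Y=y_s, S=s)=\rho_C\,\spuriouscorrelation$ (resp.\ $(1-\rho_C)\spuriouscorrelation$) that you compute by direct enumeration over the atoms of $(C,U_S)$. The only cosmetic difference is that you invoke the definition of conditional probability rather than Bayes' rule, and you make the independence $C \indep U_S$ explicit where the paper leaves it implicit.
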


\begin{proof}
Using the partition theorem, we have that
\begin{align}
    \mathbb{P}(S=s) &= \underbrace{\mathbb{P}(S = s \mid \tilde{S} = s)}_{ = \mathbb{P}(U_s = 1) } \mathbb{P}(\tilde{S} = s) + \underbrace{\mathbb{P}(S = s \mid \tilde{S} \neq s)}_{= \mathbb{P}(U_s = 0)}\mathbb{P}(\tilde{S} \neq s) \\
    &= \mathbb{P}(U_s = 1) \mathbb{P}(\tilde{S} = s) + \mathbb{P}(U_s = 0)\mathbb{P}(\tilde{S} \neq s) \\
                    &= \spuriouscorrelation \cdot \mathbb{P}(\tilde{S} = s) + (1- \spuriouscorrelation ) \cdot \mathbb{P}(\tilde{S} \neq s)  \\
                        &= \begin{cases}
                            \rho_C\,\spuriouscorrelation + (1 - \rho_C)\,(1- \spuriouscorrelation) & \text{ if }s = \textit{Bayesian},\\ 
                            (1 - \rho_C)\,\spuriouscorrelation + \rho_C\,(1- \spuriouscorrelation) & \text{ if }s = \textit{Pineapple}.
                        \end{cases},
\end{align}
where we have used that $\mathbb{P}(U_s = 1) = \spuriouscorrelation$, and that the value of $\tilde{S}$ depends solely on the value of $C$. 

Using this alongside Bayes' rule gives
\begin{align}
    \mathbb{P}(Y = y_s \mid S = s) &= \frac{\mathbb{P}(S = s \mid Y = y_s )\mathbb{P}(Y = y_s)}{\mathbb{P}(S = s)} \\
    &= \frac{\mathbb{P}(S =s \mid C = y_s)\mathbb{P}(Y = y_s)}{\mathbb{P}(S = s)} \\
&= \frac{\mathbb{P}(S =s \mid \tilde{S} = s)\mathbb{P}(Y = y_s)}{\mathbb{P}(S = s)} \\
    &= \begin{cases}
\displaystyle
\frac{\rho_C\,\spuriouscorrelation}
     {\rho_C\,\spuriouscorrelation + (1 - \rho_C)\,(1- \spuriouscorrelation)}
& \text{ if }s = \text{Bayesian},\\
\displaystyle
\frac{(1 - \rho_C)\, \spuriouscorrelation}
     {(1 - \rho_C)\,\spuriouscorrelation + \rho_C\,(1- \spuriouscorrelation)}
& \text{ if }s = \text{Pineapple}.
\end{cases}
\end{align}
which gives the result.
\end{proof}

\begin{corollary}
\label{corollary:corollary-1}
From the DGP described above, assuming that we have a balanced label distribution, that is $\rho_C = 1/2$, we have that 
\begin{equation}
    \mathbb{P}(Y = y_s \mid S = s) = \spuriouscorrelation \; .
\end{equation}
for all spurious feature values $s$.
\end{corollary}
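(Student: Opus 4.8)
The plan is simply to substitute $\rho_C = 1/2$ into the two-case formula established in \Cref{prop:prop-1} and observe that both branches collapse to $\spuriouscorrelation$. This is a routine algebraic simplification, so the ``main obstacle'' is essentially just bookkeeping rather than any conceptual difficulty.

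First I would invoke \Cref{prop:prop-1}, which gives $\mathbb{P}(Y = y_s \mid S = s)$ as a ratio whose numerator is $\rho_C\,\spuriouscorrelation$ (resp.\ $(1-\rho_C)\,\spuriouscorrelation$) and whose denominator is $\rho_C\,\spuriouscorrelation + (1-\rho_C)(1-\spuriouscorrelation)$ (resp.\ $(1-\rho_C)\,\spuriouscorrelation + \rho_C(1-\spuriouscorrelation)$), according to whether $s = \text{Bayesian}$ or $s = \text{Pineapple}$. Next I would set $\rho_C = 1/2$, so that $1 - \rho_C = 1/2$ as well, which makes the two cases symmetric. In the Bayesian case the expression becomes
\[
\frac{\tfrac12\,\spuriouscorrelation}{\tfrac12\,\spuriouscorrelation + \tfrac12\,(1-\spuriouscorrelation)}
= \frac{\tfrac12\,\spuriouscorrelation}{\tfrac12} = \spuriouscorrelation,
\]
and the Pineapple case is identical after the same cancellation of the common factor $\tfrac12$ in numerator and denominator (using $\spuriouscorrelation + (1 - \spuriouscorrelation) = 1$).

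Finally I would note that this computation did not depend on which value $s$ took, so the identity $\mathbb{P}(Y = y_s \mid S = s) = \spuriouscorrelation$ holds for all spurious feature values $s \in \operatorname{Val}(S)$, which is exactly the claim. The only thing to be slightly careful about is ensuring the denominator is nonzero: since $\spuriouscorrelation \in [0,1]$, the denominator $\tfrac12$ is manifestly positive, so no degeneracy arises. I expect the whole proof to be three or four lines.
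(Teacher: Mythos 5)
Your proposal is correct and follows exactly the same route as the paper: the paper's proof is a one-line appeal to \cref{prop:prop-1} with $\rho_C = 1/2$, and you simply make the resulting algebraic cancellation explicit. Nothing is missing.
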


\begin{proof}
This follows immediately from he previous proposition, \cref{prop:prop-1}, with $\rho_C = 1/2$.
\end{proof}
\vspace{-1em}
Within our experiments on the SS datsaet, we always force the label distribution to be balanced, that is $\rho_C = 1/2$, and assume that within each dataset split, $\spuriouscorrelation$ is the same rate for all spurious feature values. 

\textbf{Data Generation Methodology.} We use $\text{Llama-3.1-70B-Instruct}$ to generate modifications $X$ of original dataset examples $\tilde{X}$ to create new text which include the new keyword features (presence of the keywords ``Bayesian'' and ``Pineapple''. The prompt we use for generation when modifying examples to include spurious features is give as:

\begin{tcolorbox}[colframe=black!75!white,colback=myyellow,  title={Data Augmentation Prompt},  top=0.1mm,
  bottom=0.1mm]
\noindent
\footnotesize
You are a language model designed to modify a piece of text to include an additional feature in a simple, natural way while keeping your output as similar as possible to the original text. \\

\noindent
\textbf{Features}
\begin{itemize}[nosep]
    \item pineapple: Include the word `pineapple'.
    \item Bayesian: Include the word `Bayesian'.
\end{itemize}

\noindent
\textbf{Instructions}
\begin{itemize}[nosep]
    \item Ensure the output is grammatically correct.
    \item Keep the output as similar as possible to the original text.
    \item Make the minimal number of modifications and add the fewest new tokens possible to satisfy the chosen feature.
    \item Do not change the sentiment of the original text.
    \item Do not significantly alter the length of the output.
    \item Incorporate the feature naturally within the original text so that it blends seamlessly with the text's context.
    \item Do not only append additional clauses at the end of the text to include the feature.
    \item Inclusions should be case sensitive, e.g., include `Bayesian' BUT NOT `bayesian'.
\end{itemize}

\noindent
\textbf{Output}
\begin{itemize}[nosep]
    \item Only return the modified text, with no additional explanations or reasoning.
    \item Should strictly follow the feature description and the set of instructions.
    \item Only include the one feature given; the other features SHOULD NOT be included even accidentally.
\end{itemize}
\end{tcolorbox}

\subsection{Independence Conditions During Training for FIT on SS.}
 As specified in \cref{sec:FT optimisation and training settings}, we would like to have that  $Y \indep S$ and $Y_S \indep C$ during training so that models trained via FIT can effectively learn to leverage focus instructions to make predictions based on specified features. Here, $Y_S$ is the spurious label spuriously correlated to the random spurious feature value $S$. The results below give sufficient conditions for these independence conditions to be satisfied with respect to the DGP described above, and consequently form the conditions that we impose on the SS training set for FIT training. We show that the key condition that we must impose in training for $Y \indep S$ and $Y_S \indep C$ is to set $\spuriouscorrelation = 1/2$.
\begin{proposition} \label{prop: showing Y indepent of S}
Assuming the DGP described in above and assuming that $\spuriouscorrelation = 1/2$,  we have that $Y \indep S$.
\end{proposition}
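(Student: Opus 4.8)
The plan is to show that $\mathbb{P}(Y = y \mid S = s)$ does not depend on $s$, which (since $Y$ is binary and $S$ takes finitely many values) is equivalent to $Y \indep S$. By \cref{corollary:corollary-1}, under the assumption $\spuriouscorrelation = 1/2$ combined with $\rho_C = 1/2$ we already have $\mathbb{P}(Y = y_s \mid S = s) = \spuriouscorrelation = 1/2$ for each spurious value $s$; so the heart of the argument is to carry out the same Bayes'-rule computation as in \cref{prop:prop-1} but \emph{without} fixing the label $y$ to be the spuriously-correlated one $y_s$, and check that the answer is $1/2$ regardless of which label $y \in \{0,1\}$ and which value $s \in \{\textit{Pineapple}, \textit{Bayesian}\}$ we pick.

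Concretely, I would first recall from the DGP that $S$ depends on $C$ only through $\tilde S$, which is a deterministic function of $C$ ($\tilde S = \textit{Bayesian}$ iff $C=1$), and that $Y = C$. Hence $\mathbb{P}(S = s \mid Y = y) = \mathbb{P}(S = s \mid \tilde S = \tilde s(y))$ where $\tilde s(y)$ is the proposed keyword determined by $C = y$; by the flip mechanism $U_S \sim \mathrm{Ber}(\spuriouscorrelation)$ this equals $\spuriouscorrelation$ when $\tilde s(y) = s$ and $1 - \spuriouscorrelation$ when $\tilde s(y) \neq s$. Next, from the proof of \cref{prop:prop-1}, $\mathbb{P}(S = s)$ is a convex combination of $\mathbb{P}(\tilde S = s)$ and $\mathbb{P}(\tilde S \neq s)$ with weights $\spuriouscorrelation, 1-\spuriouscorrelation$; with $\rho_C = 1/2$ both $\mathbb{P}(\tilde S = s)$ and $\mathbb{P}(\tilde S \neq s)$ equal $1/2$, so $\mathbb{P}(S = s) = \tfrac12\spuriouscorrelation + \tfrac12(1-\spuriouscorrelation) = 1/2$ for every $s$. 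Then Bayes' rule gives
\[
\mathbb{P}(Y = y \mid S = s) = \frac{\mathbb{P}(S = s \mid Y = y)\,\mathbb{P}(Y = y)}{\mathbb{P}(S = s)} = \frac{\mathbb{P}(S = s \mid Y = y)\cdot \tfrac12}{\tfrac12} = \mathbb{P}(S = s \mid Y = y),
\]
which by the previous step is $\spuriouscorrelation$ or $1-\spuriouscorrelation$ depending on whether $s = \tilde s(y)$; substituting $\spuriouscorrelation = 1/2$ makes this $1/2$ in both cases. Thus $\mathbb{P}(Y = y \mid S = s) = 1/2 = \mathbb{P}(Y = y)$ for all $y$ and all $s$, i.e. $Y \indep S$.

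The main obstacle — really more of a bookkeeping subtlety than a genuine difficulty — is being careful that the independence is over the \emph{joint} distribution of $(Y, S)$ and not merely the "diagonal" statement $\mathbb{P}(Y = y_s \mid S = s) = 1/2$ that \cref{corollary:corollary-1} literally gives; one must check all four $(y, s)$ combinations, which is where the use of $\rho_C = 1/2$ (to make $\tilde S$ uniform, hence $S$ uniform) is doing the real work alongside $\spuriouscorrelation = 1/2$. It is also worth noting explicitly that $X$ and the exogenous noise $U_X$ play no role here since $S \to X$ and $C \to X$ are the only edges into $X$ and we are marginalising $X$ out; I would state this to preempt any worry that the LLM insertion step could reintroduce dependence. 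I would likely present the computation directly rather than invoking \cref{prop:prop-1}, since the needed generalisation (arbitrary $y$ rather than $y_s$) is not quite its statement, though the proof technique is identical.
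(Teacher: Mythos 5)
Your argument is correct and rests on the same key observation as the paper's proof --- namely that $\spuriouscorrelation = 1/2$ makes the flip mechanism erase all dependence of $S$ on $\tilde S$ (so $S$ is uniform no matter what $\tilde S$, hence $C$, hence $Y$ is) --- but the mechanics differ. The paper works with the \emph{joint}: it notes $S \indep \tilde S$ and then factorises $\mathbb{P}(Y=y, S=s)$ over the DAG, pulling the constant $\mathbb{P}(S=s\mid\tilde S=\tilde s)=\mathbb{P}(S=s)$ out of the sum to obtain $\mathbb{P}(S=s)\mathbb{P}(Y=y)$ directly. You instead compute the conditionals $\mathbb{P}(Y=y\mid S=s)$ for all four $(y,s)$ pairs via Bayes' rule, extending the computation of \cref{prop:prop-1} off the ``diagonal'' $y=y_s$; this is a perfectly valid and somewhat more explicit route, and your remark that \cref{corollary:corollary-1} alone does not suffice is a good catch. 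One point worth flagging: you invoke $\rho_C = 1/2$ at two places (to make $\mathbb{P}(\tilde S = s) = 1/2$ and to cancel $\mathbb{P}(Y=y)$ against $\mathbb{P}(S=s)$), but the proposition assumes only $\spuriouscorrelation = 1/2$, and neither invocation is actually needed: the convex combination $\spuriouscorrelation\, p + (1-\spuriouscorrelation)(1-p)$ equals $1/2$ for \emph{any} $p$ once $\spuriouscorrelation = 1/2$, and your final Bayes step already yields $\mathbb{P}(Y=y\mid S=s) = \tfrac{1}{2}\,\mathbb{P}(Y=y)/\tfrac{1}{2} = \mathbb{P}(Y=y)$ without assuming $\mathbb{P}(Y=y)=1/2$. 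Dropping the balanced-label assumption would make your proof match the strength of the paper's, which holds for arbitrary $\rho_C$.
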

\begin{proof}
With $\spuriouscorrelation = 1/2$, we have from the DGP given above, that $\tilde{S} \indep S$. In particular, using the factorisation implied by the directed acyclic graph (DAG) corresponding to the DGP, we see that
\begin{align}
\mathbb{P}(Y = y ,  S = s) &= \sum_{c \in \text{Val(C)}, \tilde{s} \in  \text{Val}(\tilde{S})}\mathbb{P}(Y = y \mid C = c) \mathbb{P}(C = c) \mathbb{P}(\tilde{S} = \tilde{s} \mid C = c) \underbrace{\mathbb{P}(S = s \mid \tilde{S} = \tilde{s})}_{ = \mathbb{P}(S = s) \text{ as } S \indep \tilde{S}} \\
&= \sum_{c \in \text{Val(C)}, \tilde{s} \in  \text{Val}(\tilde{S})} \mathbb{P}(Y = y \mid C = c)\mathbb{P}(C = c) \mathbb{P}(\tilde{S} = \tilde{s} \mid C = c) \mathbb{P}(S = s) \\
&= \mathbb{P}(S = s)  \sum_{c \in \text{Val(C)}, \tilde{s} \in  \text{Val}(\tilde{S})} \mathbb{P}(Y = y \mid C = c) \mathbb{P}(C = c)\mathbb{P}(\tilde{S} = \tilde{s} \mid C = c) \\
&= \mathbb{P}(S = s)  \sum_{c \in \text{Val(C)}} \mathbb{P}(Y = y \mid C = c)\mathbb{P}(C = c) \\ 
&= \mathbb{P}(S = s)  \mathbb{P}(Y = y).
\end{align}
By definition, this shows that we have that $Y \indep S$, as required. 
\end{proof}

\begin{proposition}
Assuming the DGP described above, for $\spuriouscorrelation = 1/2$, then we have that $Y_S \indep C$.
\end{proposition}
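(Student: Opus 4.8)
The plan is to reduce the claim to the simpler independence $S \indep C$ and then verify that directly from the DGP. First I would note that the spurious label $Y_S$ is a \emph{deterministic} function of the realised spurious value $S$: since $y_{\text{Pineapple}} = 0$ and $y_{\text{Bayesian}} = 1$, we have $Y_S = g(S)$ with $g(\text{Pineapple}) = 0$, $g(\text{Bayesian}) = 1$. Independence is preserved under measurable maps of one variable, so if $S \indep C$ then $Y_S = g(S) \indep C$; hence it suffices to establish $S \indep C$.

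To prove $S \indep C$, I would compute the conditional law of $S$ given $C$. Conditioning on $C = c$ fixes $\tilde{S}$ (to $\text{Pineapple}$ when $c = 0$ and to $\text{Bayesian}$ when $c = 1$), and then by the DGP $S$ equals $\tilde{S}$ with probability $\mathbb{P}(U_S = 1) = \spuriouscorrelation$ and equals the other element of $\text{Val}(\tilde{S})$ with probability $1 - \spuriouscorrelation$. Setting $\spuriouscorrelation = 1/2$ gives $\mathbb{P}(S = s \mid C = c) = 1/2$ for every $s \in \text{Val}(S)$ and every $c \in \text{Val}(C)$, so the conditional distribution of $S$ does not depend on $c$, which is exactly $S \indep C$. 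Equivalently, one may reuse the fact established inside the proof of \cref{prop: showing Y indepent of S} that $\spuriouscorrelation = 1/2$ implies $S \indep \tilde{S}$, and observe that $\tilde{S}$ is an invertible deterministic function of the binary variable $C$, so $\sigma(\tilde{S}) = \sigma(C)$ and thus $S \indep \tilde{S}$ is the same as $S \indep C$. Combining either route with the reduction $Y_S = g(S)$ yields $Y_S \indep C$.

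There is no real obstacle here; the only step requiring care is the reduction itself, namely recognising that $Y_S$ carries no information about $C$ beyond what $S$ does, so that the substance of the claim is the (immediate) fact that the symmetric flip $U_S \sim \text{Ber}(1/2)$ destroys the deterministic link from $C$, through $\tilde{S}$, to $S$. A minor point worth spelling out is that $g$ need not be assumed injective for the argument to go through: for any value $v$, summing $\mathbb{P}(S = s \mid C = c)$ over $s \in g^{-1}(v)$ still produces a quantity free of $c$.
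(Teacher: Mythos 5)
Your proposal is correct and matches the paper's proof in its essential step: reducing $Y_S \indep C$ to $S \indep C$ by observing that $Y_S$ is a deterministic function of $S$. The only (minor) difference is how $S \indep C$ is then obtained — you compute $\mathbb{P}(S = s \mid C = c) = 1/2$ directly (or argue via $S \indep \tilde{S}$ and the invertibility of $C \mapsto \tilde{S}$), whereas the paper simply deduces it from the already-proven $Y \indep S$ together with $Y = C$; both routes are valid and yours is self-contained.
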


\begin{proof}
First note that $Y_S$ is a deterministic function of $S$, that is $Y_S = f(S)$ for some function $f: \text{Val(S)} \to \{ 0 ,1 \}$. Therefore, it is sufficient to show that $C \indep S$. However, from the DGP above, we have that $Y = C$. From \Cref{prop: showing Y indepent of S}, we already have that $Y \indep S$, which implies that $C \indep S$, which proves the claim.
\end{proof} 

\subsection{Results}
\Cref{fig:ss-full-baseline-comparison} (a) shows the focus accuracy results of three LLMs on the SS dataset across all of the baseline methods described in \cref{sec:full-baselines} and the FIT method. We see that across all focus instructions and all models, FIT shows significant improvement over the baselines, achieving very high focus accuracy across all focus instruction types and across all test sets with varying predictivity levels.
\begin{table}[h!]
\centering
\footnotesize
\caption{\textbf{Complete Baselines vs. FIT Focus Accuracies on SS ($\uparrow$).} For each method, we report the mean of focus‐accuracy means ($\focusaccuracy$) over the four test splits ($\mathcal{D}_{\text{iid}}$, $\mathcal{D}_{\text{high}}$, $\mathcal{D}_{\text{low}}$ and $\mathcal{D}_{\text{flipped}}$) $\pm$ the between‐split standard deviation of these means (i.e. how performance varies as predictivity changes) across repeats. Boldface indicates the best mean for each focus instruction type and model independently.}
\label{tab:ss full table results}
\resizebox{\textwidth}{!}{%
\begin{tabular}{l l c c c c c c}
\toprule
 & & $\varnothing$ & $\mathrm{focus}(C)$ & $\mathrm{focus}(C)\wedge\mathrm{ignore}(S)$ & $\mathrm{ignore}(S)$ & $\mathrm{focus}(S)$ & $\mathrm{focus}(S)\wedge\mathrm{ignore}(C)$ \\
\midrule
\multirow{5}{*}{\rotatebox{90}{\textbf{Mistral}}} & Zero-shot & $0.886_{\pm0.007}$ & $0.909_{\pm0.004}$ & $0.899_{\pm0.002}$ & $0.871_{\pm0.006}$ & $0.423_{\pm0.199}$ & $0.305_{\pm0.104}$ \\
&  Few-shot & $0.893_{\pm0.003}$ & $0.906_{\pm0.005}$ & $0.904_{\pm0.006}$ & $0.793_{\pm0.036}$ & $0.559_{\pm0.168}$ & $0.634_{\pm0.116}$ \\
&  SFT($y$) & $0.951_{\pm0.005}$ & $0.950_{\pm0.004}$ & $0.952_{\pm0.005}$ & $0.951_{\pm0.004}$ & $0.445_{\pm0.275}$ & $0.447_{\pm0.270}$ \\
&  SFT($y_{\mathrm{focus}}$) & $0.751_{\pm0.126}$ & $0.794_{\pm0.108}$ & $0.903_{\pm0.042}$ & $0.879_{\pm0.054}$ & $0.506_{\pm0.248}$ & $0.519_{\pm0.241}$ \\
&  FIT & \boldmath$\mathbf{0.953_{\pm0.004}}$\unboldmath & \boldmath$\mathbf{0.954_{\pm0.004}}$\unboldmath & \boldmath$\mathbf{0.955_{\pm0.004}}$\unboldmath & \boldmath$\mathbf{0.954_{\pm0.004}}$\unboldmath & \boldmath$\mathbf{0.999_{\pm0.001}}$\unboldmath & \boldmath$\mathbf{0.999_{\pm0.001}}$\unboldmath \\
\midrule
\multirow{5}{*}{\rotatebox{90}{\textbf{Llama}}} & Zero-shot & $0.500_{\pm0.000}$ & $0.500_{\pm0.000}$ & $0.500_{\pm0.000}$ & $0.500_{\pm0.000}$ & $0.506_{\pm0.003}$ & $0.506_{\pm0.002}$ \\
&  Few-shot & $0.674_{\pm0.006}$ & $0.838_{\pm0.008}$ & $0.630_{\pm0.018}$ & $0.508_{\pm0.005}$ & $0.491_{\pm0.047}$ & $0.502_{\pm0.038}$ \\
&  SFT($y$) & \boldmath$\mathbf{0.952_{\pm0.004}}$\unboldmath & \boldmath$\mathbf{0.954_{\pm0.003}}$\unboldmath & \boldmath$\mathbf{0.952_{\pm0.007}}$\unboldmath & \boldmath$\mathbf{0.952_{\pm0.008}}$\unboldmath & $0.445_{\pm0.276}$ & $0.444_{\pm0.272}$ \\
&  SFT($y_{\mathrm{focus}}$) & $0.668_{\pm0.178}$ & $0.686_{\pm0.166}$ & $0.803_{\pm0.095}$ & $0.795_{\pm0.094}$ & $0.606_{\pm0.197}$ & $0.601_{\pm0.193}$ \\
&  FIT & $0.949_{\pm0.002}$ & $0.951_{\pm0.002}$ & $0.952_{\pm0.002}$ & $0.950_{\pm0.002}$ & \boldmath$\mathbf{0.998_{\pm0.001}}$\unboldmath & \boldmath$\mathbf{0.999_{\pm0.001}}$\unboldmath \\
\midrule
\multirow{5}{*}{\rotatebox{90}{\textbf{Vicuna}}} & Zero-shot & $0.420_{\pm0.012}$ & $0.584_{\pm0.007}$ & $0.381_{\pm0.008}$ & $0.355_{\pm0.006}$ & $0.199_{\pm0.105}$ & $0.129_{\pm0.066}$ \\
&  Few-shot & $0.147_{\pm0.010}$ & $0.459_{\pm0.014}$ & $0.533_{\pm0.012}$ & $0.431_{\pm0.010}$ & $0.300_{\pm0.150}$ & $0.300_{\pm0.125}$ \\
&  SFT($y$) & \boldmath$\mathbf{0.955_{\pm0.005}}$\unboldmath & \boldmath$\mathbf{0.956_{\pm0.005}}$\unboldmath & $0.955_{\pm0.004}$ & $0.953_{\pm0.006}$ & $0.446_{\pm0.278}$ & $0.445_{\pm0.277}$ \\
&  SFT($y_{\mathrm{focus}}$) & $0.570_{\pm0.180}$ & $0.602_{\pm0.187}$ & $0.685_{\pm0.138}$ & $0.671_{\pm0.129}$ & $0.676_{\pm0.129}$ & $0.660_{\pm0.118}$ \\
&  FIT & $0.950_{\pm0.003}$ & $0.953_{\pm0.003}$ & \boldmath$\mathbf{0.955_{\pm0.003}}$\unboldmath & \boldmath$\mathbf{0.955_{\pm0.004}}$\unboldmath & \boldmath$\mathbf{0.999_{\pm0.001}}$\unboldmath & \boldmath$\mathbf{0.999_{\pm0.001}}$\unboldmath \\
\bottomrule
\end{tabular}%
}
\end{table}

\begin{figure}[h!]
    \centering
    \includegraphics[width=0.65\linewidth]{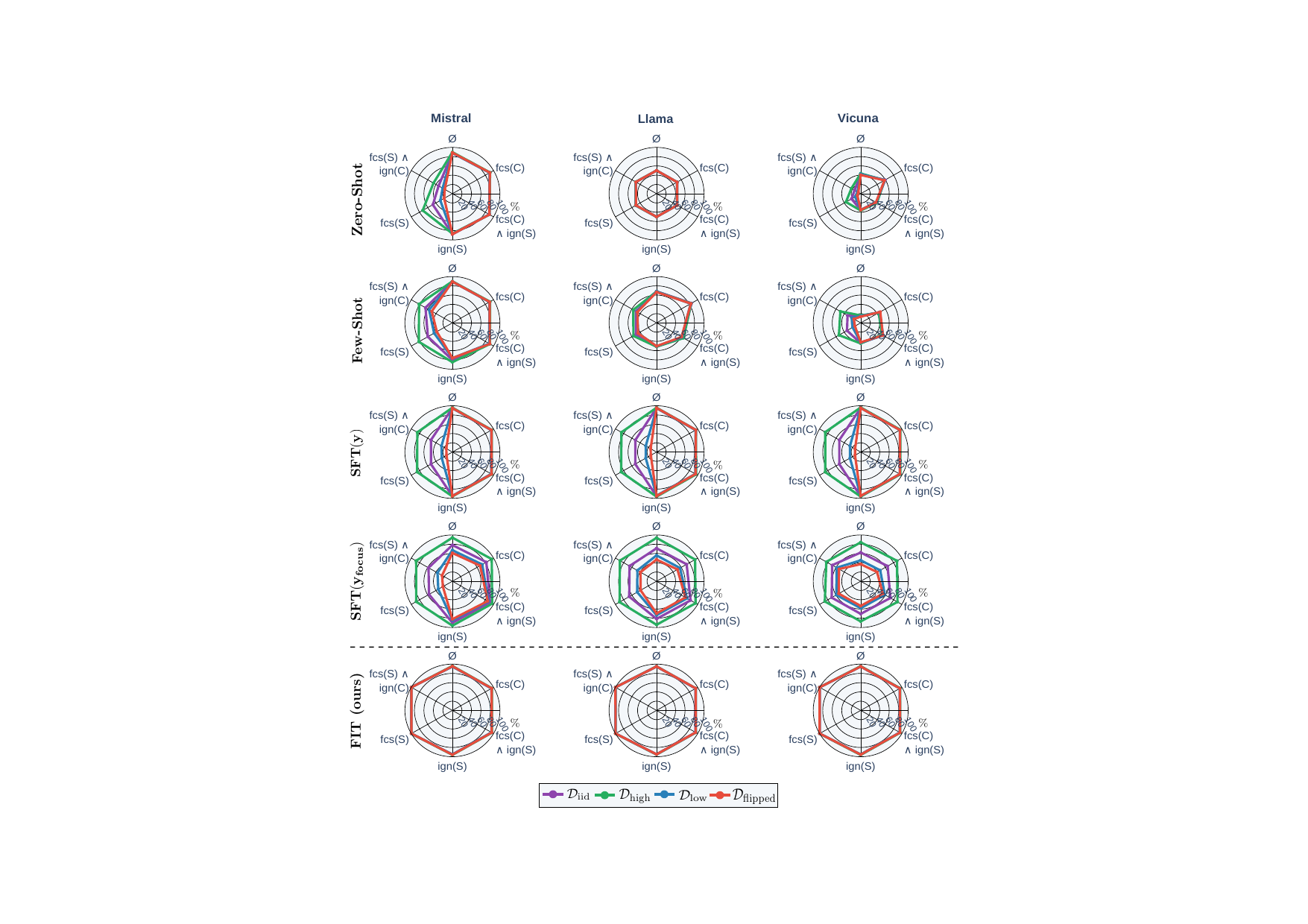}
    \caption{\textbf{Complete Baseline vs FIT Focus Accuracies on SS ($\uparrow$).} Figure giving the mean focus accuracies ($\focusaccuracy$) of the additional baselines compared to the focus accuracy of FIT on the SS dataset. The maximum standard deviations of FIT, $\text{SFT}(y_\text{focus})$, $\text{SFT}(y)$ and few-shot methods across models are 2.17\%, 14.8\%, 0.65\%, and 2.83\% respectively. fcs = focus, ign = ignore.}
    \label{fig:ss-full-baseline-comparison}
\end{figure}
\newpage

\begin{tcolorbox}[colframe=black!75!white,colback=mygreen,  top=0.1mm,
  bottom=0.1mm, before upper=\setlength{\baselineskip}{12pt}]
\noindent
\textbf{\textit{Key Takeaways}.} High focus accuracy on SS indicates that FIT successfully steers model responses based on the feature which it is instructed to focus or to not focus on. 
\end{tcolorbox}
\newpage
\newpage
\section{Spurious NLI dataset (SMNLI)}
\label{sec:S-MNLI dataset description}
We generate a tertiary NLI dataset, SMNLI, with a known spurious feature. We do this subsampling from the original MNLI dataset \cite{williams2018broad}. This is a NLI dataset with three labels: entailment (0), neutral (1) and contradiction (2), where data is sampled from 6 underlying categories or genres (government, travel, and fiction, facetoface, nineeleven and verbatim). We aim to induce spurious correlations between the underlying genres and labels.

\subsection{Data-Generating Process (DGP).}
We consider a graphical model to describe the DGP of examples within the SMNLI dataset. We use the following variables within our DGP:
\begin{itemize}[itemsep=0.1em, topsep=0pt]
    \item $C$ - NLI relationship between a premise and hypothesis pair, the core feature within this task, sampled from the original MNLI dataset.
    \item $X$ - example premise and hypothesis from the MNLI dataset.
    \item $S$ - spurious feature present in the example $X$, here this is the genre of the premise and hypothesis texts. This is a categorical variable.
    \item $Y$ - final label for example $X$.
\end{itemize}

\begin{figure}[h!]
    \centering
    \includegraphics[width=0.18\linewidth]{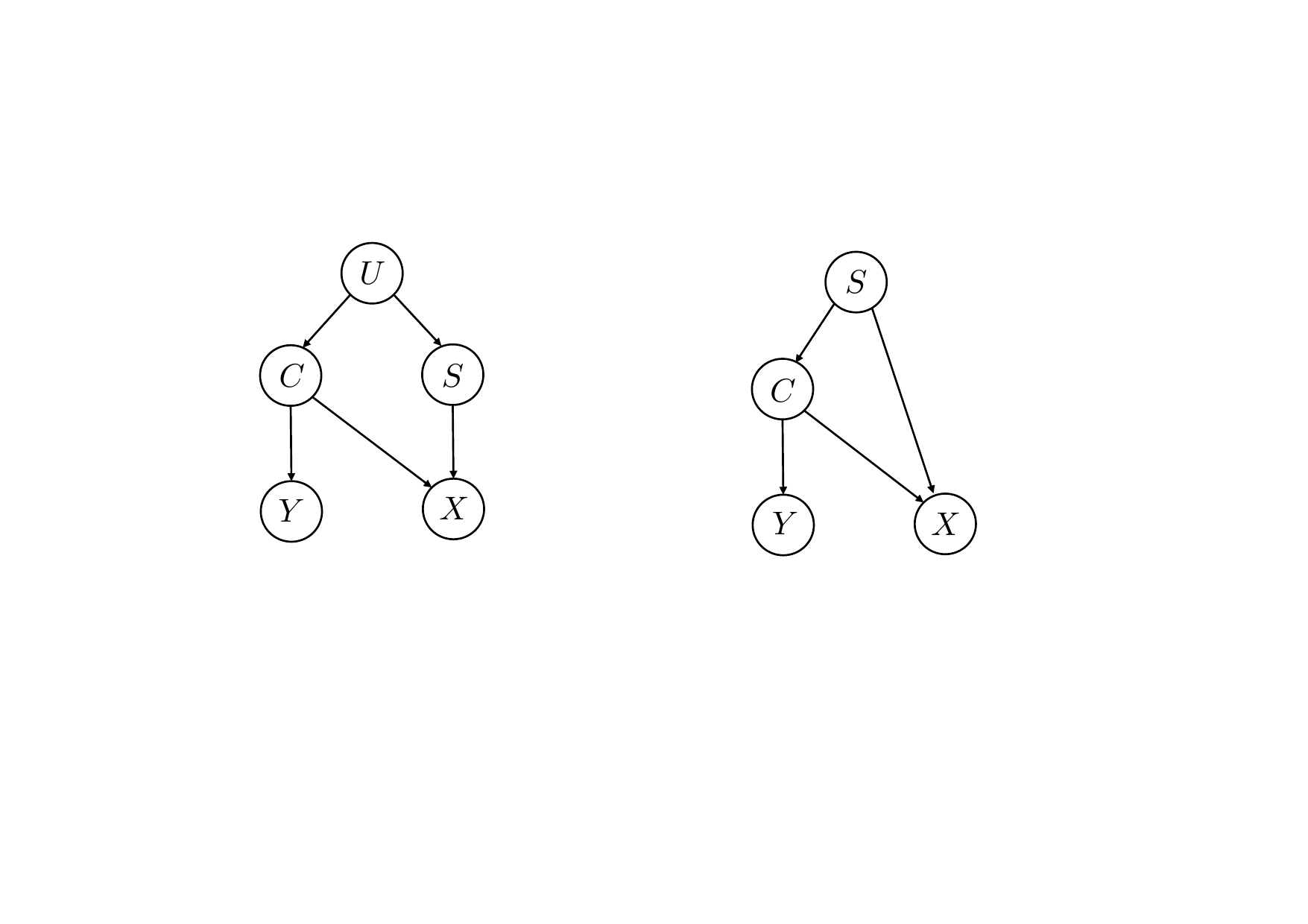}
    \caption{\textbf{SMNLI DGP}. Graphical model showing the data generating process for subsampling examples from the MNLI dataset to introduce a new spurious keyword feature $S$.}
    \label{fig:dgp-snli}
\end{figure}

The graphical model described by the DGP for producing the SMNLI dataset is given in \Cref{fig:dgp-snli}. Once again, this graphical model can be represented functionally as:
\begin{align} S & = f_S(U_S);\\ 
C &= f_C(S, U_C);\\  
X &= f_X(C, E,  U_X); \\ 
Y &= f_Y(C, U_Y). \end{align}
More specifically, given the original dataset $\mathcal{D}$ that we are sub-sampling from, the functions that we use within the DGP for the SMNLI dataset are given by:
\begin{align}
S &\sim \mathcal{U}(\mathcal{S}), \\ 
\label{eqn: spurious correlation in sce smnli}U_C & \sim \text{Ber} (\rho_\text{spurious}); \\
\label{eqn: spurous correlation used in proof} C  &= U_C \;  y_S + (1 - U_C ) \; \mathcal{U}(\text{Val}(C) \setminus \{ y_S \} )\\
X  &\ \sim p_{\mathcal{D}}(\cdot | C, S)\\
Y &= C.
\end{align}
Here, $\mathcal{U}(\mathcal{S})$ denotes a uniform distribution over the set of genres $\mathcal{S}$, and we define $\text{Val}(C) = \{ 0 ,1,2\} $ corresponding to the possible NLI labels for this task. Furthermore, we define $y_S$ to be the NLI label that a particular value of $S$ is spuriously correlated with by design. Moreover, $p_{\mathcal{D}}(\cdot | C, S)$ is the conditional distribution over the dataset examples (premise-hypothesis pairs) that have NLI relationship $C$ and genre $S$. 

We restrict the genres that we sample from to $S \in \{ \text{government} , \text{fiction}, \text{travel}\}$, a subset of the genres of the training set. When creating a distribution shifted test set, we restrict the genres to $S \in \{ \text{facetoface},\text{nineeleven}, \text{verbatim} \}$. The specific spurious relationships between genre values $s$ and their associated spurious labels $y_s$ are chosen to be: $y_{\text{slate}} = 0$; $y_{\text{government}} = 2$; $y_{\text{fiction}} = 1$; $y_{\textit{travel}} = 0$; $y_{\text{facetoface}} = 2$; $y_{\text{nineeleven}} = 0$; $y_{\text{verbatim}} = 1$. In this way we generate spurious correlations within the dataset through sub-sampling to induce spurious correlations between $S$ and $Y$. 

We show that the notion of $\spuriouscorrelation$ in \cref{eqn: spurious correlation in sce smnli} aligns with the notation in \Cref{def:spurious-correlation} and that this does not depend on the spurious feature value of $S$.

\begin{proposition}
\label{prop:prop-2}
From the DGP described above, we have that 
\begin{equation}
    \mathbb{P}(Y = y_s \mid S = s) = \spuriouscorrelation  \; .
\end{equation}
for all spurious feature values $s$.    
\end{proposition}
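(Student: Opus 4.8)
The plan is to compute $\mathbb{P}(Y = y_s \mid S = s)$ directly from the functional form of the DGP given in \cref{eqn: spurious correlation in sce smnli,eqn: spurous correlation used in proof}, and show the dependence on $s$ cancels. First I would recall that $Y = C$ deterministically, so that $\mathbb{P}(Y = y_s \mid S = s) = \mathbb{P}(C = y_s \mid S = s)$. Then I would expand $C$ using its structural equation $C = U_C\, y_S + (1 - U_C)\,\mathcal{U}(\operatorname{Val}(C) \setminus \{y_S\})$, conditioning on the two possible values of the Bernoulli random variable $U_C \sim \text{Ber}(\spuriouscorrelation)$.

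The key step is the law of total probability over $U_C$:
\begin{align*}
\mathbb{P}(C = y_s \mid S = s)
&= \mathbb{P}(C = y_s \mid S = s, U_C = 1)\,\mathbb{P}(U_C = 1) \\
&\quad + \mathbb{P}(C = y_s \mid S = s, U_C = 0)\,\mathbb{P}(U_C = 0).
\end{align*}
On the event $\{U_C = 1\}$, the structural equation forces $C = y_S = y_s$ exactly (since $S = s$ fixes $y_S = y_s$), so the first conditional probability is $1$. On the event $\{U_C = 0\}$, $C$ is drawn uniformly from $\operatorname{Val}(C) \setminus \{y_s\}$, a set that does \emph{not} contain $y_s$, so the second conditional probability is $0$. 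Using $\mathbb{P}(U_C = 1) = \spuriouscorrelation$, this collapses to $\mathbb{P}(Y = y_s \mid S = s) = \spuriouscorrelation$, independent of $s$, as claimed.

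The only subtlety — and the main thing to be careful about — is verifying that $U_C$ is independent of $S$ (so that conditioning on $S = s$ does not change the law of $U_C$), which is immediate from the DGP since $U_C$ is an exogenous noise variable, and that the map $s \mapsto y_s$ is well-defined so that conditioning on $S = s$ pins down the target label inside the structural equation for $C$. Neither of these is a real obstacle; this is essentially a one-line total-probability computation once the structural equations are unpacked, and the proof should be correspondingly short.
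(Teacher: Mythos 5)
Your proposal is correct and follows the same route as the paper, which simply observes that the claim is immediate from the structural equation $C = U_C\, y_S + (1-U_C)\,\mathcal{U}(\operatorname{Val}(C)\setminus\{y_S\})$; you merely spell out the total-probability computation over $U_C$ that the paper leaves implicit. No gaps.
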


\begin{proof}
This is clear considering \cref{eqn: spurous correlation used in proof}, where $\spuriouscorrelation$ influences the chance that we sample $y_s$, i.e. the label spuriously correlated with feature value $s$.
\end{proof}

\subsection{Independence Conditions During Training for FIT on SMNLI.}
As specified in \cref{sec:FT optimisation and training settings}, we would like to have that  $Y \indep S$ and $Y_S \indep C$ during training so that models trained via FIT can effectively learn to leverage focus instructions to make predictions based on specified features, where, again, $Y_S$ is the label spuriously associated to spurious feature value $S$. The results below give sufficient conditions for this to occur with respect to the DGP described in \cref{fig:dgp-snli}. This boils down to setting $\spuriouscorrelation = 1/3$ during training.

\begin{proposition}
\label{prop:prop-3}
Assuming the DGP described above and that $\spuriouscorrelation = 1/3$, we have that $Y \indep S$.
\end{proposition}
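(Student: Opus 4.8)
The plan is to show that the joint distribution of $(Y,S)$ factors as $\mathbb{P}(Y=y,S=s)=\mathbb{P}(Y=y)\,\mathbb{P}(S=s)$ by direct computation from the DGP, using the factorisation implied by the DAG in \cref{fig:dgp-snli}. First I would write $\mathbb{P}(Y=y, S=s) = \mathbb{P}(Y=y \mid S=s)\,\mathbb{P}(S=s)$ and note that, since $Y = C$ deterministically, it suffices to compute $\mathbb{P}(C=y \mid S=s)$. By \cref{eqn: spurous correlation used in proof}, conditionally on $S=s$ the core label $C$ equals the spurious label $y_s$ with probability $\spuriouscorrelation$ and is uniform over the other two labels in $\text{Val}(C)\setminus\{y_s\}$ with total probability $1-\spuriouscorrelation$.

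The key step is to plug in $\spuriouscorrelation = 1/3$: then $\mathbb{P}(C=y_s \mid S=s) = 1/3$ and $\mathbb{P}(C=y' \mid S=s) = (1-1/3)/2 = 1/3$ for each of the two other labels $y'$. Hence $\mathbb{P}(C = y \mid S = s) = 1/3$ for every $y \in \text{Val}(C)$, independent of $s$; that is, $C$ (and therefore $Y$) is uniform over its three values regardless of the conditioning value of $S$. Since $\mathbb{P}(Y=y\mid S=s)$ does not depend on $s$, it must equal the marginal $\mathbb{P}(Y=y)$, which gives $\mathbb{P}(Y=y,S=s) = \mathbb{P}(Y=y)\,\mathbb{P}(S=s)$ for all $(y,s)$, i.e. $Y \indep S$. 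This mirrors the structure of \cref{prop: showing Y indepent of S} for the SS dataset, but is in fact simpler here because the uniform-tie-breaking construction in the SMNLI DGP makes the conditional distribution of $C$ given $S$ exactly uniform at $\spuriouscorrelation = 1/N$ with $N = 3$, so no appeal to a balanced marginal label distribution is separately needed.

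The main obstacle — really a subtlety rather than a difficulty — is making sure the tie-breaking term $\mathcal{U}(\text{Val}(C)\setminus\{y_S\})$ is handled correctly: one must verify that every non-spurious label receives mass exactly $(1-\spuriouscorrelation)/(N-1)$ and check that this equals $\spuriouscorrelation$ precisely when $\spuriouscorrelation = 1/N$. With $N = 3$ this is the arithmetic identity $(1 - 1/3)/2 = 1/3$. Once this is in place, the independence claim follows immediately and uniformly in $s$, and I would also remark (as the proposition statement already anticipates) that $\mathbb{P}(Y = y_s \mid S = s) = \spuriouscorrelation$ continues to hold by \cref{prop:prop-2}, consistent with the general predictivity interpretation.
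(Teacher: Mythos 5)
Your proposal is correct and follows essentially the same route as the paper's own proof: both reduce to showing $C \indep S$ via $Y=C$, compute $\mathbb{P}(C=y_s\mid S=s)=\spuriouscorrelation=1/3$ and $\mathbb{P}(C=c\mid S=s)=\tfrac{2}{3}\cdot\tfrac{1}{2}=\tfrac{1}{3}$ for the remaining labels, and conclude that the conditional law of $C$ given $S=s$ is uniform and hence independent of $s$. No gaps; your explicit remark that the uniform tie-breaking makes a balanced-marginal assumption unnecessary is a fair observation but does not change the argument.
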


\begin{proof}
Note that since $Y = C$ in the DGP above, it suffices to show that $C \indep S$. We have that for a given $S = s$, that $\mathbb{P}(C = y_s | S =s) = \spuriouscorrelation = 1/3$ from \cref{prop:prop-2}. Moreover, let $\tilde{C} \sim \mathcal{U}(\text{Val}(C) \setminus \{ y_S \} )$ denote the random variable sampled from a uniform distribution over the remaining possible values of $C$ excluding $y_s$. Then for either value of $c \in \text{Val}{(C)} \setminus \{ y_s\} $ we have that 
\begin{align}
    \mathbb{P}(C = c | S =s) &= \mathbb{P}(U_C = 0)\mathbb{P}( \tilde{C} = c) \\ 
    & = \frac{2}{3} \cdot \frac{1}{2} \\ 
    &= \frac{1}{3}.
\end{align}
Therefore, we have that $C \sim \mathcal{U}(\text{Val}(C))$. In particular, this then gives that $C \indep S$, which in turn implies that $Y \indep S$.
\end{proof}

\begin{proposition}
Assuming the DGP described above and that $\spuriouscorrelation = 1/3$, then we have that $Y_S \indep C$.
\end{proposition}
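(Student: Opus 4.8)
The plan is to reduce the claim to the independence $C \indep S$, which has already been established in the preceding proposition, and then to invoke the elementary fact that a deterministic function of one of two independent random variables remains independent of the other.

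First I would record the structural observation that, by construction, $Y_S$ is a deterministic function of the spurious feature: writing $f:\operatorname{Val}(S)\to\operatorname{Val}(C)$ for the fixed map $s\mapsto y_s$ used in the DGP (e.g. $f(\text{government})=2$, $f(\text{fiction})=1$, $f(\text{travel})=0$), we have $Y_S = f(S)$. Hence it is enough to show $f(S)\indep C$, and for this it is enough to show $S\indep C$, since independence is preserved under (finite, hence trivially measurable) maps applied to either coordinate.

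Next I would pull in the already-proved result. By \cref{prop:prop-3}, taking $\spuriouscorrelation = 1/3$ gives $Y\indep S$; since $Y=C$ in the DGP, this is exactly $C\indep S$. Combining with the previous paragraph yields $Y_S = f(S)\indep C$, as required. If one prefers not to appeal to the ``function of an independent variable'' principle, the same conclusion follows by a one-line direct computation: for $a,c\in\operatorname{Val}(C)$,
\[
  \mathbb{P}(Y_S = a,\, C = c) = \sum_{s:\, y_s = a} \mathbb{P}(S=s,\,C=c) = \sum_{s:\,y_s=a}\mathbb{P}(S=s)\,\mathbb{P}(C=c) = \mathbb{P}(Y_S=a)\,\mathbb{P}(C=c),
\]
where the middle equality uses $C\indep S$ from \cref{prop:prop-3}.

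I do not anticipate a genuine obstacle: the statement is essentially a corollary of \cref{prop:prop-3}, and the only mild subtlety is making explicit that $Y_S$ carries no information beyond $S$, so that independence transfers. The substantive work — showing that $\spuriouscorrelation = 1/3$ forces $C$ to be uniform on $\operatorname{Val}(C)$ and independent of $S$ — was already carried out in the preceding proposition.
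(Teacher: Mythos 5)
Your argument is correct and matches the paper's own proof: both reduce the claim to $C\indep S$ by observing that $Y_S$ is a deterministic function of $S$, and both then cite the preceding proposition (whose proof establishes $C\indep S$ en route to $Y\indep S$). The explicit summation you add is a harmless elaboration of the same step.
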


\begin{proof}
Note that $Y_S$ is a deterministic function of $S$, so that $Y_S = f(S)$ for some function $f: Val(S) \to \text{Val(Y)}$. Therefore, it suffices to show that $S \indep C$. The proof of this is given in \cref{prop:prop-3}. 
\end{proof}

\subsection{Results.}
We present full results comparing FIT on SMNLI against all of the baselines described in \cref{sec:full-baselines} in \cref{fig:smnli-full-baseline-comparison}, \cref{tab:smnli full table results} for completeness. 
\begin{figure}[h!]
    \centering
    \includegraphics[width=0.65\linewidth]{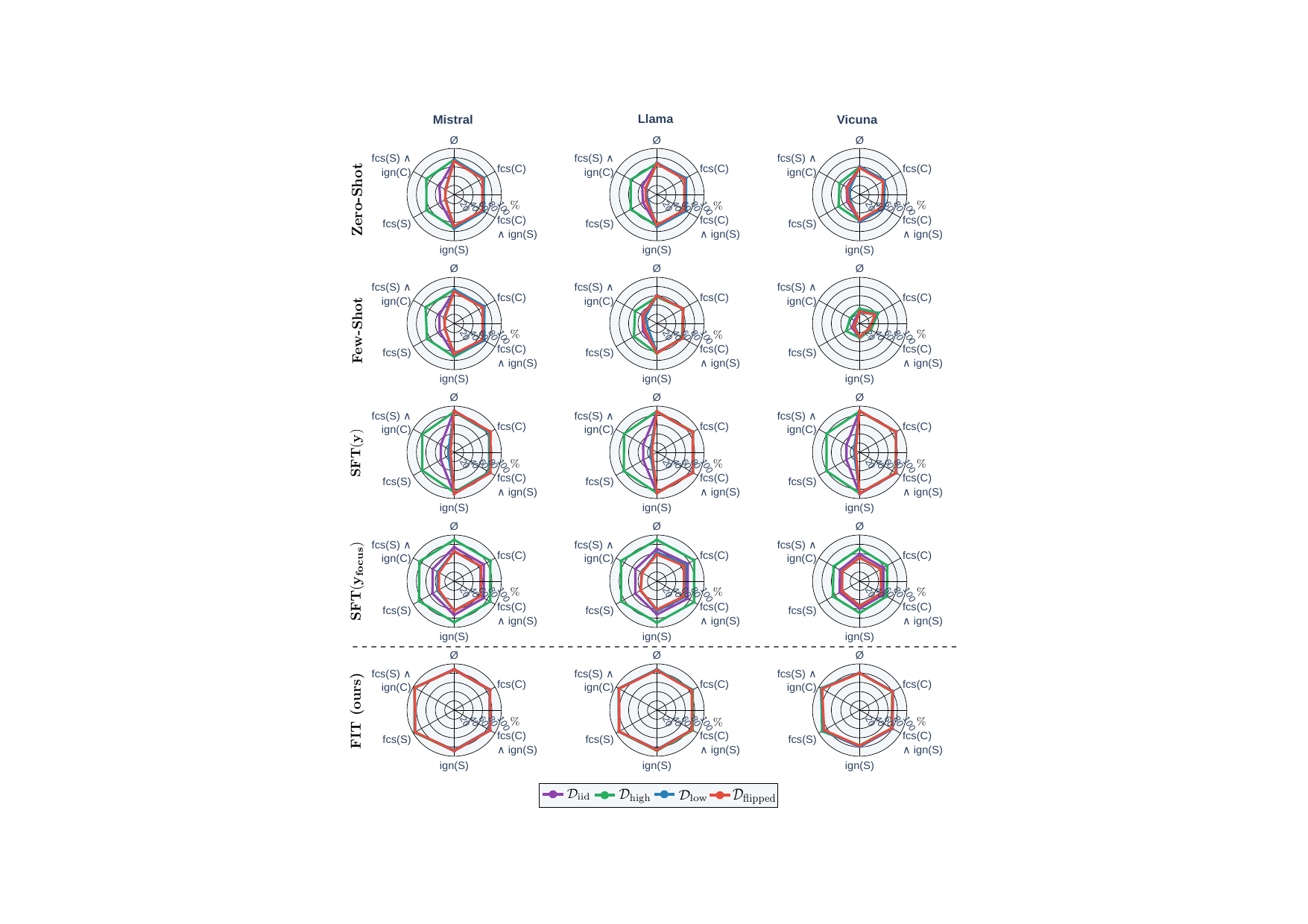}
    \caption{\textbf{Complete Baseline vs FIT Focus Accuracies on SMNLI ($\uparrow$).} Figure giving focus accuracies ($\focusaccuracy$) of the additional baselines compared to the focus accuracy of FIT on the SMNLI dataset. The maximum standard deviations of FIT, $\text{SFT}(y_\text{focus})$, $\text{SFT}(y)$ and few-shot methods across models are 6.47\%, 7.98\%, 1.15\%, and 0.500\% respectively. fcs = focus, ign = ignore.}
    \label{fig:smnli-full-baseline-comparison}
\end{figure}

\begin{table}[h!]
\footnotesize
\centering
\caption{\textbf{Complete Baselines vs. FIT Focus Accuracies on SMNLI ($\uparrow$).} For each method, we report the mean of focus‐accuracy means ($\focusaccuracy$) over the four test splits ($\mathcal{D}_{\text{iid}}$, $\mathcal{D}_{\text{high}}$, $\mathcal{D}_{\text{low}}$ and $\mathcal{D}_{\text{flipped}}$) $\pm$ the between‐split standard deviation of these means (i.e. how performance varies as predictivity changes) across repeats. Boldface indicates the best mean for each focus instruction type and model independently.}
\label{tab:smnli full table results}
\resizebox{\textwidth}{!}{%
\begin{tabular}{l l c c c c c c}
\toprule
 & & $\varnothing$ & $\mathrm{focus}(C)$ & $\mathrm{focus}(C)\wedge\mathrm{ignore}(S)$ & $\mathrm{ignore}(S)$ & $\mathrm{focus}(S)$ & $\mathrm{focus}(S)\wedge\mathrm{ignore}(C)$ \\
\midrule
\multirow{5}{*}{\rotatebox{90}{\textbf{Mistral}}} & Zero-shot & $0.742_{\pm0.016}$ & $0.711_{\pm0.014}$ & $0.711_{\pm0.012}$ & $0.716_{\pm0.020}$ & $0.362_{\pm0.189}$ & $0.368_{\pm0.191}$ \\
&  Few-shot & $0.716_{\pm0.020}$ & $0.725_{\pm0.017}$ & $0.722_{\pm0.017}$ & $0.687_{\pm0.023}$ & $0.370_{\pm0.178}$ & $0.386_{\pm0.189}$ \\
&  SFT($y$) & \boldmath$\mathbf{0.890_{\pm0.009}}$\unboldmath & $0.875_{\pm0.013}$ & \boldmath$\mathbf{0.875_{\pm0.013}}$\unboldmath & \boldmath$\mathbf{0.885_{\pm0.013}}$\unboldmath & $0.334_{\pm0.275}$ & $0.332_{\pm0.273}$ \\
&  SFT($y_{\mathrm{focus}}$) & $0.732_{\pm0.101}$ & $0.727_{\pm0.097}$ & $0.725_{\pm0.096}$ & $0.726_{\pm0.102}$ & $0.544_{\pm0.192}$ & $0.537_{\pm0.190}$ \\
&  FIT & $0.878_{\pm0.005}$ & \boldmath$\mathbf{0.875_{\pm0.004}}$\unboldmath & $0.874_{\pm0.006}$ & $0.878_{\pm0.007}$ & \boldmath$\mathbf{0.963_{\pm0.006}}$\unboldmath & \boldmath$\mathbf{0.968_{\pm0.003}}$\unboldmath \\
\midrule
\multirow{5}{*}{\rotatebox{90}{\textbf{Llama}}} & Zero-shot & $0.679_{\pm0.007}$ & $0.688_{\pm0.017}$ & $0.682_{\pm0.018}$ & $0.682_{\pm0.015}$ & $0.370_{\pm0.162}$ & $0.386_{\pm0.151}$ \\
&  Few-shot & $0.595_{\pm0.012}$ & $0.641_{\pm0.003}$ & $0.630_{\pm0.010}$ & $0.636_{\pm0.009}$ & $0.351_{\pm0.130}$ & $0.379_{\pm0.094}$ \\
&  SFT($y$) & \boldmath$\mathbf{0.880_{\pm0.005}}$\unboldmath & \boldmath$\mathbf{0.880_{\pm0.003}}$\unboldmath & \boldmath$\mathbf{0.879_{\pm0.004}}$\unboldmath & \boldmath$\mathbf{0.879_{\pm0.004}}$\unboldmath & $0.352_{\pm0.278}$ & $0.354_{\pm0.272}$ \\
&  SFT($y_{\mathrm{focus}}$) & $0.700_{\pm0.121}$ & $0.759_{\pm0.093}$ & $0.750_{\pm0.098}$ & $0.720_{\pm0.110}$ & $0.552_{\pm0.194}$ & $0.536_{\pm0.209}$ \\
&  FIT & $0.872_{\pm0.007}$ & $0.865_{\pm0.008}$ & $0.861_{\pm0.009}$ & $0.865_{\pm0.010}$ & \boldmath$\mathbf{0.929_{\pm0.005}}$\unboldmath & \boldmath$\mathbf{0.931_{\pm0.004}}$\unboldmath \\
\midrule
\multirow{5}{*}{\rotatebox{90}{\textbf{Vicuna}}} & Zero-shot & $0.595_{\pm0.012}$ & $0.598_{\pm0.015}$ & $0.575_{\pm0.019}$ & $0.573_{\pm0.017}$ & $0.334_{\pm0.108}$ & $0.341_{\pm0.093}$ \\
&  Few-shot & $0.269_{\pm0.032}$ & $0.417_{\pm0.027}$ & $0.251_{\pm0.019}$ & $0.301_{\pm0.013}$ & $0.195_{\pm0.080}$ & $0.133_{\pm0.060}$ \\
&  SFT($y$) & \boldmath$\mathbf{0.892_{\pm0.003}}$\unboldmath & \boldmath$\mathbf{0.890_{\pm0.002}}$\unboldmath & \boldmath$\mathbf{0.890_{\pm0.004}}$\unboldmath & \boldmath$\mathbf{0.891_{\pm0.006}}$\unboldmath & $0.337_{\pm0.288}$ & $0.342_{\pm0.283}$ \\
&  SFT($y_{\mathrm{focus}}$) & $0.579_{\pm0.081}$ & $0.575_{\pm0.064}$ & $0.573_{\pm0.055}$ & $0.585_{\pm0.063}$ & $0.502_{\pm0.092}$ & $0.496_{\pm0.086}$ \\
&  FIT & $0.803_{\pm0.004}$ & $0.807_{\pm0.006}$ & $0.794_{\pm0.008}$ & $0.781_{\pm0.012}$ & \boldmath$\mathbf{0.894_{\pm0.019}}$\unboldmath & \boldmath$\mathbf{0.934_{\pm0.010}}$\unboldmath \\
\bottomrule
\end{tabular}%
}
\end{table}
\newpage

\section{Additional BBQ Results}
In \Cref{fig:bbq-full-baseline-comparison} and \Cref{tab:bbq_trained_untrained} we include the full comparisons of FIT against all baselines described in \Cref{sec:full-baselines} on BBQ.
\begin{figure}[h!]
    \centering
    \includegraphics[width=0.63\linewidth]{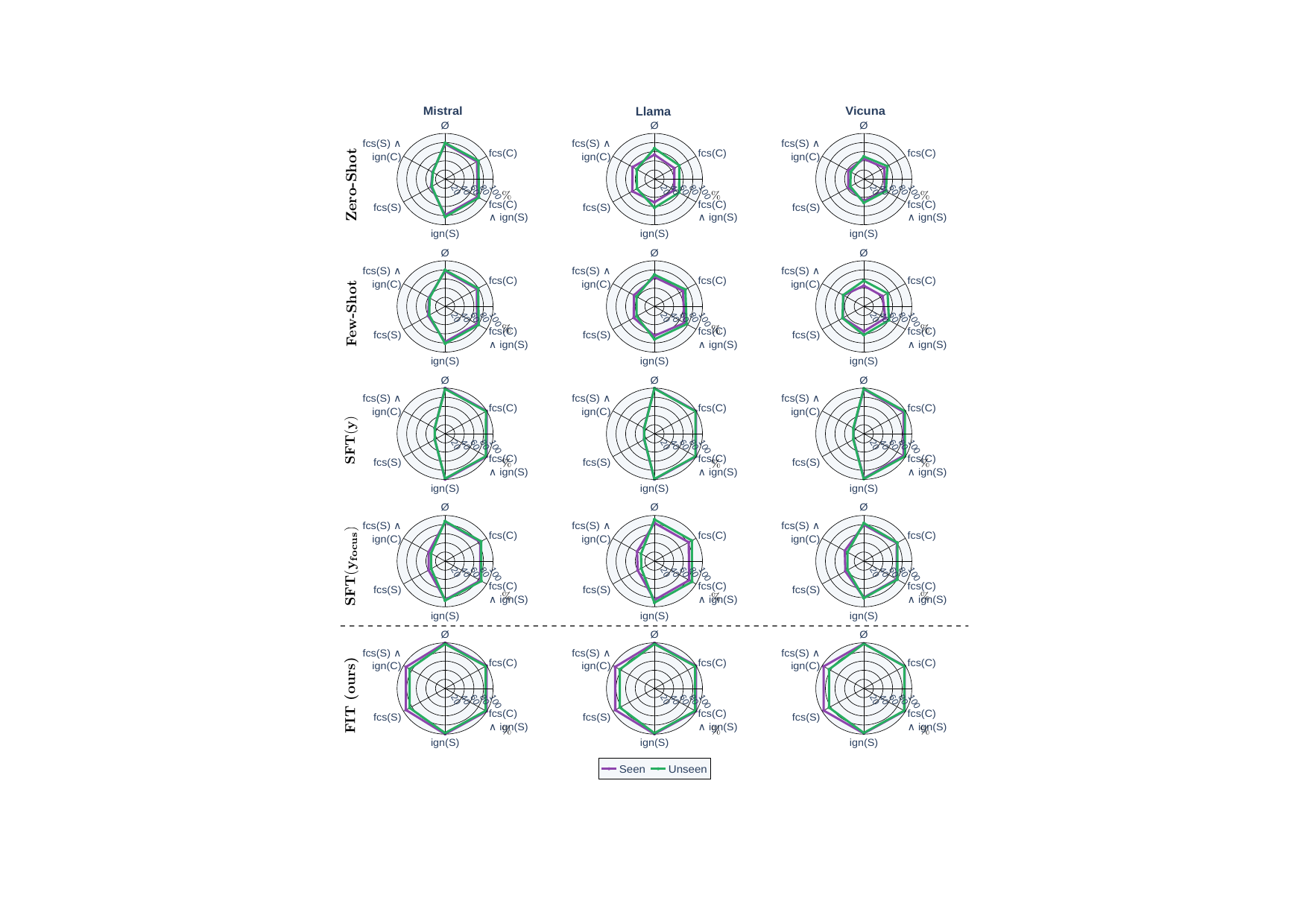}
    \caption{\textbf{Complete Baseline vs FIT Focus Accuracies on BBQ ($\uparrow$).} Figure giving focus accuracies ($\focusaccuracy$) of the additional baselines compared to the focus accuracy of FIT on the BBQ dataset. The maximum standard deviations of FIT, $\text{SFT}(y_\text{focus})$, $\text{SFT}(y)$ and few-shot methods across models are 4.07\%, 10.7\%, 2.22\%, and 0.600\% respectively. fcs = focus, ign = ignore.}
    \label{fig:bbq-full-baseline-comparison}
\end{figure}

\begin{table}[h!]
\footnotesize
\centering
\caption{\textbf{Complete Baseline vs FIT Focus Accuracies on BBQ ($\uparrow$).} We report across all baselines the mean seen/unseen focus accuracies, with boldface indicating the best mean for each focus instruction type and model independently. The maximum standard deviations of FIT, $\text{SFT}(y_\text{focus})$, $\text{SFT}(y)$ and few-shot methods across models are 4.07\%, 10.7\%, 2.22\%, and 0.600\% respectively. fcs = focus, ign = ignore.}
\label{tab:bbq_trained_untrained}
\resizebox{\textwidth}{!}{%
\begin{tabular}{l l c c c c c c}
\toprule
 & & $\varnothing$ & $\mathrm{focus}(C)$ & $\mathrm{focus}(C)\wedge \mathrm{ignore}(S)$ & $\mathrm{ignore}(S)$ & $\mathrm{focus}(S)$ & $\mathrm{focus}(S)\wedge \mathrm{ignore}(C)$ \\
\midrule
\multirow{5}{*}{\rotatebox{90}{\textbf{Mistral}}} & Zero-shot & 0.772/0.792 & 0.768/0.803 & 0.782/0.812 & 0.790/0.829 & 0.338/0.320 & 0.302/0.295 \\
&  Few-shot & 0.775/0.798 & 0.767/0.795 & 0.769/0.807 & 0.777/0.814 & 0.408/0.381 & 0.388/0.373 \\
&  SFT($y$) & \textbf{0.999}/\textbf{0.982} & \textbf{1.000}/\textbf{0.983} & \textbf{1.000}/\textbf{0.982} & \textbf{1.000}/\textbf{0.981} & 0.248/0.249 & 0.248/0.249 \\
&  SFT($y_{\mathrm{focus}}$) & 0.858/0.874 & 0.844/0.866 & 0.848/0.872 & 0.847/0.863 & 0.396/0.335 & 0.394/0.340 \\
&  FIT & 0.998/0.976 & 0.998/0.976 & 0.998/0.977 & 0.999/0.976 & \textbf{0.941}/\textbf{0.852} & \textbf{0.941}/\textbf{0.853} \\
\midrule
\multirow{5}{*}{\rotatebox{90}{\textbf{Llama}}} & Zero-shot & 0.534/0.673 & 0.478/0.597 & 0.478/0.593 & 0.508/0.627 & 0.533/0.421 & 0.527/0.428 \\
&  Few-shot & 0.643/0.696 & 0.686/0.743 & 0.724/0.774 & 0.642/0.720 & 0.496/0.429 & 0.492/0.432 \\
&  SFT($y$) & \textbf{0.998}/\textbf{0.991} & \textbf{0.999}/\textbf{0.985} & \textbf{1.000}/\textbf{0.988} & \textbf{0.999}/\textbf{0.989} & 0.248/0.244 & 0.248/0.246 \\
&  SFT($y_{\mathrm{focus}}$) & 0.836/0.912 & 0.825/0.901 & 0.836/0.903 & 0.852/0.910 & 0.403/0.317 & 0.418/0.323 \\
&  FIT & 0.993/0.974 & 0.997/0.977 & 0.998/0.979 & 0.996/0.979 & \textbf{0.943}/\textbf{0.832} & \textbf{0.946}/\textbf{0.835} \\
\midrule
\multirow{5}{*}{\rotatebox{90}{\textbf{Vicuna}}} & Zero-shot & 0.444/0.495 & 0.495/0.567 & 0.505/0.531 & 0.479/0.519 & 0.371/0.324 & 0.375/0.315 \\
&  Few-shot & 0.454/0.565 & 0.453/0.572 & 0.517/0.604 & 0.547/0.625 & 0.503/0.512 & 0.487/0.496 \\
&  SFT($y$) & 0.975/\textbf{0.991} & 0.959/\textbf{0.988} & 0.957/\textbf{0.990} & 0.974/\textbf{0.990} & 0.245/0.252 & 0.243/0.251 \\
&  SFT($y_{\mathrm{focus}}$) & 0.796/0.832 & 0.794/0.808 & 0.787/0.806 & 0.796/0.809 & 0.444/0.392 & 0.456/0.394 \\
&  FIT & \textbf{0.983}/0.979 & \textbf{0.982}/0.975 & \textbf{0.981}/0.972 & \textbf{0.984}/0.973 & \textbf{0.966}/\textbf{0.835} & \textbf{0.966}/\textbf{0.837} \\
\bottomrule
\end{tabular}%
}
\end{table}

\newpage

\section{BBQ-NLG} \label{sec: BBQ-NLG}
\subsection{BBQ-NLG Experimental Setup}
We follow the original BBQ dataset splits (including both training and held-out social-bias partitions) described in \cref{sec: BBQ results}.  For each example in the new BBQ-NLG dataset, we remove the predefined answer choices and require the model to generate fully verbalised responses, rather than selecting from a fixed set of three options as in the original BBQ dataset and outputting a categorical label.

To give the model additional capacity for this more challenging free-form generation task, we augment the usual LoRA targets (the key and value projection matrices) with adapters on the value-projection matrices as well.  All LoRA modules are trained with a learning rate of $10^{-5}$.  We train both the FIT and $\text{SFT}(y_{\text{focus}})$ models for 10 epochs each.  As a strong comparison, we evaluate a few-shot baseline that conditions the model on 4 in-context examples that share the same focus instruction as the test instance.

To enable both rapid and cost-effective evaluation of correctness in our open-ended generation task, we employ an LLM-based judge. Specifically, we use a pre-trained \llama model to compare each generated response against its associated ground-truth focus label and determine whether they are semantically equivalent.  Manual checks confirm that the \llama judge reliably assesses semantic equivalence in this setting where model generations and expected responses and generally short and concise.

\subsection{Results}

Figure \ref{fig:bbq_nlg_plots_full_baselines} reports the focus-accuracy of each method across models.  Although all approaches exhibit a slight degradation on unseen feature combinations relative to the classification-style BBQ results in \cref{sec: BBQ results}, FIT remains generally overall on par with the earlier numbers.  Critically, FIT consistently outperforms both the $\text{SFT}(y_{\text{focus}})$ and the strong 4-shot few-shot baseline that uses identical focus prompts at test time.  These findings provide clear evidence that the FIT method can be effectively extended to open-ended NLG settings without sacrificing its steering capabilities.

\begin{figure}[h!]
    \centering
    \includegraphics[width=0.5\linewidth]{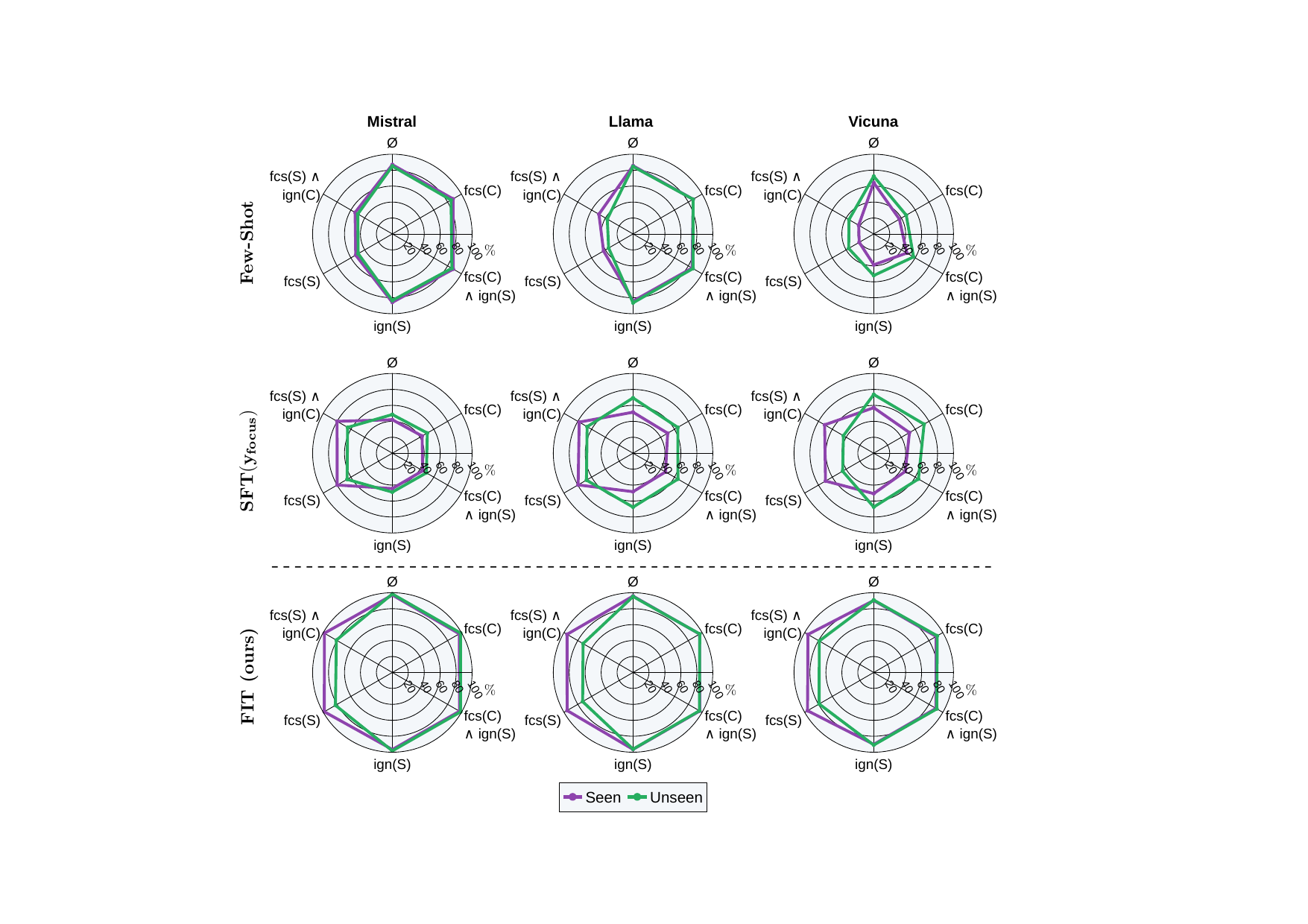}
    \caption{\textbf{BBQ-NLG Focus Accuracies ($\uparrow$).}  Mean focus accuracy ($\mathcal{A}_{\text{focus}}$) of baselines and FIT on the BBQ-NLG dataset. The maximum standard deviations of FIT, $\text{SFT}(y_\text{focus})$ and few-shot methods across models and $\focusinstructions$ are 2.45\%, 6.35\% and 0.377\% respectively. fcs = focus, ign = ignore.}
    \label{fig:bbq_nlg_plots_full_baselines}
\end{figure}

\newpage

\section{Comparison of FIT Against a Specific Debiasing Technique}\label{sec:debiasing comparison}
\oalg is a general framework designed to enable users to steer a model’s behaviour based on specified features. This approach provides enhanced control over model outputs during inference, adding a critical layer of explainability and controllability to model predictions.

While understanding and mitigating biases or spurious correlations is a valuable and natural application of FIT, it is not the sole objective. The broader goal of steerability includes addressing challenges in managing and aligning model behaviour across diverse contexts. For instance, maintaining controllability is crucial in addressing safety alignment fragility, which can emerge after fine-tuning \citep{bhattacharjee2024towards}. In such cases, the ability to adapt model responses to align with user specifications ensures safe and reliable deployment.

\textbf{Experiment.} To explore FIT's broader applicability, we compare its performance as a debiasing method against a well-known debiasing technique: the Product of Experts (PoE) method \citep{mahabadi2020end}. PoE involves training a bias model $f_B$, which is trained exclusively on bias features. This bias model mediates the training of the final model $f$ by combining their predictions through an elementwise product: $\sigma(f(x)) \odot \sigma(f_B(x_B))$, where $x \in \mathcal{D}$, for dataset $\mathcal{D}$, and $x_B$ represents the biased feature of $x$.

We adapted this approach to our setting by training a bias model on the stereotypical labels within the BBQ dataset. These labels correspond to group-stereotypical associations. For autoregressive models, we further modified the PoE method by extracting and normalising the logits of the first newly generated token position over the set of single tokens representing the answer options.

\textbf{Results.} The results of the debiasing experiment comparing FIT to the PoE method is shown in \cref{fig:edebiasing plots}. FIT performs equally as well as the PoE method as shown by comparing the default prompt accuracy ($\emptyset$) for the PoE models against the $\text{focus}(C)$ results for the FIT models; both metrics correspond to causal accuracy for these prompt types. This indicates that FIT performs just as well as the PoE dedicated debiasing technique.

However, the PoE method requires training two separate models and does not provide steerability at test time as shown by the low focus accuracy on $\text{focus}(S)$. Indeed the model defaults to the ground truth label across all prompt types and does not change behaviour in the presence of different focus specifications. This highlights the flexibility of FIT, which not only debiases effectively but also enables additional controllability during inference.

\begin{figure}[h!]
    \centering
    \includegraphics[width=0.65\linewidth]{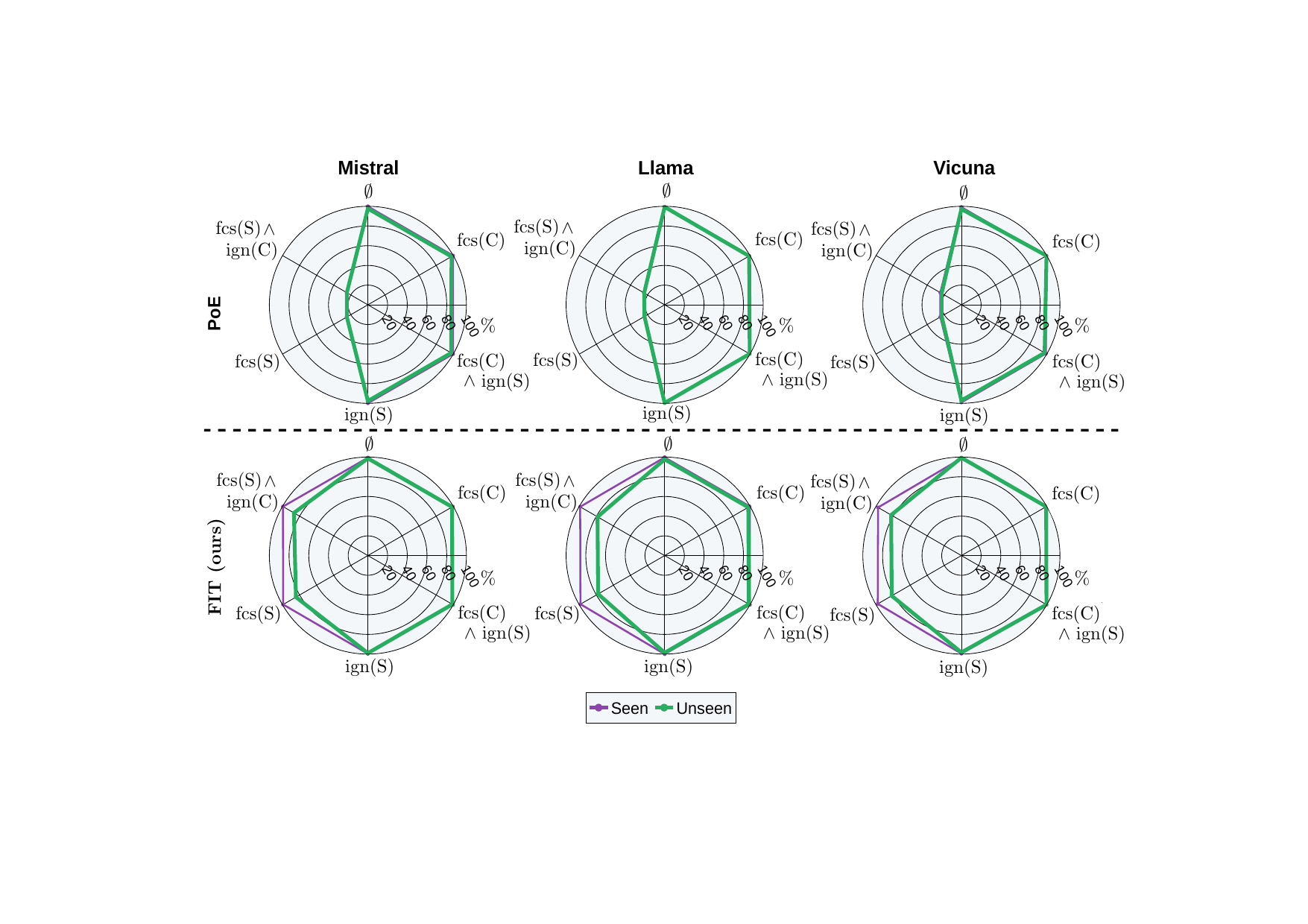}
    \caption{\textbf{Focus Accuracy  of FIT Against PoE Debiasing Technique ($\uparrow$).} Figure showing the focus accuracies ($\focusaccuracy$) of FIT (bottom row) and the dedicated debiasing technique, PoE (top row), on the BBQ dataset.}
    \label{fig:edebiasing plots}
\end{figure}

\newpage

\section{Spurious HANS Dataset (SHANS)}
\label{sec:SHANS dataset description}
We generate a binary NLI dataset, SHANS, with a known spurious feature. We do this by subsampling from the HANS dataset \cite{mccoy2019right}. This is an NLI data set with two labels: entailment (0) and contradiction (1). This is an adversarial dataset designed to assess different NLI models' reliance on spurious heuristics rather than on the underlying relationship between the premise and the hypothesis when making NLI predictions. Specifically, the author's consider three major categories of heuristics: lexical overlap heuristic (assuming that a premise entails from words within the hypothesis) , sub-sequence heuristic (assuming that the premise entails all any of its contiguous sub-sequences of words) and constituent heuristic (assuming that a premise entails a hypothesis that is any constituent within it's syntactic parse tree). We take each of these as separate spurious features within our SHANS dataset for which we induce spurious correlations, as for the SMNLI dataset, through subsampling. 

\subsection{Data-Generating Process (DGP).}
We consider a graphical model to describe the DGP of examples within the SHANS dataset. We use the following variables within our DGP:
\begin{itemize}
    \item $C$ - NLI relationship between a premise and hypothesis pair, the core feature within this task, sampled from the original HANS dataset.
    \item $S_{\text{lex.}}$ - spurious feature, here the presence of a hypothesis entirely made from words from the premise. This is a binary categorical variable (present/not present).
    \item $S_{\text{sub.}}$ - spurious feature, here the presence of a hypothesis that is a contiguous subsequence of the premise. This is a binary category feature (present/not present).
    \item $S_{\text{const.}}$ - spurious feature, here the presence of hypothesis that is a constituent/subtree of the premise. Here we have a binary variable (present/not present).
    \item $X$ - example from the HANS dataset.
    \item $Y$ - final label for example $X$.
\end{itemize}
\begin{figure}[h!]
    \centering
    \includegraphics[width=0.18\linewidth]{images/nli_dgp.pdf}
    \caption{\textbf{SHANS DGP}. Graphical model showing the DGP for subsampling examples from the SHANS dataset to introduce new spurious features $S_{\text{lex.}}$, $S_{\text{sub.}}$ and $S_{\text{const.}}$ which are encoded within the categorical spurious feature $S$ which represents one of these three heuristics.}
    \label{fig:dgp-shans}
\end{figure}

The graphical model described by the DGP for producing the S-HANS dataset is given in \Cref{fig:dgp-shans}. Once again, this graphical model can be represented functionally as
\begin{align} S & = f_S(U_S);\\ 
C &= f_C(S, U_C);\\  
X &= f_X(C, E,  U_X); \\ 
Y &= f_Y(C, U_Y), \end{align}
where here we define $S$ to be a categorical feature over the set of variables indicating the presence of each of the three heuristics introduced above which we denote, through overloaded  notation, by $\mathcal{S} = \{s_{\text{lex.}}, s_{\text{sub.}}, s_{\text{const.}} \}$. More specifically, given the original HANS dataset $\mathcal{D}$ that we are sub-sampling from, the functions that we use within the DGP for the SHANS dataset are given by:
\begin{align}
S &\sim \mathcal{U}(\mathcal{S}), \\ 
U_C & \sim \text{Ber} (\rho_\text{spurious}); \\
C &\sim   U_C y_{S}  + (1- U_C)(1-y_S)  ; \\ 
X  &\ \sim p_{\mathcal{D}}(\cdot | C, S)\\
Y &= C.
\end{align}
Here, $\mathcal{U}(\mathcal{S})$ is a uniform categorical distribution over the set of spurious features $\mathcal{S}$ which effectively selects the presence of exactly one of the three spurious feature heuristics. We define $y_S$ to be the NLI label that a particular value of $S$ is spuriously correlated with by design. Moreover, $p_{\mathcal{D}}( \cdot | C, S)$ is the conditional distribution over the dataset examples (premise-hypothesis pairs) that have NLI relationship $C$ and the presence of spurious heuristic $S$.

We restrict the genres to $S \in \{s_{\text{lex.}}, s_{\text{const.}} \}$ for heuristics seen during training. We then create additional test sets with spurious feature set restricted to $S \in \{s_{\text{sub.}} \}$, which serves as an unseen spurious feature set to test generalisation. 

We consider the presence of each feature to be separate binary spurious features. The specific spurious correlations between heuristics and labels $Y$ are chosen to be: $y_{S_{\text{lex.}} =1} = 0 $; $y_{S_{\text{sub.}}=1} = 0 $; $y_{S_{\text{const.}}=1} = 1 $. In this way we generate spurious correlations within the dataset through sub-sampling to induce spurious correlations between the heuristics $S$ and label $Y$.

\subsection{\textbf{Transferred Results from SMNLI.}}
As we have effectively used the same DGP as for the SMNLI dataset described in \cref{sec:S-MNLI dataset description}, with the only change being the label set, all of the results that we have proven for SMNLI, translate to the SHANS dataset. In particular, we have that $\spuriouscorrelation$ aligns with the notation in \Cref{def:spurious-correlation}, and that we have $Y \indep S$ and $Y_S \indep C$ under the assumptions of $\spuriouscorrelation = 1/2$ within the training set.
\newpage

\newpage
\section{FIT on SHANS} \label{sec:SHANS results.}
Here, we give the results performing FIT on the SHANS dataset.

\subsection{Spurious HANS (SHANS) Dataset.}
We generate  binary NLI dataset sub-sampled from HANS \citep{mccoy2019right}, a dataset designed to challenge NLI models by exposing common heuristics they rely on, such as lexical overlap (whether the hypothesis shares many words with the premise), sub-sequence (whether the hypothesis is a contiguous sub-sequence of the premise), and constituent (whether the hypothesis is a grammatical sub-structure of the premise). The presence of these heuristics are spuriously correlated with labels through sub-sampling of the presence of each of the heuristics from the original dataset. The degree of co-occurrence is governed by $\spuriouscorrelation$, which varies according to the test sets described in \Cref{sec:methodology}. We ensure that $\spuriouscorrelation$ is the same for all feature values within each dataset split. In particular, we set $\spuriouscorrelation$ to be $0.5, 0.5, 0.9, 0.25$ and $0.9$ (in this case with flipped spurious correlations) on $\mathcal{D}_{\text{train}}$, $\mathcal{D}_{\text{iid}}$, $\mathcal{D}_{\text{high}}$, $\mathcal{D}_{\text{low}}$ and $\mathcal{D}_{\text{flipped}}$ respectively. We keep the subsequence heuristic as a held-out unseen feature during training for testing FIT generalisation. 

\subsection{Results.}
\Cref{fig:spurious hans split accuracy results} shows the focus accuracy results of performing FIT on the SHANS dataset for the \llama model. As expected, on the seen features, FIT shows high focus accuracy across all focus instructions. However, for unseen features, we observe lower focus accuracy. This could be attributed to the overlapping nature of the heuristics in SHANS, which are often graded versions of each other with different levels of specificity. For instance, the sub-sequence heuristic can overlap with both lexical overlap and constituent heuristics (e.g., the example with Premise: ``Before the actor slept, the senator ran" and Hypothesis: ``The actor slept." satisfies all three heuristics). This overlap likely confuses the model during generalisation, as it struggles to distinguish between heuristics seen during training and those that were not. These results suggest a potential limitation of FIT when dealing with features that are not sufficiently distinct or have significant overlap.

\begin{figure}[ht!]
    \centering
    \includegraphics[width=0.6\linewidth]{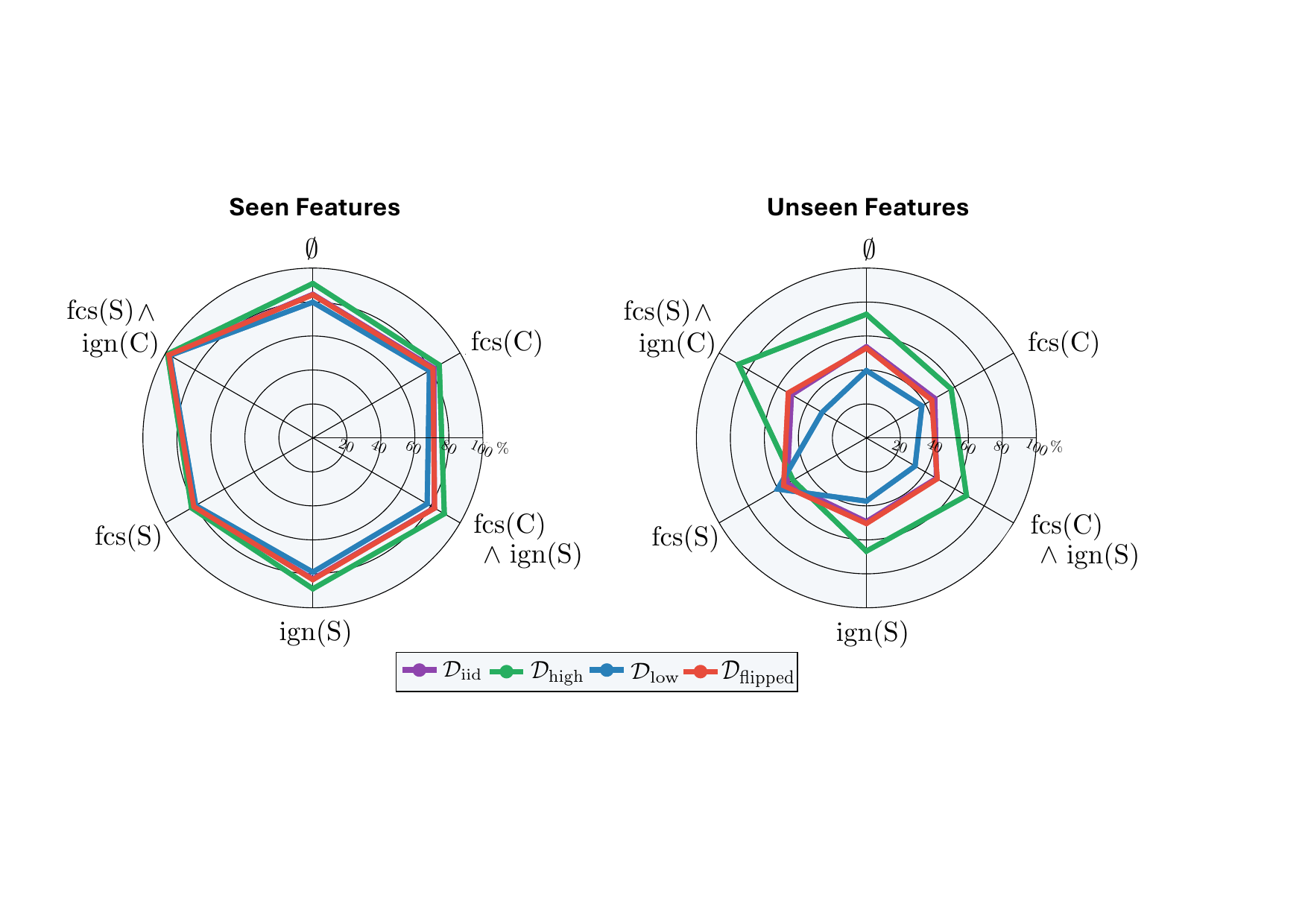}
    \caption{\textbf{SHANS Focus Accuracies ($\uparrow$).} Focus accuracy ($\mathcal{A}_{\text{focus}}$) of \llama after FIT on the SHANS test datasets containing either the seen or unseen spurious features during training. Here, $C$ refers to the core feature (logical relationship between premise and hypothesis) and $S$ the spurious feature (heuristic used).}
    \label{fig:spurious hans split accuracy results}
\end{figure}

\end{document}
